\documentclass[twoside]{article}

\usepackage[preprint]{aistats2026}
\usepackage{hyperref}
\usepackage{booktabs}
\usepackage{enumerate}
\usepackage[round]{natbib}

\usepackage{bbm}
\usepackage{amsfonts}
\usepackage{amsthm}
\usepackage{color}
\usepackage{graphicx}
\usepackage{algorithm}
\usepackage{algpseudocode}

\newtheorem{theorem}{Theorem}[section]
\newtheorem{corollary}{Corollary}
\newtheorem{lemma}[theorem]{Lemma}
\newtheorem{proposition}{Proposition}
\newtheorem{remark}{Remark}
\newtheorem{definition}{Definition}[section]

\def\EE {{\mathbb E}}  
\def\RR {{\mathbb R}}

\def\rank {\mathrm{rank}}
\def\Pr {\mathrm{Pr}}

\def\bone {\mathbbm{1}}

\def\bk {\boldsymbol{k}}
\def\by {\boldsymbol{y}}
\def\bx {\boldsymbol{x}}
\def\bc {\boldsymbol{c}}
\def\bb {\boldsymbol{b}}

\def\ba {\boldsymbol{a}}
\def\bu {\boldsymbol{u}}
\def\be {\boldsymbol{e}}
\def\bv {\boldsymbol{v}}

\begin{document}

%

%

\twocolumn[

\aistatstitle{Differential Privacy of Gaussian Process Posterior Sampling}

\aistatsauthor{ Tomasz Maciazek}

\aistatsaddress{School of Mathematics, University of Bristol} ]

\begin{abstract}
We study the privacy of releasing functional posterior sample paths from a Gaussian process (GP) when the entire training set including covariates and responses is private. Unlike standard differential-privacy (DP) mechanisms that inject external noise, posterior sampling is intrinsically random and we show that this randomness provides useful privacy guarantees. We derive R\'enyi-DP guarantees separating privacy leakage through the posterior mean from a distinct channel induced by the data-dependent posterior covariance. The analysis identifies effective ridge regularisation and covariance scale as the principal privacy-controlling quantities and yields sharper guarantees in several regimes of practical interest as well as extensions to repeated and adaptive releases. Membership inference attacks confirm the predicted dependence on regularisation, covariance scale and the number of released paths. Utility experiments on downstream posterior sampling tasks identify noisy observation regimes where privacy-compatible regularisation preserves useful samples. Finally we identify large-data asymptotic regime in which the privacy parameter and posterior mean-square risk vanish simultaneously, yielding privacy for free. Together, these results provide a comprehensive characterisation of privacy and utility of GP posterior sampling.
\end{abstract}

\section{INTRODUCTION}\label{sec:intro}
Protecting individual training examples is a central concern in modern machine learning. Released models can leak membership, memorised training fragments, or partial reconstructions of sensitive examples and attributes \citep{ShokriStronatiSongShmatikov2017,mia_first_principles,CarliniLiuErlingssonEtAl2021,BalleCherubinHayes2022,HaimVardiYehudaiEtAl2022,property_inference}. Standard DP methods mitigate this by injecting randomness during training or release \citep{dwork07,rdp,hall13,AbadiChuGoodfellowEtAl2016}. However, recent work shows that randomness and regularisation already present in learning algorithms can also shape privacy leakage: regularisation can reduce membership-inference risk \citep{pmlr-v206-tan23b}, initialisation affects privacy--utility tradeoffs in overparameterised networks \citep{ye2023initialization}, and reconstruction attack success depends strongly on initialisation and regularisation \citep{HaimVardiYehudaiEtAl2022,BalleCherubinHayes2022,BuzagloHaimYehudaiEtAl2023}. This shows that intrinsic model properties may inhibit training-data inference even without explicit external DP noise.

We study this effect in Gaussian-process (GP) posterior sampling where randomness is intrinsic to the model. Given a training set $D$ the mechanism releases a function $f_D\sim\Pi_D=\mathcal{GP}(\mu_D,\sigma^2 k_D)$ from the standard GP posterior \citep{RasmussenWilliams2006}. Both $\mu_D$ and $k_D$ are data-dependent, so privacy leakage arises through both the posterior mean and covariance. The covariance leakage channel is largely underexplored in the private GP-literature -- it disappears under the common public-covariate assumption, but here it is central to our analysis. We do not treat $f_D$ as a private surrogate for $\mu_D$, but rather as the released Bayesian object itself. This provides a canonical setting for studying how posterior randomness and regularisation limit training-data inference. Our privacy guarantees extend to multiple draws by composition and to any downstream output computed from them by post-processing such as a Thompson-sampling action or a released pollution map in geospatial analysis. Posterior function draws are standard computational objects in Bayesian optimisation and GP-Thompson sampling \citep{pmlr-v70-chowdhury17a,HernandezLobatoRequeimaPyzer-KnappAspuru-Guzik2017} including parallelised variants \citep{pmlr-v84-kandasamy18a}. Posterior draws are also used in Bayesian workflows beyond optimisation where one repeatedly simulates from the posterior distribution and evaluates discrepancies or other posterior predictive checks \citep{GelmanMengStern1996,GabrySimpsonVehtariEtAl2019}. Posterior sample draws of latent Gaussian field models used in spatial statistics and environmental modelling for estimating excursion sets, exceedance probabilities or contour uncertainty given some noisy latent field observations are likewise basic inferential objects \citep{BolinLindgren,FrenchHoeting2016,DiggleRibeiro2007}. More broadly, related privacy considerations may arise in distributed Bayesian computation where posterior draws are communicated across parties \citep{bayes_big_data,parallel_mcmc}. Our work introduces a new route to privacy accounting for this broad class of workflows for which there was previously no direct DP guarantee for the released posterior sample paths.

\paragraph{Threat Model and Main Contributions}
The adversary observes $L\ge 1$ released posterior sample paths drawn from the GP posterior conditioned on a private training set $D$. The release is function-valued i.e., can be evaluated at arbitrary inputs. The posterior mean
$\mu_D$ and covariance $k_D$ are not directly released. We treat the kernel and hyperparameters as fixed and public and chosen independently of the private dataset.
\begin{enumerate}
\item We derive the first DP bounds for functional GP posterior sample-path release identifying two separate privacy leakage channels coming from posterior mean and data-dependent posterior covariance (Theorem~\ref{thm:main_rdp}). To our knowledge, the covariance-channel has not previously been fully characterised in private GP inference. 
\item We identify the effective ridge regularisation and covariance scale ($r$, $\sigma$) as the key privacy-controlling quantities and derive sharper DP bounds specialized to certain natural regimes, e.g. under RKHS-response or data-separation assumption. We also quantify privacy composition across multiple released posterior sample paths including specialisation to GP-Thompson sampling. 
\item We test the theory experimentally. Membership-inference success follows the predicted dependence on $r$ and $\sigma$ while utility experiments show that in noisy regimes privacy-compatible regularisation provides useful posterior samples with modest utility loss. We also prove that when $|D|\to\infty$ this utility tradeoff vanishes under mild regularity assumptions.
\end{enumerate}
We state our bounds in terms of R\'{e}nyi-DP guarantees for a single posterior draw which can be conveniently composed to bounds for multiple posterior draws and then translated to the approximate-DP bounds i.e., $(\varepsilon,\delta)$-DP, by the standard formulae. 

For posterior sample-path release, utility is inherently downstream-task dependent: the released function is useful only through post-processing. This is especially relevant in sensing applications where the measurement locations are themselves sensitive and the goal is to release a spatial-based decision  without revealing sensor locations. We study this through excursion-set estimation in Section~\ref{sec:excursion}. The experiments show a clear privacy-utility transition: in noisy-data regimes the utility-optimal posterior is already substantially regularised, placing the mechanism closer to the strong-privacy region of our bounds. In these regimes meaningful intrinsic privacy comes with modest downstream utility loss, whereas low-noise regimes expose a sharper cost of private calibration. The tradeoff effect is less pronounced for larger training sets and typically vanishes as $|D|\to\infty$ enabling privacy ``for free''.

\paragraph{Prior Work}
Posterior sampling has been linked to privacy in broader Bayesian settings. \citet{pmlr-v37-wangg15} show that one posterior sample can be differentially private ``for free'' under bounded log-likelihood assumptions, while \citet{JMLR:v18:15-257} develop a general Bayesian posterior-sampling framework for DP. Neither result directly applies to standard GP regression: the bounded log-likelihood condition of \citet{pmlr-v37-wangg15} fails because the latent field is unbounded, while \citet{JMLR:v18:15-257} require the log-likelihood to be uniformly Lipschitz in the data over all parameter values, which here are latent functions. \citet{GeumlekEtAl2017RDPPosteriorSampling} treat finite-dimensional conjugate exponential families and bounded-link GLMs, neither of which covers function-valued GP regression.

Prior GP-specific work does not analyse our release mechanism. \citet{pmlr-v84-smith18a,SmithEtAl2021DPSparseGP} only privatise the posterior mean by adding calibrated noise and assume public $X$ and private $y$. \citet{honkela2021gaussianprocessesdifferentialprivacy} protect both $X$ and $y$, but by explicitly perturbing a sparse variational GP approximation. Their guarantees concern a different problem of releasing a modified and privatised approximate posterior, not the intrinsic privacy of sampling from the exact unmodified posterior. \citet{ye2024leaveoneout} study membership leakage through KL-divergence of posterior marginals at a single test point which does not establish full-function DP guarantees. To our knowledge, we provide the first intrinsic DP guarantees for sampling an entire function from an exact GP posterior, quantifying both mean and covariance leakage.


\section{PROBLEM SETUP}\label{sec:setup}
We consider the setup where one aims to model an unknown function $f_*$ using a mean-zero GP prior with the covariance function $\sigma^2k:\ \Omega_X\times \Omega_X\to \RR$ where $\Omega_X\subset \mathbb{R}^d$ is the covariate domain and $\sigma>0$. Given noisy observations $\by=(y_1,\dots,y_n)^T$ at covariates $X=(\bx_1,\dots,\bx_n)$ we write $D=(X,\by)$. When conditioned on $D$ the posterior GP distribution is $\Pi_D=\mathcal{GP}(\mu_D,\sigma^2 k_D)$ where
\begin{align*}
\begin{split}
\mu_D\left(\bx\right) & =\bk_X(\bx)^T\left(K_{XX}^{(r)}\right)^{-1}\,\by, \quad r^2:=\lambda^2/\sigma^2
\\
k_D(\bx,\bx') & =k\left(\bx,\bx'\right)-\bk_X(\bx)^T\left(K_{XX}^{(r)}\right)^{-1}\bk_X(\bx')
\end{split}
\end{align*}
with $K_{XX}^{(r)}:=K(X,X)+r^2I$, $\bk_X(\bx):=K(X,\bx)$. Here, $\lambda^2$ is the noise variance assumed in the GP model, $\sigma^2$ is the kernel scale and $r^2$ is the \emph{effective ridge} sometimes referred to as the noise-to-signal variance ratio. Importantly, changing $\sigma$ while keeping $r$ fixed does not change the predictive mean $\mu_D$ -- it only rescales the posterior covariance. We separate out the kernel scale $\sigma^2$ since it is the parameter that directly controls the posterior randomness. When $k$ has bounded diagonal we rescale $k$ so that $\sup_{\bx\in\Omega_X}k(\bx,\bx)=1$ in which case $\sigma^2$ has the standard interpretation as the prior variance scale.

We denote by $D\sim D'$ the replace-one neighbouring relation between datasets. Consider the random mechanism that assigns to $D$ a single sample path from the $GP$-posterior 
\begin{equation}\label{def:dp_mechanism}
\mathcal{M}:\ D\mapsto f_D\sim \mathcal{GP}\left(\mu_D,\sigma^2 k_D\right).
\end{equation}
In this paper we determine approximate-DP guarantees of this mechanism. Since \(\mathcal{M}(D)\) is function-valued, we use the formalism of differential privacy for function-valued mechanisms developed in \cite{hall13}. More precisely, we view $f_D$ as a random element of the measurable space $(\mathcal F,\mathcal A)
=
\left(\mathbb R^{\Omega_X},\mathcal C\right)
$,
where $\mathbb R^{\Omega_X}$ is the space of all real-valued functions on $\Omega_X$, and $\mathcal C$ is the canonical cylinder $\sigma$-algebra which makes all evaluation maps $f\mapsto f(\bx)$ measurable. The mechanism $\mathcal{M}$ is $(\varepsilon,\delta)$-DP if for all $D\sim D'$ and all $S\in\mathcal C$
\begin{equation}\label{def:fun_dp}
\Pr\left[f_D\in S\right]\leq e^\varepsilon\,\Pr\left[f_{D'}\in S\right]+\delta.
\end{equation}
Crucially, it is enough to verify inequality \eqref{def:fun_dp} on finite evaluations, namely that 
\[
\Pr\left[f_D(X_T)\in S\right]\leq e^\varepsilon\,\Pr\left[f_{D'}(X_T)\in S\right]+\delta
\]
 for every finite test set $X_T$ and every $S$ being a measurable subset of $\RR^{|X_T|}$ \citep[see][Proposition 6]{hall13}. To prove the $(\epsilon,\delta)$-DP guarantee for finite sample-path evaluations we turn to R\'{e}nyi-DP (RDP) machinery. The mechanism $D\mapsto f_D(X_T)$ is $(\alpha,\varepsilon_\alpha)$-RDP \citep{rdp} iff for all $D\sim D'$ we have 
 \[
 D_\alpha\left(\Pi_D(X_T)\|\Pi_{D'}(X_T)\right)\leq \varepsilon_\alpha
 \] 
 where $\Pi_D(X_T)$ is the GP posterior distribution of $f_D(X_T)$ and $D_\alpha$ is the R\'{e}nyi divergence of order $\alpha>1$.  Having proved $(\alpha,\varepsilon_\alpha)$-RDP uniformly over all finite test sets $X_T$, we apply the RDP-to-DP conversion \citep[see][Theorem 20]{pmlr-v108-balle20a} for any $0<\delta<1$.
 
The RDP guarantees are also convenient when considering the mechanism that
releases $L$ independently sampled posterior draws $\mathcal{M}_L:\ D\mapsto \left(f_D^{(1)},\dots,f_D^{(L)}\right)$. If a single posterior draw satisfies $(\alpha,\varepsilon_\alpha)$-RDP uniformly over finite test sets, then by additive composition of RDP
\citep[Proposition~1]{rdp} $\mathcal{M}_L$ satisfies
\((\alpha,L\varepsilon_\alpha)\)-RDP. Consequently, $\mathcal{M}_L$ is 
\begin{equation}\label{eq:rdp_composition}
\left(
L\varepsilon_\alpha+\log\frac{\alpha-1}{\alpha}
 -\frac{\log\delta+\log\alpha}{\alpha-1},
\delta
\right)\text{-DP}.
\end{equation}
by Theorem 20 in \citep{pmlr-v108-balle20a}. This accounting applies \textit{mutatis mutandis} to adaptive releases where
the round-$l$ training set depends on previous draws as in Thompson sampling, see Appendix \ref{app:public_covariates}. The final approximate-DP guarantee is obtained by optimising the converted bound over the admissible orders $\alpha>1$ for which the bound is finite. This optimisation can lead to sub-linear growth of the final approximate-DP bound in $L$, see Section \ref{sec:mia}. The full function-valued release guarantee also permits lazy evaluation where the mechanism can answer $f_D(X_1)$ and later sample $f_D(X_2)\mid f_D(X_1)$ for an adaptively chosen $X_2$. The output is the same as evaluation of one fixed posterior draw, so no extra privacy budget is consumed. 

\section{R\'{E}NYI-DP BOUNDS FOR GP POSTERIOR SAMPLING}\label{sec:bounds}
As in standard additive-noise DP mechanisms the privacy guarantees depend on a problem-specific sensitivity of the non-private output which must be bounded for the release rule at hand \citep[see e.g.][Proposition~3]{hall13}. However, for the GP
posterior-sampling mechanism, the released object is not a deterministic posterior mean plus data-independent external noise, but a draw from the full
data-dependent posterior distribution. As we show, the relevant quantities are then the posterior variance scale and the common-core posterior mean sensitivity defined below.
\begin{definition}
\label{def:Vn_Hn}
Let $D\sim D'$ and let $D_-=D\cap D'$ denote the common core and let
$X_-$ be the corresponding covariate set. Let $k_{D_-}$ be the posterior covariance kernel obtained after conditioning on $D_-$, as defined in Section \ref{sec:setup}. We define the posterior variance scale by
\[
V_n(r):=
\sup_{|X_-|=n-1}\,\sup_{x\in\Omega_X} k_{D_-}(x,x),
\]
and the common-core posterior mean sensitivity by
\[
\Delta_n(r):=
\sup_{D\sim D'} \|\mu_D-\mu_{D'}\|_{\mathcal H_-},
\]
where $\mathcal H_{-}$ denotes the reproducing kernel Hilbert space (RKHS) associated with the kernel $k_{D_-}$.
\end{definition}
For the RDP bounds presented in Theorem \ref{thm:main_rdp} below it is crucial to upper-bound $V_n(r)$ and $\Delta_n(r)$ as tightly as possible. 

\paragraph{Bounding $V_n(r)$.} In many cases $V_n(r)$ can be conveniently upper-bounded or even computed exactly in closed form. When $k(\bx,\bx)\leq 1$ for all $\bx$, then $V_n(r)\leq 1$. If additionally \(k(\bx,\bx')\ge0\) for all \(\bx,\bx'\) then
\[
V_n(r)\leq  \bar V_n(r):=1-\kappa^2\frac{n-1}{n-1+r^2},
\]
where $\kappa=\inf_{\bx,\bx'\in\Omega_X}k(\bx,\bx')$. If we also have $k(\bx,\bx)=1$ for all $\bx$, then $V_n(r)=  \bar V_n(r)$, see Proposition \ref{prop:general_positive_kernel_exact_Vn} in Appendix \ref{app:Vn_Deltan_bounds}. This applies, for example, to standard normalised Mat\'{e}rn and RBF kernels.

\paragraph{Bounding $\Delta_n(r)$.} The problem of finding bounds for $\Delta_n(r)$ is more subtle. Let us consider kernels with bounded diagonal and responses bounded in absolute value by a constant $M_Y>0$. Then, we have the generic bound (see Lemma \ref{lemma:boundedResp_Delta_bound})
\[
\Delta_n(r)\leq  \frac{M_Y}{r}\left(1+\frac{\sqrt{n-1}}{2r}\right)=\mathcal O(\sqrt{n}).
\]
Without additional information about the kernel $k$ one cannot improve the $\mathcal O(\sqrt{n})$-growth of $\Delta_n(r)$. Indeed, in Lemma \ref{lemma:Delta_sqrt_growth_example} we design a specific kernel for which $\Delta_n$ grows exactly at the $\sqrt{n}$-rate. On the other hand, there exist typical situations where $\Delta_n(r)$ is upper-bounded by a constant or even decays with $n$. For example, in Lemma \ref{lemma:Delta_constant_kernel} we show that for the constant kernel $k(\bx,\bx')\equiv1$ one has $\Delta_n(r)=\mathcal O(1/\sqrt{n})$. 

More generally, if the kernel $k$ is strictly positive-valued and has constant diagonal, then the existence of two distinct admissible datapoints with responses of opposite signs rules out any decay to zero of $\Delta_n(r)$ with $n$. Indeed, Lemma~\ref{lemma:delta_lower_constant} shows that under these broad conditions $\Delta_n(r)$ is bounded away from zero. Thus, the best one can hope for in this setting is $\Delta_n(r)=\mathcal O(1)$ -- Appendix \ref{app:Vn_Deltan_bounds} shows this is available under concrete and practically interpretable regularity conditions. A canonical response-side condition  is when the responses are given by a fixed RKHS function, namely $y_i = f_*(\bx_i)$ with $f_*\in\mathcal H_k$, see Lemma~\ref{lemma:Delta_rkhs_response_bound}. Then, $\Delta_n(r)\leq \|f_*\|_{\mathcal H_k}\, V_n(r)/(r^2+V_n(r))$. Complementarily, Appendix~\ref{app:Delta_kernel_sum_bounds} shows that imposing a pairwise separation condition $\|\bx_i-\bx_j\|\ge q$ on the training data also yields $\mathcal O(1)$-sensitivity for standard decaying kernels. Further examples are provided by the 1D exponential and purely diagonal kernels where Lemmas~\ref{lemma:Delta_exp_1D} and~\ref{lemma:Delta_purely_diagonal_kernel} give $\Delta_n(r)\le 2M_Y/r$ and $\Delta_n(r)\le 2\sqrt{2}M_Y/r$ respectively.

The above examples illustrate how the abstract sensitivity quantities can be controlled in concrete regimes. They are not additional assumptions of Theorem~\ref{thm:main_rdp} which applies completely generally.
\paragraph{Main RDP Bound.} We are ready to state the RDP bound which is uniform over all finite test sets $X_T$.
\begin{theorem}
\label{thm:main_rdp}
Let $D\sim D'$ be neighbouring datasets of size $n$ and let $\Pi_D(X_T)$ and $\Pi_{D'}(X_T)$ denote the posterior distributions of the random field on a test set $X_T$. Then for every finite $X_T$ and every $1<\alpha<1+r^2/V_n(r)$ one has
\begin{align}
\begin{split}\label{eq:rdp_bound}
& D_\alpha\left(\Pi_D(X_T)\middle\|\Pi_{D'}(X_T)\right)
\le
-\frac{\log\left(
1-\frac{\alpha(\alpha-1)\tau_n(r)^2}{1+\tau_n(r)}
\right)}{2(\alpha-1)}
\\
& + \frac{\alpha}{2}
\frac{\tau_n(r)+1}
{1-(\alpha-1)\tau_n(r)}\left(\frac{\Delta_n(r)}{\sigma}\right)^2,\ \tau_n(r):=\frac{V_n(r)}{r^2}.
\end{split}
\end{align}
\end{theorem}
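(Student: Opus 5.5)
The plan is to reduce everything to a rank-two perturbation of the common-core posterior $\Pi_{D_-}$, where $D_-=D\cap D'$. Write $D=D_-\cup\{(\bx,y)\}$ and $D'=D_-\cup\{(\bx',y')\}$. By sequential Gaussian conditioning (Schur complement), adding a single observation with effective noise $r^2$ to $D_-$ gives, on any finite $X_T$,
\[
C_D=C_--\frac{\bu\bu^T}{v+r^2},\qquad \bu=k_{D_-}(X_T,\bx),\quad v=k_{D_-}(\bx,\bx).
\]
Here $C_-=k_{D_-}(X_T,X_T)$, and all covariances carry the common factor $\sigma^2$. The same formula holds for $C_{D'}$ with $\bu',v'$. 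I would first treat the case $C_-\succ0$ and recover singular cases by a limiting or restriction-to-support argument. Whitening by $C_-^{-1/2}$ then gives
\[
\tilde C_D=I-t\,\ba\ba^T,\qquad \tilde C_{D'}=I-t'\,\bb\bb^T,
\]
with $\ba,\bb$ unit vectors. By the reproducing property, $\bu^TC_-^{-1}\bu$ is the squared $\mathcal H_-$-norm of the projection of $k_{D_-}(\cdot,\bx)$ onto $\mathrm{span}\{k_{D_-}(\cdot,\bx_j):\bx_j\in X_T\}$, so it is at most $v\le V_n(r)$. Hence $t,t'\le v/(v+r^2)\le \tau/(1+\tau)$ with $\tau=\tau_n(r)$.

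Next I apply the closed-form R\'enyi divergence between Gaussians. The two terms of \eqref{eq:rdp_bound} come from the two pieces of this formula, with $\Sigma_\alpha=\alpha C_{D'}+(1-\alpha)C_D$:
\[
D_\alpha=\frac{\alpha}{2}\,\delta^T\Sigma_\alpha^{-1}\delta-\frac{1}{2(\alpha-1)}\log\frac{\det\Sigma_\alpha}{(\det C_D)^{1-\alpha}(\det C_{D'})^{\alpha}}.
\]
Here $\delta=\mu_D(X_T)-\mu_{D'}(X_T)$.

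\emph{Mean term.} In whitened coordinates,
\[
\tilde\Sigma_\alpha=I-\alpha t'\bb\bb^T+(\alpha-1)t\,\ba\ba^T\succeq(1-\alpha t')I.
\]
The condition $\alpha<1+r^2/V_n$ is exactly what makes $1-\alpha\tau/(1+\tau)>0$, so $\Sigma_\alpha\succ0$ and
\[
\Sigma_\alpha^{-1}\preceq\frac{1+\tau}{1-(\alpha-1)\tau}\,(\sigma^2C_-)^{-1}.
\]
Finally, $\delta^TC_-^{-1}\delta\le\|\mu_D-\mu_{D'}\|_{\mathcal H_-}^2\le\Delta_n(r)^2$, since the finite-interpolation norm is dominated by the RKHS norm. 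Together these give the second term of \eqref{eq:rdp_bound}.

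\emph{Covariance term.} The determinants are explicit: $\det\tilde C_D=1-t$ and $\det\tilde C_{D'}=1-t'$. The determinant of the rank-two perturbation $\tilde\Sigma_\alpha$ follows from the $2\times2$ matrix-determinant lemma:
\[
\det\tilde\Sigma_\alpha=(1-\alpha t')\bigl(1+(\alpha-1)t\bigr)+\alpha(\alpha-1)tt'c^2,\qquad c=\ba^T\bb.
\]
The divergence therefore depends only on $(t,t',c^2)$. It is maximised at $c=0$, since $\det\tilde\Sigma_\alpha$ increases in $c^2$ and enters with a negative sign. The remaining task is to show that
\[
g(t,t')=\frac{(1-\alpha t')(1+(\alpha-1)t)}{(1-t)^{1-\alpha}(1-t')^{\alpha}}\ \ge\ 1-\frac{\alpha(\alpha-1)\tau^2}{1+\tau}
\]
on the box $t,t'\in[0,\tau/(1+\tau)]$.

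I expect this two-variable minimisation to be the main obstacle. The way the target bound is written suggests trying the corner $t=t'=\tau/(1+\tau)$ first. There the factors involving $1-t$ and $1-t'$ combine to a single power, leaving
\[
(1-\alpha s)\bigl(1+(\alpha-1)s\bigr)/(1-s)\quad\text{with } s=\tfrac{\tau}{1+\tau},
\]
and this simplifies exactly to $1-\alpha(\alpha-1)\tau^2/(1+\tau)$. I would then establish monotonicity, namely that $\log g$ is nonincreasing in each of $t$ and $t'$ on the admissible range, by checking the signs of the partial derivatives using $\alpha t'<1$. If monotonicity in $t$ fails, I would fall back on the cruder step of bounding $(1-t)^{\alpha-1}\ge 1-(\alpha-1)t/(1-t)$ and re-optimising.
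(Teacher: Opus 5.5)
Your proposal is correct. The mean term follows the paper's argument, and the covariance term reaches the same bound by a different route.

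\textbf{Mean term.} Your inequality $\tilde\Sigma_\alpha\succeq I-\alpha t'\bb\bb^T\succeq(1-\alpha t')I$ is exactly the paper's bound $K_\alpha\succeq\frac{r^2-(\alpha-1)V}{V+r^2}C$ from its covariance-sandwich lemma. You then use the same RKHS interpolation inequality, so this part matches.

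\textbf{Log-determinant term: the paper's route.} The paper whitens by $K_{D'}$, not by the common core. It uses $\frac{1}{1+\tau}K_{D'}\preceq K_D\preceq(1+\tau)K_{D'}$ to confine the spectrum of $R=K_{D'}^{-1/2}K_DK_{D'}^{-1/2}$ to $[1/(1+\tau),1+\tau]$. The inertia of the rank-two difference then shows at most one eigenvalue lies on each side of $1$. Finally it bounds $\phi_\alpha(\lambda_-)+\phi_\alpha(\lambda_+)$ using convexity and unimodality of $\phi_\alpha(\lambda)=-\frac12\log\lambda-\frac{1}{2(\alpha-1)}\log(\alpha-(\alpha-1)\lambda)$.

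\textbf{Your route.} You whiten by $C_-$ and compute the term exactly as a function of $(t,t',c^2)$. You drop the nonnegative $\alpha(\alpha-1)tt'c^2$ contribution and minimise $g$ over the box. The step you flagged as the main obstacle goes through without the fallback. At $c=0$, $g$ factorises and
\[
\partial_t\log g=(\alpha-1)\Bigl[\frac{1}{1+(\alpha-1)t}-\frac{1}{1-t}\Bigr]\le0,\qquad \partial_{t'}\log g=\alpha\Bigl[\frac{1}{1-t'}-\frac{1}{1-\alpha t'}\Bigr]\le0 .
\]
The second inequality uses $0<1-\alpha t'\le 1-t'$ on the whole box, which holds because $\alpha\tau/(1+\tau)<1$. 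Your corner evaluation then finishes the proof.

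\textbf{What each route buys.} Yours is more explicit. It gives the exact divergence and shows the worst case is two orthogonal rank-one updates of maximal leverage $t=t'=\tau/(1+\tau)$. This is precisely the configuration where the eigenvalues of $R$ equal $1/(1+\tau)$ and $1+\tau$, so the paper's bound is attained there. The paper's route needs only a two-sided sandwich plus inertia, which is why it transfers directly to the extra-noise mechanism.

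\textbf{Singular $C_-$.} Prefer restricting to $\mathrm{range}(C_-)$ over a limiting argument. The Schur-complement inequality $(w^T\bu)^2\le v\,w^TC_-w$ for all $w$ forces $\bu,\bu'\in\mathrm{range}(C_-)$. Hence $\delta$, $C_D$, $C_{D'}$ and $\Sigma_\alpha$ all live on that subspace, which is exactly what the paper verifies.
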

Substituting upper bounds $V_n(r)\leq \widehat{V_n}(r)$ and $\Delta_n(r)\leq \widehat{\Delta_n}(r)$ in \eqref{eq:rdp_bound} yields an RDP bound for every $1<\alpha<1+r^2/\widehat{V_n}(r)$.
\begin{figure*}
  \centering
 \includegraphics[width=0.85\textwidth]{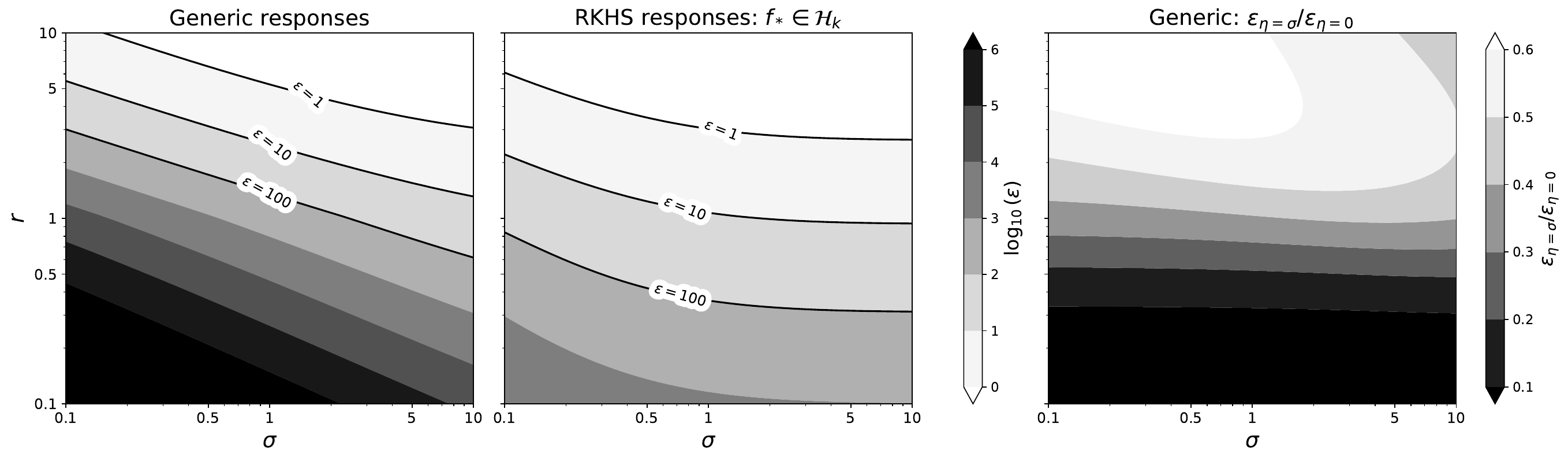}
  \caption{Examples of $(\epsilon,\delta)$-DP bounds constructed from Theorem \ref{thm:main_rdp} (log-log plots). We use $V_n(r)=\bar V_n(r)$, $M_Y=1$, $n=10^3$, $\delta=10^{-4}$, $\kappa^2=0.1$, $L=1$. {\bf Left:} the generic DP bound for $\Delta_n(r)$ from Lemma \ref{lemma:boundedResp_Delta_bound}. {\bf Middle:} more specialised setups lead to tighter DP bounds. It uses the tighter bounds in the RKHS response model from Lemma~\ref{lemma:Delta_rkhs_response_bound} with $\|f_*\|_{\mathcal H_k}=1$. {\bf{Right:}} the effect of using additional GP noise according to \eqref{def:mechanism_eta} with $\eta=\sigma$. This reduces $\varepsilon$ by a factor $1-10$ and the effect is strongest in less regularised regimes. }
  \label{fig:dp_plots}
\end{figure*}
 We prove Theorem~\ref{thm:main_rdp} in Appendix~\ref{app:rdp_proofs} using R\'{e}nyi divergence of multivariate Gaussians by \cite{GIL}. The key difficulty is bounding $D_\alpha$ uniformly when both the mean and covariance of the Gaussians change with the data in a complicated way. To this end, we derive specialised covariance-sandwich relations and use the rank-two structure of the replace-one dataset perturbation.
 
 The bound~\eqref{eq:rdp_bound} separates the roles of the posterior scale 
$\sigma$ and the effective ridge $r$. Increasing $\sigma$ only attenuates the mean-sensitivity term through $\Delta_n(r)/\sigma$. It does not affect 
the remaining term which captures leakage from the data-dependent posterior covariance. This distinguishes posterior sampling from standard DP mechanisms based on data-independent noise -- even in the limit $\sigma\to\infty$, the centred posterior draw can leak through $k_D$. Thus, meaningful intrinsic privacy also requires sufficient regularisation: Appendix~\ref{app:covariance_lower_bound} gives a strictly positive lower bound on covariance-induced RDP that is independent of $n$ and in the strongly regularised regime this lower bound has the same $r^{-4}$ scaling as our upper bound. Importantly, increasing $r$ reduces $V_n(r)/r^2$, enlarges the admissible range of $\alpha$, and decreases $\Delta_n(r)$. This behaviour is visible in Figure~\ref{fig:dp_plots} and is confirmed empirically in Section~\ref{sec:mia} where the membership-inference attack remains nontrivial at large $\sigma$ but collapses toward random guessing as $r$ increases.

In Appendix \ref{app:extra_gp_noise} we also show that one can enhance the intrinsic DP guarantees by adding noise to the released sample path via the mechanism
\begin{equation}\label{def:mechanism_eta}
D\mapsto f_D+g,\quad g\sim \mathcal{GP}(0,\eta^2k).
\end{equation}
By increasing $\eta$ one can obtain stronger privacy guarantees at fixed $\sigma, r$ (Figure~\ref{fig:dp_plots}), albeit typically at the cost of reduced sample-path utility.

The implementation of experiments from subsequent sections can be found on \url{https://github.com/tmaciazek/gaussian_process_dp}.

\section{PRIVACY ACCOUNTING AND MEMBERSHIP INFERENCE}
\label{sec:mia}
We use a LiRA-style MIA \citep{mia_first_principles} to show that the privacy effects predicted by the theory are observable under a concrete practical attack. We show that for a realistic adversary who can sample data from its prior distribution the attack success tracks the predicted dependence on $\sigma$ and $r$ where increasing $\sigma$ alone leaves residual membership signal and increasing $r$ drives the attack performance to random guessing. 

We consider the one-dimensional setup where the responses are generated as $y_i=f_{step}(x_i)$, $\Omega_X=[0,1]$ and we sample the covariates from the uniform distribution. The step-function is $f_{step}(x)=-1$ if $x<1/2$ and $f_{step}(x)=1$ otherwise. We use the exponential kernel $k(x,x')=\exp(-|x-x'|/\ell)$ for GP regression. The task is to decide whether the training set $D$ (with $|D|=10$) contained the point $z_0:=(x_0,y_0)=(1/2,1)$ given $L$ released posterior sample paths $f_D^{(1)},\dots,f_D^{(L)}$. To this end, we apply a version of LiRA which uses the observables 
\begin{gather*}
\hat f_D(x_0):=\frac{1}{L}\sum_{l=1}^L f_D^{(l)}(x_0),
\\
\hat v_D(x_0):=\frac{1}{L\sigma^2}\sum_{l=1}^L \left( f_D^{(l)}(x_0)-\hat f_D(x_0)\right)^2.
\end{gather*} 
Namely, we construct in- and out-distributions of $\left(\hat f_D(x_0),\hat v_D(x_0)\right)$ by sampling training sets $D$ that contain $z_0$ and $D$ that do not contain $z_0$. We then estimate the density of these in- and out-distributions (denoted by $\rho_{in}$ and $\rho_{out}$) and then apply the Neyman-Pearson binary hypothesis testing criterion 
\[
\log\frac{\rho_{in}\left(\hat f_D(x_0),\hat v_D(x_0)\right)}{\rho_{out}\left(\hat f_D(x_0),\hat v_D(x_0)\right)} > C, \quad C\in\mathbb R
\]
to decide whether an observed $\hat f_D, \hat v_D$ came from $D$ containing $z_0$. See \cite{mia_first_principles} for more details of the LiRA attack. We use a latent Gaussian mixture model to estimate $\rho_{in/out}$, see Appendix \ref{app:mia}.

Our choice of the noiseless regression function and the point $z_0$ is favourable to the adversary as probing the data around the jump location strongly affects the GP posterior sample paths. Thus, including $z_0$ creates a strong local membership signal. This is an appropriate stress-test for our DP claim which must control worst-case privacy leakage. We work with the 1D exponential kernel where by Proposition~\ref{prop:general_positive_kernel_exact_Vn} we have $V_n(r)=\bar V_n(r)$ with $\kappa=\exp(-1/\ell)$ and Corollary~\ref{coro:exp_1d} gives us a specialised tighter RDP bound.

For a fixed $C$ in the Neyman-Pearson criterion we evaluate the true-positive rate ($\mathrm{TPR}$) which is the fraction of members correctly classified as members and the false-positive rate ($\mathrm{FPR}$)  which is the fraction of non-members incorrectly classified as members. The ROC curves are obtained by varying $C$. We report \emph{excess} $\mathrm{TPR}@\mathrm{FPR}=10\%$ i.e., $TPR-10\%$ at $FPR=10\%$. Results at $\mathrm{FPR}=1\%$ are qualitatively similar and are deferred to Appendix~\ref{app:mia}.

\paragraph{Attack success vs. $\varepsilon$.} Figure~\ref{fig:mia} assumes $L=1$ and shows that the attack curves mirror our privacy bound. In the regime where the bound reaches single-digit $\varepsilon$, the LiRA-style attack is consistently driven close to random-guess performance, suggesting practical relevance of the bounds. This agrees with the recent observations \citep{LowyLiLiuEtAl2024LargeEpsilonMIA} that single-digit $\varepsilon$ can suffice to defend against MIA under a realistic attacker model (such as ours) in which the adversary does not have knowledge of $D_-$. When our bound remains large, large $\sigma$ alone leaves residual membership signal confirming that regularisation, not posterior scale alone, is crucial for good privacy. This is especially evident when considering the formal limit $\sigma\to\infty$ (with $r$-fixed) corresponding to releasing covariance-only information via the centred posterior draw $f_D\sim \mathcal{GP}(0,k_D)$ described by RDP bound \eqref{eq:rdp_bound} with $\Delta_n\to 0$.
\begin{figure}[t]
  \centering
 \includegraphics[width=.98\columnwidth]{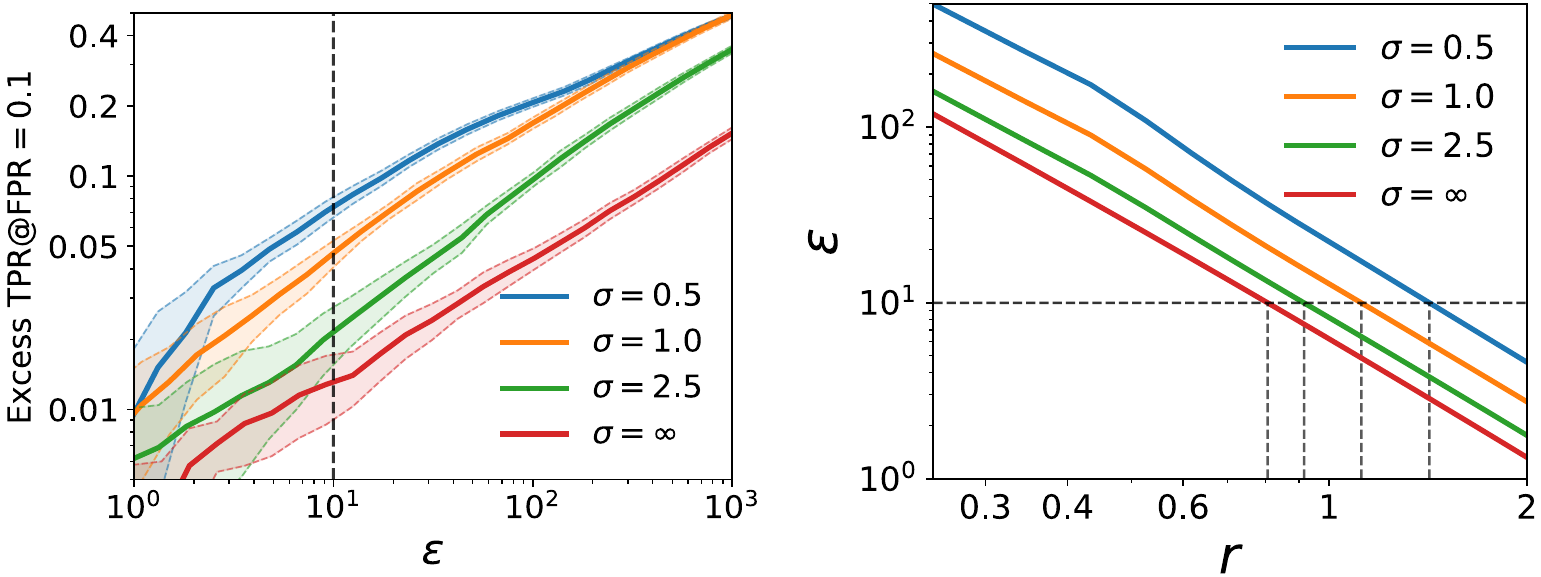}
 \caption{
MIA success and $(\varepsilon,\delta)$-DP bounds for one released posterior sample path ($\ell=1$, $L=1$, $n=10$, $\delta=10^{-3}$). 
{\bf Left:} Excess $\mathrm{TPR}@\mathrm{FPR}=0.1$ vs. $\varepsilon$ for different $\sigma$. Error bands show standard deviation over $10$ random seeds. {\bf Right:} Privacy bound versus regularisation $r$. Dashed guides mark the $\varepsilon=10$ threshold and the corresponding $r$ values.  Small $\varepsilon$ is reached only with sufficient regularisation, even in the covariance-only limit $\sigma=\infty$.
}
  \label{fig:mia}
\end{figure}
\paragraph{Privacy composition with $L$.} Figure~\ref{fig:mia_L} shows how MIA success scales with the number of released posterior sample paths $L$. As $L$ becomes large, the attack approaches the non-private benchmark corresponding to releasing the exact posterior functions $\mu_D$ and $k_D$. At the same time,  optimising the RDP-to-DP conversion after composition over $\alpha$ in \eqref{eq:rdp_composition} yields an initially sub-linear scaling of the final $(\varepsilon,\delta)$-DP bound with $L$ despite the RDP terms composing additively. Appendix~\ref{app:composition} shows that under representative privacy and model parameters the admissible number of posterior-path releases can reach hundreds or more. Interestingly, in~GP-Thompson sampling (GP-TS) established regret-calibrated covariance-scale schedules yield strongly sublinear privacy-cost growth $\varepsilon=\mathcal{O}(T^{2\nu/(2\nu+d)})$ with the number of TS rounds $T$ for Matérn-$\nu$ kernels, see Appendix~\ref{app:public_covariates}. This directly extends the results of \cite{pmlr-v238-ou24a} to GP-TS showing it is intrinsically differentially private at no loss in performance due to privacy and at an improved privacy cost scaling with $T$.

\begin{figure}[t]
  \centering
 \includegraphics[width=.9\columnwidth]{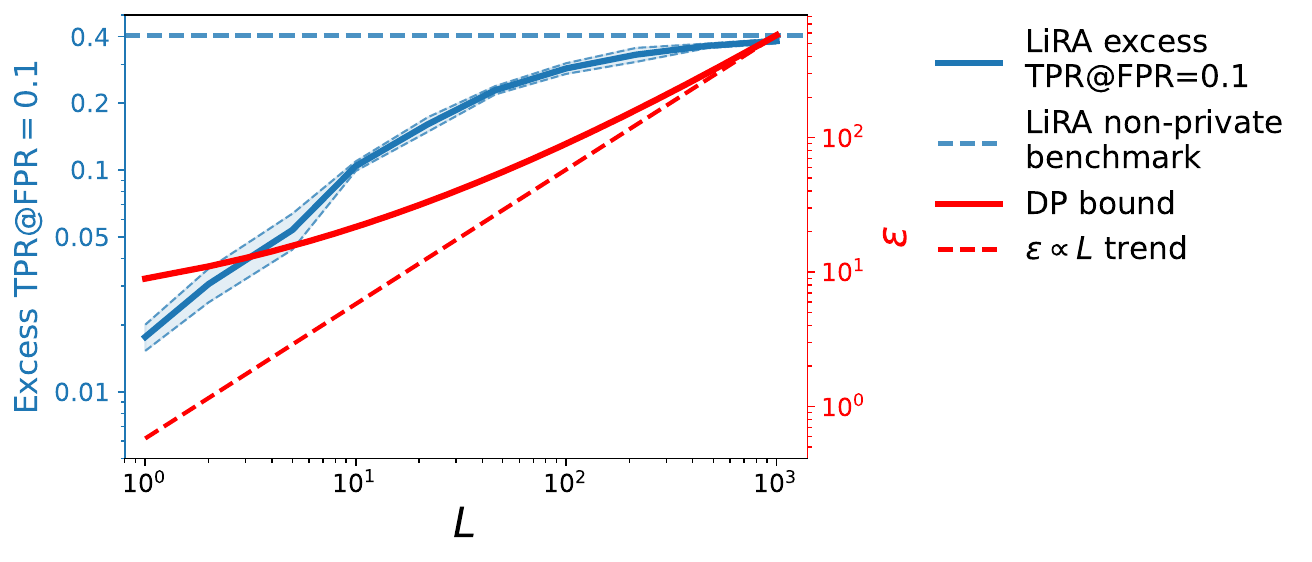}
\caption{
Effect of releasing $L$ posterior sample paths. The LiRA attack starts with random-guess effectiveness ($TPR\approx FPR$) at $L=1$ and strengthens with $L$ approaching the non-private benchmark while the $(\varepsilon,\delta)$-DP bound initially grows sub-linearly with $L$. Parameters: $\ell=1$, $r=1$, $\sigma=2$, $n=10$, $\delta=10^{-3}$. Error bands show SD over $10$ random seeds.
}
  \label{fig:mia_L}
\end{figure}

\section{PRIVACY VS. UTILITY}\label{sec:excursion}
In this section we test when GP posterior sampling can provide a useful privacy-preserving Bayesian output. The experiments isolate the privacy--utility mechanism suggested by Theorem~\ref{thm:main_rdp}: as observation noise makes stronger regularisation utility-optimal, the posterior-sampling release moves closer to the strong-privacy region while preserving downstream task utility. We begin with a simple one-dimensional setting where the privacy accounting is tighter and then consider more complex spatial setups. These experiments demonstrate the potential usefulness of the studied DP mechanism, but a broader assessment across tasks, modelling assumptions and more elaborate Bayesian workflows remains an important direction for future work.

We consider a GP-regression setting where an unknown latent random field $f_*$ is observed only through noisy data
$D=(X,\by)$, and inference is performed through a posterior GP,
$\Pi_D=\mathcal{GP}(\mu_D,\sigma^2 k_D)$ with hyperparameters
$\theta=(\sigma,r,\ell)$ corresponding to kernel scale, effective ridge and lengthscale. Abstractly, we aim to choose hyperparameters that minimise the average task-specific loss,
\begin{equation}\label{eq:unconstrained_theta}
\theta_* \in
\arg\min_{\theta}
\mathbb E_{f_*}E_{D\mid f_*}\!\left[
\mathcal L(\Pi_D,f_*)
\right].
\end{equation}
Given $\theta$ one evaluates the utility of the fitted posterior distribution $\mathcal U(\Pi_D,f_*)$. For ordinary regression the utility might be (negative) mean-squared error and the loss might be the predictive negative log-likelihood. Here we  turn to excursion-set estimation, a task common in geospatial and environmental modelling. This is a typical task where posterior draws naturally induce random excursion-set realizations by thresholding the posterior draws \citep{excursion_uncertainity}. For example, one may wish to publish a pollution map from measurements collected at private sensor locations $X$.

Given a threshold $t\in\mathbb R$, the true excursion set is $\Omega_t(f_*):=\{x\in\Omega_X:f_*(x)\ge t\}$. In our experiments we take $t=0$. The posterior assigns to each location $x$ the
excursion probability
\[
p_D(x)
:=
\mathbb P_{f\sim\Pi_D}\!\left[f(x)\ge t\right]
=
\Phi\!\left(
\frac{\mu_D(x)-t}
{\sigma\sqrt{k_D(x,x)}}
\right),
\]
where $\Phi$ is the CDF of normal distribution. In the synthetic experiments the true excursion set is known and we use BCE as the loss function:
\[
\mathcal L_{BCE}(\Pi_D,f_*)
=
\int_{\Omega_X}
\ell_{\rm BCE}\!\left(s_*(x),p_D(x)\right)\,dx
\]
where $\ell_{\rm BCE}(s,p):=-s\log p-(1-s)\log(1-p)$ and $s_*(x)=1$ when $f_*(x)\ge t$, and $s_*(x)=0$ otherwise. We approximate this integral on a fixed mesh. This oracle task-specific loss targets posterior excursion probabilities directly rather than general-purpose GP fit. The excursion set estimate is $\widehat\Omega$. The utility is the set accuracy measured by
intersection-over-union (IoU)
\[
\mathcal U_\theta(\Pi_D,f_*)=\mathrm{IoU}\left(\widehat\Omega,\Omega_t(f_*)\right)
:=
\frac{\left|\widehat\Omega\cap\Omega_t(f_*)\right|}
     {\left|\widehat\Omega\cup\Omega_t(f_*)\right|},
\]
where $|\Omega|$ denotes the volume of $\Omega$. We select the non-private excursion set estimate $\widehat\Omega_D(\theta_*)$ via the simple thresholded-probability criterion
\[
\widehat\Omega_D(\theta_*)
=
\{x\in\Omega_X:p_D(x)\ge C\},
\]
where the cutoff $C$ is chosen by validation. The private release uses hyperparameters selected by the constrained search
\begin{equation}\label{eq:constrained_theta}
\theta^{\rm priv}_L
\in
\arg\min_{\theta:\,\varepsilon_L(\theta)\le \varepsilon_0}
\mathbb E_{f_*,D}\!\left[
\mathcal L_{BCE}(\Pi_D,f_*)
\right],
\end{equation}
where $\varepsilon_L(\theta)$ is the $(\varepsilon,\delta)$-DP privacy bound for releasing $L$ independent
posterior sample paths and $\varepsilon_0$ is the target privacy budget. Given
these hyperparameters, the released randomized estimate is
\[
\widehat\Omega^{(L)}_D
=
\left\{
x\in\Omega_X:
\frac1L\sum_{l=1}^L
\mathbf 1\left\{f_D^{(l)}(x)\ge t\right\}
\ge c
\right\},
\]
where $c$ is fixed by validation and $f_D^{(\ell)}$ are i.i.d. from the GP posterior using $\theta=\theta^{\rm priv}_L$. 

This operational approach is in the spirit of the randomized decision-theoretic view of posterior sampling developed by
\citet{JMLR:v18:15-257}. We use posterior draws as a randomized answer to a downstream excursion-set
query: draw $f_D^{(1)},\dots,f_D^{(L)}$ from the GP posterior and release
$\widehat\Omega^{(L)}_D$. Since this is a post-processing of the sampled
paths, this set-valued release $\widehat\Omega^{(L)}_D$ inherits the same $(\varepsilon_L,\delta)$ privacy guarantee as the $L$ released posterior paths. The connection to randomised decision-making is conceptual rather than a direct application of the results of \citet{JMLR:v18:15-257} --
their assumptions do not cover the usual GP regression posterior which is why one needs to derive GP-specific bounds.

\paragraph{Experimental results.} 
We generate noisy observations by first sampling $f_*\sim\mathcal{GP}(0,k_{gen})$ with $k_{gen}(x,x')=\exp(-|x-x'|)$ and then feed $f_*$ into the bounded-response model
\begin{equation}\label{eq:response_model}
y_i=(1-M_\xi)\frac{f_*(x_i)}{\|f_*\|_\infty}+\xi_i, \quad \xi_i\overset{\mathrm{i.i.d.}}{\sim} P_\xi
\end{equation}
with $x_i$ uniformly distributed on $\Omega_X=[0,1]$, $P_\xi=\mathrm{Unif}([-M_\xi, M_\xi])$, $0<M_\xi<1$ and $\|f_*\|_\infty:=\sup_{\bx\in\Omega_X}|f_*(\bx)|$. Such a response model has amplitude noise-to-signal ratio $NSR:=M_\xi/(1-M_\xi)$. We use the exponential kernel for GP regression where Corollary~\ref{coro:exp_1d} provides tighter DP-bounds. We work with $L=1$, $|D|=100$ and the privacy budget $\varepsilon_0=10$ with $\delta=0.005$. The parameters $\theta_*$ and $\theta^{\rm priv}_L$ are selected via several rounds of grid-search minimisation according to Eqs. \eqref{eq:unconstrained_theta} and \eqref{eq:constrained_theta} on a fixed sample of $10^3$ pairs $(D,f_*)$. The thresholds $c,C$ are chosen via another grid-search on a separate validation set by optimising the expected IoU.

\begin{table}[t]
\centering
\setlength{\tabcolsep}{4pt}
\small
\begin{tabular}{lcccc}
\toprule
$M_\xi$ & $0.1$ & $0.3$ & $0.5$ & $0.6$ \\
\midrule
NSR & $0.11$ & $0.43$ & $1.00$ & $1.50$ \\
$\varepsilon$, unconstrained & $2460$ & $270$ & $36$ & $22$ \\
$d_{\rm eff}$, unconstrained & $51.2$ & $19.4$ & $8.39$ & $5.75$ \\
$d_{\rm eff}$, $\varepsilon<9$ & $3.09$ & $2.26$ & $2.18$ & $2.16$ \\
relative BCE increase & $116\%$ & $37.7\%$ & $12.0\%$ & $5.9\%$ \\
IoU, unconstrained & $0.949$ & $0.916$ & $0.871$ & $0.844$ \\
$1$-path IoU mean, $\varepsilon<9$ & $0.841$ & $0.828$ & $0.797$ & $0.765$ \\
relative IoU gap & $10.6\%$ & $8.8\%$ & $8.2\%$ & $8.6\%$ \\
\bottomrule
\end{tabular}
\caption{Median over $10^3$ draws of $(D,f_*)$-pairs across a range of
$\mathrm{NSR}$ values. The private candidate is the best
average-BCE-selected setting satisfying $\varepsilon<9$ and
$\delta=10^{-3}$. The $1$-path IoU is first averaged over $50$
posterior draws for each $(D,f_*)$-pair.}
\label{tab:noise_transition}
\end{table}

Table~\ref{tab:noise_transition} shows the transition across noise levels. As $M_\xi$ increases the unconstrained loss-optimum becomes naturally more regularised: both $\varepsilon$ and effective dimension $d_{\rm eff}:=\mathrm{tr}\left[K_{XX}\left(K_{XX}+r^2I\right)^{-1}\right]$ decrease. Thus, the cost of imposing $\varepsilon<9$ also decreases with the relative BCE cost falling from $116\%$ at $\mathrm{NSR}=0.11$ to $5.9\%$ at $\mathrm{NSR}=1.50$. The intermediate case $\mathrm{NSR}=1$ illustrates the typical tradeoff where the privacy constraint reduces $d_{\rm eff}$ from $8.39$ to $2.18$ while increasing median BCE by only $12\%$. The released object is the thresholded posterior sample path. Its median IoU remains high at $0.797$, only $8.2\%$ below the non-private posterior-probability benchmark. For more details see Appendix \ref{app:utility}. Appendix \ref{app:utility_2D} presents a 2D excursion set study with excursion set visualisations. As a real-data study, we apply analogous procedure to  \cite{HMLRPricePaidData} for 2018 Greater London leasehold-flat transactions. Transactions are aggregated onto a hexagonal lattice and GP regression is applied to the clipped and
rescaled centred responses, see Appendix~\ref{app:london} for details. Property prices and hexagon locations are private. Figure~\ref{fig:london} shows the  non-private posterior excursion boundary $\{x:p_D(x)\ge C\}$ and private one-path excursion boundaries. 

\paragraph{Privacy for free as $n\to\infty$.} The utility trade-off diminishes as $n$ grows. Table~\ref{tab:n_transition} in Appendix~\ref{app:privacyForFree} shows that in our synthetic-data experiment at $n=400$ and $\mathrm{NSR}=1$, the IoU gap drops to $4.2\%$. In Appendix~\ref{app:privacyForFree} we show that this effect is very general. Namely, by a direct combination of \cite{10.3150/07-BEJ5102} and \cite{LeGratiet2015} we obtain that, under mild regularity conditions, GP posterior risk $\EE_{f_D}[\|f_D-f_*\|_{L^2(P_X)}^2]\to 0$ in probability (over training data distribution) as $n\to\infty$ in  bounded-response model under growing regularisation $r_n \propto n^\gamma$ with $3/8<\gamma<1/2$. Consequently, one-path $\mathrm{IoU}\to 1$ in probability and our generic bounded-response DP-bound guarantees $\varepsilon \to 0$ as $n\to\infty$.
\begin{figure}[t]
  \centering
  \includegraphics[width=\columnwidth]{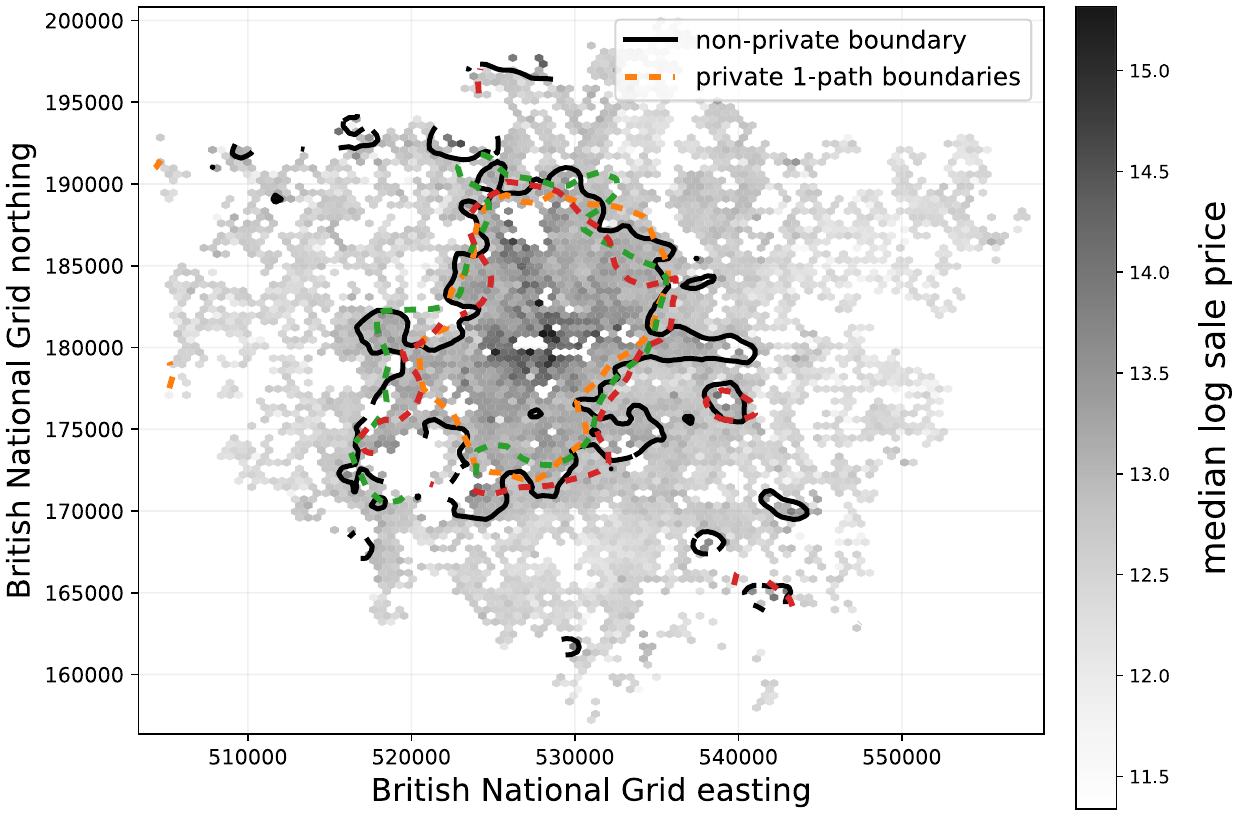}
  \caption{Greater London 2018 house-price excursion experiment. The task is to estimate the region where the median log sale
  price exceeds $13$ (about $\pounds 442\mathrm{k}$). The private posterior 
  draws (dashed lines) retain the dominant high-price region in central London, but can
  miss or smooth out smaller high-price islands outside central London
  illustrating the randomisation and regularisation induced by the
  privacy-constrained posterior.  $\varepsilon=9$, 
  $\delta=n^{-1.1}$ with $n=4390$ hexagons.}
  \label{fig:london}
\end{figure}

\section{DISCUSSION AND LIMITATIONS}\label{sec:discussion}
Our DP bounds assume a fixed kernel and fixed hyperparameters, including $r$, $\sigma$ and $\ell$. They apply directly when these choices are independent of
the private dataset, e.g. selected using public or non-private set-aside data. If hyperparameters are selected using the private training set, this must be privacy-accounted. Any RDP guarantee for the hyperparameter selection step can be combined with ours by composition.

Our DP bounds show that small-$\varepsilon$ guarantees require moderate to strong 
regularisation which may be restrictive for membership privacy. Importantly, larger privacy budgets can provide protection against more information-intensive threats such as full-example reconstruction. Indeed, reconstruction robustness (ReRo) bounds show that for diffuse priors on $d$-dimensional data reconstruction be mitigated even for $\varepsilon\propto d$ \citep[Propositions~6--7]{BalleCherubinHayes2022}. Consequently, for high-dimensional data such as images privacy budgets of order $10^3-10^4$ can still certify strong protection against data reconstruction. Thus, when the private data is high-dimensional, our DP bounds for GP posterior sampling can certify intrinsic protection against data reconstruction even under weaker regularisation, see also Appendix \ref{app:composition}.

Our bounds require $\Delta_n(r)<\infty$ and hence bounded responses. As is standard in DP, unbounded responses may yield unbounded sensitivity and response clipping provides a conventional mitigation. Theorem~\ref{thm:main_rdp} is completely general, but its practical strength depends on controlling $\Delta_n(r)$. While the generic bound from Lemma~\ref{lemma:boundedResp_Delta_bound} can be conservative, our RKHS-response and data-separation results in Appendix~\ref{app:Vn_Deltan_bounds} give much sharper $\mathcal O(1)$-bounds under natural structural assumptions that arise in a range of practical applications. Extending these to further kernels and response/data models remains future work.

\textbf{Conclusions\quad} We show that GP posterior sampling can provide intrinsic privacy
guarantees in regimes controlled by the posterior covariance and 
effective regularisation.  Membership-inference attacks follow the predicted dependence on
$r$, $\sigma$, and the number of released paths $L$. The utility experiments
show that for randomized excursion-set estimation privacy-compatible
regularisation can preserve utility even when it degrades posterior
calibration. In particular, noisy regimes in which regularisation is 
natural yield meaningful single-path privacy guarantees with useful released excursion
sets. Thus posterior randomness can be a genuine privacy resource, but not an unconditional one -- useful intrinsic privacy requires enough regularisation to control both posterior-mean and posterior-covariance leakage.


\bibliography{gp_privacy_refs}
\bibliographystyle{plainnat}

\clearpage
\appendix
\thispagestyle{empty}

\onecolumn
\aistatstitle{Supplementary Materials}

\section{PROOF OF THE RDP BOUND}
\label{app:rdp_proofs}

Throughout this section, let $D = D_- \cup \{(\bx,y)\}$ and $D' = D_- \cup \{(\bx',y')\}$ be replace-one neighbouring datasets, where $D_-:=\left((\bx_1,y_1),\dots,(\bx_{n-1},y_{n-1})\right)$ is their common core. Write also $X_-:=(\bx_1,\dots,\bx_{n-1})$ and $y_-:=(y_1,\dots,y_{n-1})^T$. Define the common-core posterior mean and covariance as
\begin{gather*}
\mu_{D_-}(X) := K(X,X_-)\left(K(X_-,X_-)+r^2I\right)^{-1}y_-,
\\
K_{D_-}(X,X') := K(X,X')-K(X,X_-)\left(K(X_-,X_-)+r^2I\right)^{-1}K(X_-,X').
\end{gather*}
For a finite test set \(X_T\), set
\begin{gather*}
C:=K_{D_-}(X_T,X_T),
\qquad
\bc(\bx):=k_{D_-}(X_T,\bx),
\qquad
v(\bx):=k_{D_-}(\bx,\bx).
\end{gather*}
The identities for the posterior mean and covariance 
\begin{align}
\begin{split}\label{eq:update_formula}
\mu_D(X_T)=\mu_{D_-}(X_T)+\frac{y-\mu_{D_-}(\bx)}{v(\bx)+r^2}\,\bc(\bx),
\qquad
\mu_{D'}(X_T)=\mu_{D_-}(X_T)+\frac{y'-\mu_{D_-}(\bx')}{v(\bx')+r^2}\,\bc(\bx'),
\\
K_D:=k_D(X_T,X_T)=C-\frac{\bc(\bx). \bc(\bx)^T}{v(\bx)+r^2},
\qquad
K_{D'}
=
C-\frac{\bc(\bx').\bc(\bx')^T}{v(\bx')+r^2}
\end{split}
\end{align}
are the standard one-step update formulas obtained by writing the full posterior expressions for $D=D_-\cup\{(\bx,y)\}$ and $D'=D_-\cup\{(\bx',y')\}$ from Section \ref{sec:setup} in block form relative to the common core $D_-$ and taking the corresponding Schur complement and using the block-matrix inverse identity. We also define the common-core posterior variance scale $V_{D_-}(r)$ and the uniform posterior variance scale $V_n(r)$ as 
\[
V_{D_-}(r):=\sup_{\bx\in\Omega_X} k_{D_-}(\bx,\bx), \qquad V_n(r):=\sup_{|D_-|=n-1} V_{D_-}(r).
\]
Since \(0\leq k_{D_-}(\bx,\bx)\leq k(\bx,\bx)\) and $k(\bx,\bx)\leq 1$, we have \(0\leq V_{D_-}(r),V_n(r)\leq 1\).
Finally, for the posterior mean we define the uniform common-core sensitivity
\[
\Delta_n(r)
:=
\sup_{|X_T|<\infty}\,\sup_{D\sim D'}
\left(\delta_T^T\,k_{D_-}(X_T,X_T)^{\dagger}\,\delta_T\right)^{1/2},
\qquad
\delta_T:=\mu_D(X_T)-\mu_{D'}(X_T),
\]
where $A^\dagger$ denotes the Moore--Penrose pseudoinverse. If $k_{D_-}(X_T,X_T)$ is invertible this reduces to the usual inverse.
\begin{lemma}[Covariance sandwich]
\label{lem:local_common_core_sandwich}
For every replace-one neighbouring pair $D,D'$ with common core $D_-$ and every finite test set $X_T$
\[
\frac{r^2}{V_{D_-}(r)+r^2}\,C
\;\preceq\;
K_D
\;\preceq\;
C,
\qquad
\frac{r^2}{V_{D_-}(r)+r^2}\,C
\;\preceq\;
K_{D'}
\;\preceq\;
 C.
\]
Consequently, one has
\[
\frac{r^2}{V_{D_-}(r)+r^2}\,K_{D'}
\;\preceq\;
K_D
\;\preceq\;
\left(1+\frac{V_{D_-}(r)}{r^2}\right)\,K_{D'}.
\]
Moreover, for $K_\alpha:=\alpha K_{D'}+(1-\alpha)K_D$ one has
\[
K_\alpha
\;\succeq\;
\frac{r^2-(\alpha-1)V_{D_-}(r)}{V_{D_-}(r)+r^2}\,C.
\]
Thus, $K_\alpha$ is positive semidefinite whenever $1<\alpha<1+\frac{r^2}{V_{D_-}(r)}$.
\end{lemma}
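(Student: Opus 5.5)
The plan is to work directly from the rank-one update formulas \eqref{eq:update_formula}. Since $K_D=C-\bc(\bx)\bc(\bx)^T/(v(\bx)+r^2)$ and the subtracted term is positive semidefinite, the upper bound $K_D\preceq C$ is immediate. For the lower bound I will show that $\bc(\bx)\bc(\bx)^T\preceq v(\bx)\,C$. This is a Cauchy--Schwarz statement for the positive semidefinite kernel $k_{D_-}$. Concretely, the block matrix
\[
\begin{pmatrix} C & \bc(\bx)\\ \bc(\bx)^T & v(\bx)\end{pmatrix}=k_{D_-}\big((X_T,\bx),(X_T,\bx)\big)\succeq 0,
\]
so for any vector $\bu$ we get $(\bu^T\bc(\bx))^2\le (\bu^TC\bu)\,v(\bx)$. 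This handles singular $C$ without needing a Schur complement.

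With that inequality in hand,
\[
K_D\succeq C-\frac{v(\bx)}{v(\bx)+r^2}C=\frac{r^2}{v(\bx)+r^2}C\succeq\frac{r^2}{V_{D_-}(r)+r^2}C,
\]
where the last step uses $v(\bx)\le V_{D_-}(r)$ and the fact that $t\mapsto r^2/(t+r^2)$ is decreasing, together with $C\succeq 0$. The same argument with $\bx'$ gives the sandwich for $K_{D'}$.

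The "consequently" part then follows by chaining the two sandwiches. First, $K_D\preceq C\preceq (1+V_{D_-}/r^2)K_{D'}$, because the lower sandwich for $K_{D'}$ rearranges to $C\preceq \frac{V_{D_-}+r^2}{r^2}K_{D'}$. Second, $K_D\succeq \frac{r^2}{V_{D_-}+r^2}C\succeq\frac{r^2}{V_{D_-}+r^2}K_{D'}$, using $C\succeq K_{D'}$.

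For $K_\alpha$ with $\alpha>1$, the coefficient $1-\alpha$ is negative. I will therefore bound $K_{D'}$ below by its lower sandwich and $K_D$ above by $C$, both in the direction that is valid when multiplied by $\alpha$ and by $(1-\alpha)$ respectively. This gives
\[
K_\alpha\succeq\Big(\frac{\alpha r^2}{V_{D_-}+r^2}+1-\alpha\Big)C=\frac{r^2-(\alpha-1)V_{D_-}}{V_{D_-}+r^2}\,C .
\]
The coefficient is positive exactly when $\alpha<1+r^2/V_{D_-}(r)$, and since $C\succeq0$ this makes $K_\alpha$ positive semidefinite. If $V_{D_-}=0$, then $\bc\equiv 0$ by the Cauchy--Schwarz bound, so $K_D=K_{D'}=C$ and the statement is trivial.

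The only genuinely non-routine step is the rank-one domination $\bc\bc^T\preceq v\,C$. Everything else is bookkeeping with Loewner orderings, including keeping track of the sign of $1-\alpha$ so that each inequality is applied in the right direction.
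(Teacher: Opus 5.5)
Your proposal is correct and takes essentially the same route as the paper: you obtain the rank-one domination $\bc(\bx)\bc(\bx)^T\preceq v(\bx)\,C$ from positive semidefiniteness of the bordered block matrix (the paper phrases this as a discriminant condition), and the rest is the same Loewner-order bookkeeping, including the handling of the sign of $1-\alpha$. The only cosmetic difference is the lower half of the ``consequently'' sandwich: you chain through $C\succeq K_{D'}$, whereas the paper interchanges $D$ and $D'$ in the upper half.
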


\begin{proof}
Consider first $K_D:=k_D(X_T,X_T)$. Since
\[
K_{D_-}(X_T\cup\{\bx\},X_T\cup\{\bx\})=
\begin{pmatrix}
C & \bc(\bx)\\
\bc(\bx)^T & v(\bx)
\end{pmatrix}
\]
is positive semidefinite, for every $\ba,\,t$ we have
\[
\begin{pmatrix}
\ba^T & t
\end{pmatrix}
\begin{pmatrix}
C & \bc(\bx)\\
\bc(\bx)^T & v(\bx)
\end{pmatrix}
\begin{pmatrix}
\ba \\
t
\end{pmatrix} 
 = \ba^T C \ba + 2t \ba^T\bc(\bx)+t^2v(\bx)
\geq 0.
\]
The discriminant of this polynomial in $t$ must be non-positive which is equivalent to 
\begin{equation}\label{ineq:C_ineq}
\ba^T.\left(\bc(\bx).\bc(\bx)^T\right)\ba=\left(\ba^T.\bc(\bx)\right)^2\leq v(\bx)\,\ba^TC\ba.
\end{equation}
Equivalently, $\bc(\bx).\bc(\bx)^T\preceq vC$. Hence by Equations \eqref{eq:update_formula}
\[
K_D
=
C-\frac{\bc(\bx). \bc(\bx)^T}{v(\bx)+r^2}
\succeq
C-\frac{v(\bx)}{v(\bx)+r^2}C
=
\frac{r^2}{v(\bx)+r^2}\,C, \quad K_{D'}\succeq\frac{r^2}{v(\bx')+r^2}\,C,
\]
where we applied the same argument to $\bx'$ to obtain the corresponding bound for $K_{D'}$. Since $v(\bx'),v(\bx)\le V_{D_-}(r)$, this yields
\begin{equation}\label{eq:KD_lower}
K_D
\succeq
\frac{r^2}{V_{D_-}(r)+r^2}\,C,\qquad K_{D'}
\succeq
\frac{r^2}{V_{D_-}(r)+r^2}\,C,
\end{equation}
The upper bound $K_D\preceq C$ is immediate from the identity for $K_D$. Indeed, by Equations \eqref{eq:update_formula}
\[
C - K_D=\frac{\bc(\bx). \bc(\bx)^T}{v(\bx)+r^2}\succeq 0.
\]
Combining this with \eqref{eq:KD_lower} yields
\[
K_D\preceq C
\preceq
\left(1+\tau_-(r)\right)K_{D'},\qquad \tau_-(r):=\frac{V_{D_-}(r)}{r^2}.
\]
This proves $K_D\preceq (1+\tau_-(r))K_{D'}$ and interchanging $D$ and $D'$ yields $K_{D'}\preceq (1+\tau_-(r))K_D$
 which is equivalent to
\[
\frac{1}{1+\tau_-(r)}\,K_{D'}
\preceq
K_D \preceq (1+\tau_-(r))K_{D'},\quad \frac{1}{1+\tau_-(r)}\,K_{D}
\preceq
K_{D'} \preceq (1+\tau_-(r))K_{D}.
\]
Finally, using the lower bound for $K_{D'}$ and the upper bound for $K_D$,
\begin{align*}
K_\alpha
&=
\alpha K_{D'}-(\alpha-1)K_D \succeq
\alpha \frac{r^2}{V_{D_-}(r)+r^2}\,C
-
(\alpha-1)C =
\left(
\frac{\alpha r^2}{V_{D_-}(r)+r^2}-(\alpha-1)
\right)C \\
&=
\frac{r^2-(\alpha-1)V_{D_-}(r)}{V_{D_-}(r)+r^2}\,C.
\end{align*}
This is positive semidefinite whenever $r^2-(\alpha-1)V_{D_-}(r)>0$.
\end{proof}

\begin{proposition}[RDP bound]
\label{prop:rdp_main}
Let $D,D'$ be replace-one neighbouring labelled datasets of size $n$. Let $\Pi_D(X_T)$ and $\Pi_{D'}(X_T)$ denote the (Gaussian) distributions of the released posterior sample vector on a finite test set $X_T$. Then for every
\[
1<\alpha<1+\frac{r^2}{V_n(r)},
\]
one has
\[
D_\alpha\left(\Pi_D(X_T)\|\Pi_{D'}(X_T)\right)
\le
\psi_\alpha\left(\frac{V_n(r)}{r^2}\right)
+
\frac{\alpha\left(V_n(r)+r^2\right)}
{2\sigma^2\left(r^2-(\alpha-1)V_n(r)\right)}
\,\Delta_n^2(r),
\]
where
\[
\psi_\alpha(\tau):=-\frac{\log\left(
1-\frac{\alpha(\alpha-1)\tau^2}{1+\tau}
\right)}{2(\alpha-1)}.
\]
\end{proposition}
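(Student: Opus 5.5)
The plan is to start from the closed-form R\'enyi divergence between two Gaussians \citep{GIL}. Write $\Sigma_D=\sigma^2K_D$, $\Sigma_{D'}=\sigma^2K_{D'}$, $\delta_T=\mu_D(X_T)-\mu_{D'}(X_T)$ and $K_\alpha=\alpha K_{D'}+(1-\alpha)K_D$ as in Lemma~\ref{lem:local_common_core_sandwich}. Then
\[
D_\alpha(\Pi_D(X_T)\|\Pi_{D'}(X_T))=\frac{\alpha}{2\sigma^2}\,\delta_T^TK_\alpha^{-1}\delta_T-\frac{1}{2(\alpha-1)}\log\frac{\det K_\alpha}{(\det K_D)^{1-\alpha}(\det K_{D'})^{\alpha}} .
\]
I will bound the two terms separately. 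Both are governed by the common-core matrix $C$.

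\emph{Handling degeneracy.} $C$ may be singular. By Lemma~\ref{lem:local_common_core_sandwich}, $K_D$, $K_{D'}$ and (for admissible $\alpha$) $K_\alpha$ are all sandwiched between positive multiples of $C$, so they share the range of $C$. The vectors $\bc(\bx),\bc(\bx')$ lie in this range by the PSD block argument used to prove \eqref{ineq:C_ineq}. Hence $\delta_T$ lies in the range as well, by \eqref{eq:update_formula}. I will therefore restrict everything to $\mathrm{range}(C)$, interpret inverses and determinants there, and treat $C$ as invertible.

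\emph{Mean term.} Lemma~\ref{lem:local_common_core_sandwich} gives $K_\alpha\succeq\frac{r^2-(\alpha-1)V_{D_-}}{V_{D_-}+r^2}C$, which inverts to
\[
K_\alpha^{-1}\preceq\frac{V_{D_-}+r^2}{r^2-(\alpha-1)V_{D_-}}\,C^{-1}.
\]
The prefactor is increasing in $V_{D_-}\le V_n$, so we may replace $V_{D_-}$ by $V_n$. By definition of $\Delta_n$, $\delta_T^TC^{-1}\delta_T\le\Delta_n^2$. This yields exactly the second term of the proposition.

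\emph{Log-determinant term: reduction to a rank-two problem.} Whiten by $C$. Set $\ba=C^{-1/2}\bc(\bx)/\sqrt{v(\bx)+r^2}$ and $\bb=C^{-1/2}\bc(\bx')/\sqrt{v(\bx')+r^2}$. Then
\[
K_D=C^{1/2}(I-\ba\ba^T)C^{1/2},\qquad K_{D'}=C^{1/2}(I-\bb\bb^T)C^{1/2}.
\]
By \eqref{ineq:C_ineq}, $A:=|\ba|^2\le v/(v+r^2)\le s:=\tau/(1+\tau)$ with $\tau=V_n/r^2$, and likewise $B:=|\bb|^2\le s$. The factors $\det C$ cancel, and the determinant lemma gives $\det(I-\ba\ba^T)=1-A$ and $\det(I-\bb\bb^T)=1-B$. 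The remaining matrix $I+(\alpha-1)\ba\ba^T-\alpha\bb\bb^T$ is an identity plus a rank-two term, and its determinant is the $2\times2$ determinant
\[
(1+(\alpha-1)A)(1-\alpha B)+\alpha(\alpha-1)(\ba\cdot\bb)^2 .
\]
The last term is nonnegative, so the worst case is $\ba\perp\bb$. It then remains to lower-bound
\[
F(A,B)=\frac{(1+(\alpha-1)A)(1-\alpha B)}{(1-A)^{1-\alpha}(1-B)^{\alpha}}
\qquad\text{over }[0,s]^2 .
\]

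\emph{Main obstacle: minimising $F$.} This is the step I expect to be hardest. A quick check at $A=B=s$ gives $F=1-\alpha(\alpha-1)s^2/(1-s)=1-\alpha(\alpha-1)\tau^2/(1+\tau)$, which is precisely the bound claimed. So the goal is to show that $F$ is minimised at the corner $A=B=s$. I will first try to prove that $\partial_A\log F\le 0$ and $\partial_B\log F\le0$ on the domain. Differentiating $\log F$ gives
\[
\partial_A\log F=(\alpha-1)\left(\frac{1}{1+(\alpha-1)A}-\frac{1}{1-A}\right)\le0,\qquad
\partial_B\log F=\alpha\left(\frac{1}{1-B}-\frac{1}{1-\alpha B}\right)\le0 .
\]
So $F$ is nonincreasing in each coordinate wherever $1-\alpha B>0$. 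That condition holds because $\alpha<1+1/\tau$ implies $\alpha s<1$. If these sign checks survive the details, the minimum is at $(s,s)$. Substituting into the Gil formula then gives $\psi_\alpha(\tau)$, and adding the mean term completes the proposition and hence Theorem~\ref{thm:main_rdp}.
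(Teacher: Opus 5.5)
Your proof is correct. The degeneracy reduction and the mean-term bound match the paper's, but you handle the log-determinant term by a genuinely different route.

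The paper whitens relative to $K_{D'}$. It forms $R=K_{D'}^{-1/2}K_DK_{D'}^{-1/2}$ and reads off $\mathrm{spec}(R)\subset[1/(1+\tau_n),1+\tau_n]$ from the covariance sandwich. It then uses that $K_D-K_{D'}$ is a difference of two rank-one PSD matrices, so $R$ has at most one eigenvalue above $1$ and at most one below. Finally, it bounds $\sum_j\phi_\alpha(\lambda_j)$ by $\phi_\alpha(1/(1+\tau_n))+\phi_\alpha(1+\tau_n)=\psi_\alpha(\tau_n)$, using that the scalar function $\phi_\alpha$ vanishes at $1$ and is monotone on each side.

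You instead whiten by the common-core matrix $C$, so both covariances become $I$ minus a rank-one term. The matrix determinant lemma then gives the term in closed form in $A$, $B$ and $\ba\cdot\bb$. Dropping $\alpha(\alpha-1)(\ba\cdot\bb)^2\ge0$ and using coordinatewise monotonicity of $F$ gives exactly $\psi_\alpha(\tau_n)$ at $A=B=s$. Your derivative signs are right, and $F>0$ on all of $[0,s]^2$ because $\alpha s<1$.

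What your version buys:
\begin{itemize}
\item It is more elementary: there is no inertia argument and no analysis of $\phi_\alpha$.
\item It identifies the extremal configuration explicitly. That configuration is orthogonal whitened directions $\ba\perp\bb$, both at maximal leverage $s=\tau_n/(1+\tau_n)$. This is precisely when $R$ has eigenvalues $1/(1+\tau_n)$ and $1+\tau_n$, i.e.\ when the paper's spectral bound is saturated.
\end{itemize}

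What the paper's version buys: it is more structural, using only the two-sided Loewner sandwich and the rank and inertia of $K_D-K_{D'}$. That is why it is reused essentially verbatim for the extra-noise mechanism in Proposition~\ref{prop:rdp_with_extra_gp_noise}. Your route also extends there by whitening with $\eta^2H+(\sigma^2+\eta^2)C$. This bounds the whitened leverage by $\tilde s:=\sigma^2V_n/((\sigma^2+\eta^2)(V_n+r^2))$, and $\tilde s/(1-\tilde s)=\widetilde\tau_n(r,\eta)$. The leverage bound has to be redone, though, using $\bu^TM^\dagger\bu\le\bu^T((\sigma^2+\eta^2)C)^\dagger\bu$ for $\bu$ in the range of $C$.
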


\begin{proof}
Write
\[
\Pi_D(X_T)=\mathcal{N}(\mu_D,\sigma^2K_D),
\qquad
\Pi_{D'}(X_T)=\mathcal{N}(\mu_{D'},\sigma^2K_{D'}).
\]
For multivariate Gaussian measures the R\'{e}nyi divergence is given by the following formula \citep[see][Table 2]{GIL},
\begin{equation}\label{eq:gaussian_div}
D_\alpha(\Pi_D(X_T)\|\Pi_{D'}(X_T))
=
\frac{\alpha}{2\sigma^2}(\mu_D-\mu_{D'})^TK_\alpha^{-1}(\mu_D-\mu_{D'})
-
\frac{1}{2(\alpha-1)}
\log
\frac{\det(K_\alpha)}
{\det(K_D)^{1-\alpha}\det(K_{D'})^\alpha},
\end{equation}
where
\[
K_\alpha:=\alpha K_{D'}+(1-\alpha) K_D \succ 0.
\]

Let us first consider the finiteness condition in the Gaussian R\'enyi formula. The sandwich bounds from Lemma~\ref{lem:local_common_core_sandwich} imply that $K_\alpha\succ0$ whenever $C\succ0$ and $1<\alpha<1+r^2/V_n(r)$. In general, $K_D$, $K_{D'}$, and $K_\alpha$ may be singular as matrices on $\mathbb R^{|X_T|}$. However, the same sandwich bounds imply that on this range of $\alpha$ they have the same nullspace as $C$. From \eqref{eq:update_formula}
\[
\mu_D(X_T)-\mu_{D'}(X_T)=\frac{y-\mu_{D_-}(\bx)}{v(\bx)+r^2}\,\bc(\bx)-\frac{y'-\mu_{D_-}(\bx')}{v(\bx')+r^2}\,\bc(\bx').
\]
We also have that $\bc(\bx),\bc(\bx')\in\operatorname{range}(C)$. To see this, consider inequality \eqref{ineq:C_ineq} from Lemma \ref{lem:local_common_core_sandwich} which says that for any vector $\ba$ we have $\left(\ba^T\bc(\bx)\right)^2\leq v(\bx)\,\ba^TC\ba$. Taking $\ba\in\operatorname{ker}(C)$ gives $\left(\ba^T\bc(\bx)\right)^2\leq 0$, so we neccessarily have $\ba^T\bc(\bx)=0$ for any $\ba\in\operatorname{ker}(C)$. In other words, $\bc(\bx)\in\operatorname{ker}(C)^\perp$ for any $\bx$. Since $C$ is symmetric we have $\operatorname{ker}(C)^\perp=\operatorname{range}(C)$, thus $\bc(\bx)\in\operatorname{range}(C)$ for any $\bx$. Hence, $\mu_D(X_T)-\mu_{D'}(X_T)\in\operatorname{range}(C)$. Thus, the Gaussian densities $\Pi_D(X_T)$ and $\Pi_{D'}(X_T)$ are supported on the same affine subspace determined by $C$. So, the Gaussian R\'enyi formula applies and remains finite on this common support even when $K_\alpha$ is singular with ordinary inverses and determinants replaced by Moore-Penrose pseudoinverses and pseudodeterminants.

Next, we treat the two terms in \eqref{eq:gaussian_div} separately. Let us start with the determinant term. By Lemma~\ref{lem:local_common_core_sandwich},
\[
\frac{1}{1+\tau_-(r)}\,K_{D'}
\preceq
K_D
\preceq
(1+\tau_-(r))\,K_{D'},
\qquad
\tau_-(r):=\frac{V_{D_-}(r)}{r^2}.
\]
Next, conjugating the above inequality by $K_{D'}^{-1/2}$ and noting that $K_{D'}^{-1/2}K_{D'}K_{D'}^{-1/2}=\bone$ yields
\begin{equation}\label{eq:R_sandwich}
\frac{1}{1+\tau_-(r)}\,\bone
\preceq
K_{D'}^{-1/2}K_DK_{D'}^{-1/2}
\preceq
(1+\tau_-(r))\,\bone.
\end{equation}
Define $R:=K_{D'}^{-1/2}K_DK_{D'}^{-1/2}$. By \eqref{eq:R_sandwich}, the eigenvalues of $R$ all lie in the interval
\[
\left[\frac{1}{1+\tau_-(r)},\,1+\tau_-(r)\right]\subset \left[\frac{1}{1+\tau_n(r)},\,1+\tau_n(r)\right], \quad \tau_n(r):=\frac{V_n(r)}{r^2}.
\]
We also have 
\begin{equation}\label{eq:Kalph_identity}
K_\alpha
=
K_{D'}^{1/2}\left(\alpha I-(\alpha-1)R\right)K_{D'}^{1/2}
\end{equation}
Moreover, by \eqref{eq:update_formula}
\[
K_D-K_{D'}
=
\frac{\bc(\bx') .\bc(\bx')^T}{v(\bx')+r^2}
-
\frac{\bc(\bx). \bc(\bx)^T}{v(\bx)+r^2},
\]
so $\rank(K_D-K_{D'})\le 2$ and $K_D-K_{D'}$ has at most one positive and at most one negative eigenvalue. Note that $R-I=K_{D'}^{-1/2}(K_D-K_{D'})K_{D'}^{-1/2}$, thus we also have that $\rank(R-I)\le 2$ and $R-I$ has at most one positive and at most one negative eigenvalue. Let $\lambda_1,\dots,\lambda_m$ denote the eigenvalues of $R$. Using the identity \eqref{eq:Kalph_identity}, the determinant term becomes
\begin{align*}
&-\frac{1}{2(\alpha-1)}
\log\frac{\det K_\alpha}{(\det K_D)^{1-\alpha}(\det K_{D'})^\alpha}
=
-\frac{1}{2(\alpha-1)}
\log\frac{\det(\alpha I-(\alpha-1)R)}{(\det R)^{1-\alpha}}\\
&=\sum_{j=1}^m \phi_\alpha(\lambda_j),\qquad 
\phi_\alpha(\lambda):=
-\frac12\log\lambda
-\frac{1}{2(\alpha-1)}\log\left(\alpha-(\alpha-1)\lambda\right).
\end{align*}
By calculating first and second derivatives of $\phi_\alpha(\lambda)$ it is straightforward to see that $\phi_\alpha(\lambda)$ is convex on the interval $[\,1/(1+\tau_n(r)),\,1+\tau_n(r)\,]$ ($\phi''_\alpha(\lambda)>0$) and that it attains its minimum at $\lambda=1$ where $\phi_\alpha(1)=0$. Because $R-I$ is rank-two and has at most one positive and at most one negative eigenvalue, there is at most one eigenvalue of $R$ below $1$ and at most one above $1$. All the remaining eigenvalues of $R$ are equal to $1$. Call the eigenvalues different than $1$ by $\lambda_-\geq \frac{1}{1+\tau_n(r)}$ and $\lambda_+\leq 1+\tau_n(r)$, $\lambda_-\leq \lambda_+$. We have
\[
\sum_{j=1}^m \phi_\alpha(\lambda_j) = \phi_\alpha(\lambda_-) +\phi_\alpha(\lambda_+)\leq \phi_\alpha\left(\frac{1}{1+\tau_n(r)}\right)+\phi_\alpha(1+\tau_n(r))=-\frac{\log\left(
1-\frac{\alpha(\alpha-1)\tau_n(r)^2}{1+\tau_n(r)}
\right)}{2(\alpha-1)},
\]
where in the last inequality we have used the fact that $\phi_\alpha(\lambda)$ is a decreasing function for $\lambda<1$ and an increasing function when $\lambda>1$.

It remains to bound the mean-term in \eqref{eq:gaussian_div}. By Lemma~\ref{lem:local_common_core_sandwich},
\[
K_\alpha
\succeq
\frac{r^2-(\alpha-1)V_{D_-}(r)}{V_{D_-}(r)+r^2}\,C.
\]
Since $K_\alpha$ and $C$ have the same nullspace on the admissible range of $\alpha$, we may invert this inequality on their common support by taking the Moore-Penrose pseudoinverse of both sides to obtain
\[
K_\alpha^{\dagger}
\preceq
\frac{V_{D_-}(r)+r^2}
{r^2-(\alpha-1)V_{D_-}(r)}\,C^{\dagger}.
\]
Therefore, with $\delta_T:=\mu_D(X_T)-\mu_{D'}(X_T)$,
\[
\frac{\alpha}{2\sigma^2}\delta_T^TK_\alpha^{\dagger}\delta_T
\le
\frac{\alpha\left(k_{D_-}(\bx,\cdot)(r)+r^2\right)}
{2\sigma^2\left(r^2-(\alpha-1)V_{D_-}(r)\right)}
\,\delta_T^T C^{\dagger} \delta_T.
\]
Taking the supremum over all neighbouring pairs and all finite test sets \(X_T\), and using \(V_{D_-}(r)\le V_n(r)\), yields
\[
\frac{\alpha}{2\sigma^2}\delta_T^TK_\alpha^{\dagger}\delta_T
\le
\frac{\alpha\left(V_n(r)+r^2\right)}
{2\sigma^2\left(r^2-(\alpha-1)V_n(r)\right)}
\,\Delta_n^2(r).
\]

Combining the determinant and mean bounds proves the claim.
\end{proof}
The following lemma allows us to explicitly take the supremum over $X_T$ in the definition of $\Delta_n(r)$ as originally stated in Definition \ref{def:Vn_Hn}. 
\begin{lemma}[$\Delta_n(r)$ as RKHS norm]
\label{lem:Hn_RKHS_reduction}
Let $D = D_- \cup \{(\bx,y)\}$ and $D' = D_- \cup \{(\bx',y')\}$ be replace-one neighbouring datasets. Then $\mu_D-\mu_{D'}\in \mathcal H_{-}$, where $\mathcal H_{-}$ denotes the reproducing kernel Hilbert space associated with the common-core posterior covariance kernel $k_{D_-}$. For any finite test set $X_T\subset \Omega_X$ define $\delta_T:=\mu_D(X_T)-\mu_{D'}(X_T)$. Then
\[
\delta_T^T\,k_{D_-}(X_T,X_T)^{\dagger}\,\delta_T
\le
\|\mu_D-\mu_{D'}\|_{\mathcal H_{-}}^2.
\]
Equality holds whenever $\{\bx,\bx'\}\subset X_T$. Consequently,
\[
\Delta_n(r)=\sup_{D\sim D'}\|\mu_D-\mu_{D'}\|_{\mathcal H_{-}}.
\]
\end{lemma}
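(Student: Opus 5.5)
The plan is to express $\mu_D-\mu_{D'}$ explicitly as an element of $\mathcal H_-$ and then invoke the standard variational characterisation of finite-dimensional interpolation in an RKHS. By the update formulas \eqref{eq:update_formula}, which hold for every test point and hence pointwise as functions on $\Omega_X$, we have
\[
\mu_D-\mu_{D'}=a\,k_{D_-}(\cdot,\bx)-a'\,k_{D_-}(\cdot,\bx'),\qquad a:=\frac{y-\mu_{D_-}(\bx)}{v(\bx)+r^2},\quad a':=\frac{y'-\mu_{D_-}(\bx')}{v(\bx')+r^2}.
\]
Since $k_{D_-}$ is a positive semidefinite kernel, it has an RKHS $\mathcal H_-$ containing every section $k_{D_-}(\cdot,\bz)$. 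Hence $h:=\mu_D-\mu_{D'}\in\mathcal H_-$, and its norm is $\|h\|_{\mathcal H_-}^2=\ba_0^T G\ba_0$, where $\ba_0=(a,-a')^T$ and $G=k_{D_-}(\{\bx,\bx'\},\{\bx,\bx'\})$.

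For the inequality, fix a finite $X_T$ and write $C=k_{D_-}(X_T,X_T)$ and $\delta_T=h(X_T)$. By the reproducing property, $\delta_T=\big(\langle h,k_{D_-}(\cdot,\bz_j)\rangle_{\mathcal H_-}\big)_j$. Let $P$ be the orthogonal projection of $\mathcal H_-$ onto $S:=\mathrm{span}\{k_{D_-}(\cdot,\bz_j):\bz_j\in X_T\}$. Then $Ph=\sum_j b_jk_{D_-}(\cdot,\bz_j)$ for some $\bb$ satisfying $C\bb=\delta_T$, because $h-Ph\perp S$. It follows that
\[
\|Ph\|^2=\bb^TC\bb=\delta_T^TC^\dagger\delta_T,
\]
using $CC^\dagger C=C$ and $\delta_T\in\mathrm{range}(C)$. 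The latter fact follows from $C\bb=\delta_T$; it was also shown in the proof of Proposition~\ref{prop:rdp_main}. Finally, $\|Ph\|^2\le\|h\|^2$. The only care needed is the singular case: I will verify that $\bb^TC\bb$ does not depend on which solution $\bb$ of $C\bb=\delta_T$ is chosen, since any two solutions differ by an element of $\ker C$.

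For equality, if $\{\bx,\bx'\}\subset X_T$ then $h\in S$, so $Ph=h$ and the inequality is tight. For the consequence, take the supremum over $X_T$ in the appendix definition of $\Delta_n(r)$. The inequality gives $\le$. Choosing $X_T\supseteq\{\bx,\bx'\}$ for each pair attains $\|h\|_{\mathcal H_-}$, which gives $\ge$ and recovers Definition~\ref{def:Vn_Hn}.

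The main obstacle is handling the pseudoinverse when $C$ is singular and identifying $\delta_T^TC^\dagger\delta_T$ with a projection norm. This is routine once $\delta_T\in\mathrm{range}(C)$ is established.
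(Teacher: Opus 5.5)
Your proposal is correct and follows the paper's route. Both arguments write $\mu_D-\mu_{D'}$ as a combination of $k_{D_-}(\cdot,\bx)$ and $k_{D_-}(\cdot,\bx')$, apply the RKHS interpolation inequality on $X_T$, obtain equality when both sections lie in the span over $X_T$, and then take the supremum. The only difference is that the paper cites \citet[Proposition 8]{hall13} for the inequality and checks equality directly via $GG^\dagger G=G$, whereas your orthogonal-projection argument proves both the inequality and the equality case self-containedly, including the singular case.
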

\begin{proof}
Fix \(D,D'\) and write $g:=\mu_D-\mu_{D'}$. By the one-step update formula \eqref{eq:update_formula}, if $\bx$ and $\bx'$ are the differing inputs in $D$ and $D'$, then
\[
g(\cdot)
=
\eta\,k_{D_-}(\cdot,\bx)-\eta'\,k_{D_-}(\cdot,\bx')
\]
for suitable scalars $\eta,\eta'$. Hence $g\in \mathcal H_{-}$. Next, we use the general standard fact that for any $h$ from RKHS of some kernel $\tilde k$ and for any finite sequence of distinct points $X_T\subset \Omega_X$ we have  \citep[see][Proposition 8]{hall13}
\begin{equation}\label{eq:interpolation}
h(X_T)^T \tilde K(X_T,X_T)^{-1} h(X_T)\leq \|h\|_{\mathcal H_{\tilde k}}^2.
\end{equation}
In the possibly singular case the same statement holds with the Moore-Penrose pseudoinverse. The equality in \eqref{eq:interpolation} holds when $h(\cdot) = \sum_{i=1}^m a_i \tilde k(\cdot,\bx_i)$ with $(\bx_1,\dots,\bx_m)\subset X_T$. To see this, expand $\|h\|_{\mathcal H_{\tilde k}}^2 = \sum_{i,j}a_ia_j \tilde k(\bx_j,\bx_i)$. On the other hand, denoting $G:=\tilde K(X_T,X_T)$ we have
\begin{align*}
h(X_T)^T G^\dagger h(X_T) & =\sum_{i,j}a_i a_j \tilde k(X_T,\bx_j)^T G^\dagger\tilde k(X_T,\bx_i)=\sum_{i,j}a_i a_j\be_j^T G G^\dagger G \be_i 
\\
& = \sum_{i,j}a_i a_j\be_j^TG \be_i = \sum_{i,j}a_i a_j\tilde k(\bx_j,\bx_i),
\end{align*}
where in the second equality we have used the fact that $\tilde k(X_T,\bx_i)=G\be_i$ with $\be_i$ being the $i$-th unit vector in $\mathbb R^{|X_T|}$, in the third equality we have used the Moore-Penrose property $G G^\dagger G=G$ and in the last equality we have used the fact that $\be_j^TG \be_i=G_{ji}=\tilde k(\bx_j,\bx_i)$.

The final inequality is obtained by putting $h = g$ and $\tilde k = k_{D_-}$ in \eqref{eq:interpolation}. This means that 
\[
\sup_{|X_T|<\infty}\delta_T^T\,k_{D_-}(X_T,X_T)^{\dagger}\,\delta_T = \|\mu_D-\mu_{D'}\|_{\mathcal H_{-}}^2,
\]
proving that $\Delta_n(r)^2=\sup_{D\sim D'}\|\mu_D-\mu_{D'}\|_{\mathcal H_{-}}^2$.
\end{proof}

\subsection{Adding prior GP noise to the released sample path}
\label{app:extra_gp_noise}

We now consider a simple modification of the release mechanism in which one first draws
\[
f_D(X_T)\sim \mathcal{N}\left(\mu_D(X_T),\sigma^2K_D\right),
\qquad
K_D:=k_D(X_T,X_T),
\]
from the posterior and then adds an independent Gaussian perturbation drawn from the GP prior
\[
g_T\sim \mathcal{N}\left(0,\eta^2K(X_T,X_T)\right),
\qquad \eta\ge0.
\]
The released vector is therefore
\[
\widetilde f_D(X_T):=f_D(X_T)+g_T.
\]
If \(\widetilde \Pi_D\) denotes the resulting sample path distribution, then
\[
\widetilde \Pi_D(X_T)
=
\mathcal{N}\left(\mu_D(X_T),\widetilde K_D\right),
\qquad
\widetilde K_D
:=
\sigma^2K_D+\eta^2K(X_T,X_T).
\]
The next proposition shows that the same proof strategy as in
Proposition~\ref{prop:rdp_main} yields an RDP bound for this modified
mechanism. The additional GP noise improves both the covariance
 term and the posterior-mean term of the original RDP bound.

\begin{proposition}[RDP bound with additional GP noise]
\label{prop:rdp_with_extra_gp_noise}
Let $D,D'$ be replace-one neighbouring labelled datasets of size $n$.
On a finite test set $X_T$ consider the modified release
\[
\widetilde f_D(X_T)
=
f_D(X_T)+g_T,
\qquad
g_T\sim \mathcal{N}\left(0,\eta^2K(X_T,X_T)\right),
\]
where $g_T$ is independent of $f_D(X_T)$. Denote the distribution of
$\widetilde f_D(X_T)$ by
\[
\widetilde\Pi_D(X_T)
=
\mathcal{N}\left(\mu_D(X_T),\widetilde K_D\right),
\qquad
\widetilde K_D
:=
\sigma^2K_D+\eta^2K(X_T,X_T),
\]
and define $\widetilde\Pi_{D'}(X_T)$ analogously. Then, for every
$\alpha$ such that
\[
1<\alpha<
1+\frac{1}{\widetilde\tau_n(r,\eta)},\qquad 
\widetilde\tau_n(r,\eta)
:=
\frac{\sigma^2V_n(r)}
{\sigma^2r^2+\eta^2\left(V_n(r)+r^2\right)}
\]
one has
\[
D_\alpha\left(
\widetilde\Pi_D(X_T)\middle\|\widetilde\Pi_{D'}(X_T)
\right)
\le
\psi_\alpha\left(\widetilde\tau_n(r,\eta)\right)
+
\frac{\alpha}{2}\frac{V_n(r)+r^2}
{
\sigma^2\left(r^2-(\alpha-1)V_n(r)\right)
+
\eta^2\left(V_n(r)+r^2\right)
}
\,\Delta_n^2(r).
\]
\end{proposition}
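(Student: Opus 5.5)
The plan is to rerun the proof of Proposition~\ref{prop:rdp_main} with $C$ replaced by a noise-augmented reference matrix. Set $K_T:=K(X_T,X_T)$ and
\[
G:=\sigma^2C+\eta^2K_T .
\]
By \eqref{eq:update_formula}, $\widetilde K_D=G-\sigma^2\,\bc(\bx)\bc(\bx)^T/(v(\bx)+r^2)$, and likewise for $\widetilde K_{D'}$. So $\widetilde K_D-\widetilde K_{D'}$ is still the same rank-two difference, scaled by $\sigma^2$, with at most one positive and at most one negative eigenvalue. The one new ingredient is the ordering $C\preceq K_T$. It holds because $C=K_{D_-}(X_T,X_T)=K_T-K(X_T,X_-)(K(X_-,X_-)+r^2I)^{-1}K(X_-,X_T)$, and the subtracted term is positive semidefinite.

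\textbf{Step 1 (noisy covariance sandwich).} The upper bound $\widetilde K_D\preceq G$ is immediate. For the lower bound, inequality \eqref{ineq:C_ineq} gives
\[
G-\widetilde K_D\preceq \frac{\sigma^2V_n(r)}{V_n(r)+r^2}\,C .
\]
I want to show this is at most $\frac{\widetilde\tau_n}{1+\widetilde\tau_n}G$. A direct computation gives
\[
\frac{\widetilde\tau_n}{1+\widetilde\tau_n}=\frac{\sigma^2V_n}{(\sigma^2+\eta^2)(V_n+r^2)} .
\]
The required inequality therefore reduces to $(\sigma^2+\eta^2)C\preceq\sigma^2C+\eta^2K_T$, which is exactly $C\preceq K_T$. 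Hence
\[
\frac{1}{1+\widetilde\tau_n}\,G\preceq\widetilde K_D\preceq G,
\]
the same holds for $D'$, and so $\frac{1}{1+\widetilde\tau_n}\widetilde K_{D'}\preceq\widetilde K_D\preceq(1+\widetilde\tau_n)\widetilde K_{D'}$.

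\textbf{Step 2 (determinant term).} Let $\widetilde R:=\widetilde K_{D'}^{-1/2}\widetilde K_D\widetilde K_{D'}^{-1/2}$. By Step~1 its eigenvalues lie in $[1/(1+\widetilde\tau_n),\,1+\widetilde\tau_n]$, and by the rank-two structure all but at most two of them equal $1$. For $\alpha<1+1/\widetilde\tau_n$ we have $\alpha-(\alpha-1)\lambda>0$ on that interval. Then the convexity and monotonicity argument for $\phi_\alpha$ from the proof of Proposition~\ref{prop:rdp_main} carries over verbatim and yields the bound $\psi_\alpha(\widetilde\tau_n(r,\eta))$.

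\textbf{Step 3 (mean term).} Write
\[
\widetilde K_\alpha:=\alpha\widetilde K_{D'}-(\alpha-1)\widetilde K_D=\sigma^2K_\alpha+\eta^2K_T .
\]
Lemma~\ref{lem:local_common_core_sandwich} gives $\sigma^2K_\alpha\succeq\sigma^2\frac{r^2-(\alpha-1)V_n}{V_n+r^2}C$, and $\eta^2K_T\succeq\eta^2C$. Adding these,
\[
\widetilde K_\alpha\succeq\frac{\sigma^2(r^2-(\alpha-1)V_n)+\eta^2(V_n+r^2)}{V_n+r^2}\,C .
\]
I expect this step to need the most care. First, $\alpha$ may exceed $1+r^2/V_n$, so $K_\alpha$ alone need not be positive semidefinite. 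Positivity of $\widetilde K_\alpha$ itself still follows from Step~1: $\widetilde K_\alpha\succeq(1-(\alpha-1)\widetilde\tau_n)\,G/(1+\widetilde\tau_n)$, which is positive on the admissible range. The coefficient above is positive there as well, because $1-(\alpha-1)\widetilde\tau_n>0$ is exactly equivalent to $\sigma^2(r^2-(\alpha-1)V_n)+\eta^2(V_n+r^2)>0$.

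Second, there is the singular case. As before, $\delta_T\in\operatorname{range}(C)\subseteq\operatorname{range}(G)$. Both Gaussians are supported on the affine subspace given by $\operatorname{range}(G)$, and on it $\widetilde K_\alpha\succ0$. Inverting the displayed inequality on $\operatorname{range}(C)$ with pseudoinverses gives
\[
\delta_T^T\widetilde K_\alpha^{\dagger}\delta_T\le\frac{V_n+r^2}{\sigma^2(r^2-(\alpha-1)V_n)+\eta^2(V_n+r^2)}\,\delta_T^TC^\dagger\delta_T .
\]
Finally, $\delta_T^TC^\dagger\delta_T\le\Delta_n(r)^2$ by Lemma~\ref{lem:Hn_RKHS_reduction}. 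Inserting this into the Gaussian R\'enyi formula \eqref{eq:gaussian_div} (with $\widetilde K$ in place of $\sigma^2K$) and adding the determinant bound from Step~2 gives the stated result.
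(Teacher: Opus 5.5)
Your proof is correct and is essentially the paper's argument. Your $G=\sigma^2C+\eta^2K_T$ is the paper's upper envelope $\eta^2H+(\sigma^2+\eta^2)C$, and your ordering $C\preceq K_T$ is the paper's $H\succeq0$. The determinant and mean-term steps coincide. The one streamlining is that you work with $V_n$ directly, rather than with $V_{D_-}$ followed by monotonicity of $\widetilde\tau$ in $V$.

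The one step stated too loosely is the singular-case inversion in Step~3. Let $\gamma$ denote the positive coefficient in your last display, so that $\widetilde K_\alpha\succeq\gamma C$. Knowing only $\operatorname{range}(C)\subseteq\operatorname{range}(G)$, you cannot invert this inequality with pseudoinverses ``as before''. If the inclusion were strict, the operator inequality $\widetilde K_\alpha^\dagger\preceq\gamma^{-1}C^\dagger$ would fail at any nonzero $\bu\in\operatorname{range}(G)\cap\ker C$. This is exactly why the paper changes tools at this point.

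The scalar bound you actually need, for $\delta_T\in\operatorname{range}(C)$, is nevertheless true, and you can close the step in either of two ways:
\begin{enumerate}
\item Use the paper's identity $\bu^TA^\dagger\bu=\sup_{\bv}\{2\bu^T\bv-\bv^TA\bv\}$, valid for $A\succeq0$ and $\bu\in\operatorname{range}(A)$, with $A=\widetilde K_\alpha$ and $A=\gamma C$. This needs only $\operatorname{range}(C)\subseteq\operatorname{range}(\widetilde K_\alpha)$.
\item Observe that for $r>0$ one has $C\succeq\frac{r^2}{r^2+\lambda_{\max}(K(X_-,X_-))}K_T$, so $\ker C=\ker K_T$. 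Then $C$, $G$, $\widetilde K_D$, $\widetilde K_{D'}$ and $\widetilde K_\alpha$ all share one range. Your inversion is then literally the one used in Proposition~\ref{prop:rdp_main}.
\end{enumerate}
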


\begin{proof}
 Let $K_T:=K(X_T,X_T)$. Since
\[
C
=
K_T
-
K(X_T,X_-)
\left(K(X_-,X_-)+r^2I\right)^{-1}
K(X_-,X_T),
\]
we can write
\[
K_T=C+H,
\qquad
H\succeq0.
\]
Using the one-step update covariance identities \eqref{eq:update_formula}, we obtain
\begin{equation}\label{eq:noise_update}
\widetilde K_D
=
\eta^2H+(\sigma^2+\eta^2)C
-
\sigma^2\frac{\bc(\bx).\bc(\bx)^T}{v(\bx)+r^2},
\qquad
\widetilde K_{D'}
=
\eta^2H+(\sigma^2+\eta^2)C
-
\sigma^2\frac{\bc(\bx').\bc(\bx')^T}{v(\bx')+r^2}.
\end{equation}

As in the proof of Lemma~\ref{lem:local_common_core_sandwich}, the positive-semidefiniteness of block matrices involving $C,\bc(\bx),v(\bx)$ and $C,\bc(\bx),v(\bx')$ implies that $\bc(\bx).\bc(\bx)^T\preceq v(\bx)\,C$ and  $\bc(\bx').\bc(\bx')^T\preceq v(\bx')\,C$. Since $v(\bx),v(\bx')\le V_{D_-}(r)$, both noisy covariance matrices satisfy
\[
\widetilde K_D,\widetilde K_{D'}
\succeq
\eta^2H+
\left(
\eta^2+\sigma^2\frac{r^2}{V_{D_-}(r)+r^2}
\right)C.
\]
Since $\bc(\bx).\bc(\bx)^T, \bc(\bx').\bc(\bx')^T\succeq 0$, we also have
\[
\widetilde K_D,\widetilde K_{D'}
\preceq
\eta^2H+(\sigma^2+\eta^2)C.
\]
Next, define $\widetilde\tau_-(r,\eta)$ through
\[
1+\widetilde\tau_-(r,\eta)
=
\frac{\sigma^2+\eta^2}
{\eta^2+\sigma^2r^2/(V_{D_-}(r)+r^2)}.
\]
The preceding bounds imply the sandwich relation
\[
\frac{1}{1+\widetilde\tau_-(r,\eta)}\,\widetilde K_{D'}
\preceq
\widetilde K_D
\preceq
\left(1+\widetilde\tau_-(r,\eta)\right)\widetilde K_{D'}.
\]

Moreover, from \eqref{eq:noise_update} we also have
\[
\widetilde K_D-\widetilde K_{D'}
=
\sigma^2(K_D-K_{D'})
=
\sigma^2\left(
\frac{\bc(\bx').\bc(\bx')^T}{v(\bx')+r^2}
-
\frac{\bc(\bx).\bc(\bx)^T}{v(\bx)+r^2}
\right),
\]
and hence $\rank(\widetilde K_D-\widetilde K_{D'})\le2$. Thus, exactly as in the determinant argument from the proof of
Proposition~\ref{prop:rdp_main}, we get
\[
-
\frac{1}{2(\alpha-1)}
\log
\frac{\det(\widetilde K_\alpha)}
{\det(\widetilde K_D)^{1-\alpha}\det(\widetilde K_{D'})^\alpha}
\le
\psi_\alpha\left(\widetilde\tau_-(r,\eta)\right),
\]
where $\widetilde K_\alpha
:=
\alpha\widetilde K_{D'}+(1-\alpha)\widetilde K_D$. As before, in the singular case, this determinant expression is understood on the common support, with determinants replaced by pseudodeterminants.

Since $V_{D_-}(r)\le V_n(r)$, and the map
\[
V\mapsto
\frac{\sigma^2V}{\sigma^2r^2+\eta^2(v(\bx)+r^2)}
\]
is non-decreasing on $V\in [0,\infty)$, we have $\widetilde\tau_-(r,\eta)\le \widetilde\tau_n(r,\eta)$.  Using that $\psi_\alpha$ is non-decreasing in its argument, the determinant term is bounded by $2\,\psi_\alpha\left(\widetilde\tau_n(r,\eta)\right)$.

It remains to bound the mean term. By Lemma~\ref{lem:local_common_core_sandwich},
\[
K_\alpha
=
\alpha K_{D'}+(1-\alpha)K_D
\succeq
\frac{r^2-(\alpha-1)V_{D_-}(r)}
{V_{D_-}(r)+r^2}\,C.
\]
Therefore $\widetilde K_\alpha
=
\sigma^2K_\alpha+\eta^2K_T
=
\sigma^2K_\alpha+\eta^2(C+H)
$
satisfies
\begin{equation}\label{eq:tildeKalpha_bound}
\widetilde K_\alpha\succeq
\gamma
\,C,\qquad \gamma:=\sigma^2
\frac{r^2-(\alpha-1)V_{D_-}(r)}
{V_{D_-}(r)+r^2}+\eta^2
\end{equation}
The assumed range of $1<\alpha<1+1/\widetilde\tau_n(r,\eta)$ ensures that the scalar prefactor $\gamma>0$. Since with $\eta\neq 0$ the matrix $\widetilde K_\alpha$ may no longer have the same nullspace as $C$, we cannot apply the Moore-Penrose pseudoinverse directly to \eqref{eq:tildeKalpha_bound} as we did in the proof of Proposition \ref{prop:rdp_main}. However, for any matrix $A$ and any $\bu \in \operatorname{range}(A)$ we have 
\[
\bu^T A^\dagger \bu=\sup_{\bv}\left\{2\bu^T\bv-\bv^TA\bv\right\}.
\]
Since $\delta_T\in\operatorname{range}(C)$ also belongs to the $\operatorname{range}(\widetilde K_\alpha)$ (by the bound \eqref{eq:tildeKalpha_bound}), we can take $A=\widetilde K_\alpha,C$ and $u=\delta_T$ to get 
\[
\delta_T^T\widetilde K_\alpha^\dagger\delta_T=\sup_{\bv}\left\{2\delta_T^T\bv-\bv^T\widetilde K_\alpha\bv\right\}\leq \sup_{\bv}\left\{2\delta_T^T\bv-\bv^T\left(\gamma C\right)\bv\right\}=\gamma^{-1}\,\delta_T^TC^\dagger\delta_T.
\]
Taking the supremum over neighbouring pairs and finite test sets, using
$V_{D_-}(r)\le V_n(r)$, and plugging the expression for $\gamma$ gives the following final bound for the Gaussian R\'enyi mean term
\[
\frac{\alpha}{2}\delta_T^T\widetilde K_\alpha^\dagger\delta_T
\le
\frac{\alpha\left(V_n(r)+r^2\right)}
{2\left[
\sigma^2\left(r^2-(\alpha-1)V_n(r)\right)
+
\eta^2\left(V_n(r)+r^2\right)
\right]}
\,\Delta_n^2(r).
\]

Combining the determinant and mean bounds for
$D_\alpha\left(\widetilde\Pi_D(X_T)\|\widetilde\Pi_{D'}(X_T)\right)$ proves the claim.
\end{proof}

\section{BOUNDS FOR $V_n$, $\Delta_n$ AND SPECIAL CASES}
\label{app:Vn_Deltan_bounds}
We first present the general bounds for $V_n$.
\begin{proposition}[Upper bound for $V_n(r)$.]
\label{prop:general_positive_kernel_exact_Vn}
Let $r>0$ and let $k:\Omega_X\times\Omega_X\to\mathbb R$ be a positive semidefinite
kernel satisfying $\sup_{\bx\in\Omega_X} k(\bx,\bx)=1$. Suppose that $k$ admits a nonnegative lower bound $\kappa:=\inf_{\bx,\bx'\in\Omega_X} k(\bx,\bx')\ge 0$.  Then, for every $n\ge 1$
\[
V_n(r)\le  \bar V_n(r):=1-\frac{n-1}{n-1+r^2}\kappa^2.
\]
If, additionally $k(x,x)=1$ for all $x\in\Omega_X$, then this upper bound is exact, namely $V_n(r)=  \bar V_n(r)$.
\end{proposition}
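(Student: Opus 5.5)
The plan is to bound the posterior variance by a variational (best linear predictor) characterisation. For fixed $X_-=(\bx_1,\dots,\bx_{n-1})$ and a test point $\bx$, the quantity $k_{D_-}(\bx,\bx)=k(\bx,\bx)-\bk_{X_-}(\bx)^T(K_{X_-X_-}+r^2I)^{-1}\bk_{X_-}(\bx)$ is the minimal mean-square error of predicting $f(\bx)$ from noisy observations $f(\bx_i)+\varepsilon_i$, where $f\sim\mathcal{GP}(0,k)$ and $\varepsilon_i\sim\mathcal N(0,r^2)$ are independent. Equivalently,
\[
k_{D_-}(\bx,\bx)=\min_{\ba\in\RR^{n-1}}\Big\{k(\bx,\bx)-2\ba^T\bk_{X_-}(\bx)+\ba^T(K_{X_-X_-}+r^2I)\ba\Big\}.
\]
Any choice of $\ba$ therefore gives an upper bound. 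I would take the equal-weight predictor $\ba=a\bone$ with a scalar $a\ge0$.

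Using the bounds $k(\bx,\bx)\le1$, $k(\bx,\bx_i)\ge\kappa$ and $k(\bx_i,\bx_j)\le1$ (the last by Cauchy--Schwarz with the unit diagonal bound), and writing $m:=n-1$, I get
\[
k_{D_-}(\bx,\bx)\le 1-2am\kappa+a^2(m^2+mr^2).
\]
The nonnegativity of $a$ is needed here so that $-2a\,k(\bx,\bx_i)\le-2a\kappa$, and the entries of $K_{X_-X_-}$ are bounded crudely by $1$. Minimising over $a$ gives $a=\kappa/(m+r^2)$, which is nonnegative because $\kappa\ge0$. The resulting bound is $1-m\kappa^2/(m+r^2)=\bar V_n(r)$. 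Since this is uniform in $\bx$ and $X_-$, it yields $V_n(r)\le\bar V_n(r)$. The case $n=1$ is trivial.

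For exactness when $k(\bx,\bx)=1$ for all $\bx$, I need configurations where the bound is attained or approached. The natural candidate is to take all $\bx_i$ equal to a single point $\bx_0$ and to choose the test point $\bx$ so that $k(\bx,\bx_0)$ is as close as possible to $\kappa$. With repeated points $K_{X_-X_-}=\bone\bone^T$, and the Sherman--Morrison formula gives
\[
k_{D_-}(\bx,\bx)=1-\frac{m\,k(\bx,\bx_0)^2}{m+r^2}.
\]
Taking a sequence of pairs $(\bx,\bx_0)$ with $k(\bx,\bx_0)\to\kappa$ pushes the supremum up to $\bar V_n(r)$. This requires the supremum in the definition of $V_n$ to allow repeated covariates, which the replace-one setup does not exclude. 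If distinct covariates were required, I would cluster the points near $\bx_0$ and use continuity; for a general, possibly discontinuous, kernel I would simply note that repetition is permitted.

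The main obstacle is this attainment step. The upper bound is routine, but exactness depends on the infimum $\kappa$ being approached by some pair. It also depends on the equal-weight predictor being optimal, which holds only when all the cross terms $k(\bx_i,\bx_j)$ equal $1$ and all the $k(\bx,\bx_i)$ equal $\kappa$ simultaneously. The repeated-point construction is chosen precisely so that both conditions hold.
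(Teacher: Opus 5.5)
Your proof is correct and follows the paper's argument. Restricting the variational characterisation to $\ba\in\mathrm{span}(\mathbf 1)$ is the paper's Cauchy--Schwarz step in the $(K+r^2I)$-inner product with the all-ones vector, and your exactness construction ($n-1$ repeated copies of one covariate, Sherman--Morrison, pairs with $k\to\kappa$) is the one the paper uses.
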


\begin{proof}
By the assumptions, we have $0\le k(\bx,\bx)\le 1$ for all $\bx\in\Omega_X$. Define 
\[
a_n:=\frac{n-1}{n-1+r^2}.
\]
The case $n=1$ is immediate. Since the common core $D_-$ is empty, $V_1(r)=\sup_{\bx\in\Omega_X}k(x,x)=1=1-a_1\kappa^2$, because $a_1=0$. Next, consider $n>1$ and denote $n_-:=n-1$. Then $D_-$ is of size $n_-$ and denote it's covariate set as $X_-=\{\bx_1,\dots,\bx_{n_-}\}$. For $\bx\in\Omega_X$, write
\[
K:=K(X_-,X_-),\qquad
\bk_x:=K(X_-,\bx),\qquad
A:=K+r^2I.
\]
Then, $k_{D_-}(\bx,\bx')
=
k(\bx,\bx')-\bk_x^TA^{-1}\bk_{x'}
$. We first prove the upper bound. Let $\mathbf 1\in\mathbb R^{n_-}$ denote
the all-ones vector. Since $A$ is positive definite, we can apply Cauchy-Schwarz inequality with respect to the $A$-inner product
$\langle \bu,\bv\rangle_A=\bu^TA\bv$ with $u=\mathbf 1$ and
$v=A^{-1}\bk_x$ which gives
\[
(\mathbf 1^T\bk_x)^2
\le
(\mathbf 1^TA\mathbf 1)(\bk_x^TA^{-1}\bk_x),
\]
and thus
\[
\bk_x^TA^{-1}\bk_x
\ge
\frac{(\mathbf 1^T\bk_x)^2}{\mathbf 1^TA\mathbf 1}
=
\frac{(\mathbf 1^T\bk_x)^2}{\mathbf 1^TK\mathbf 1+r^2 n_-}.
\]
By the kernel lower bound,
\[
\mathbf 1^Tk_x
=
\sum_{i=1}^{n_-} k(\bx_i,\bx)
\ge
n_-\kappa.
\]
Since $\kappa\ge0$, this implies $(\mathbf 1^Tk_x)^2\ge n_-^2\kappa^2$, thus
\[
\bk_x^TA^{-1}\bk_x \ge \kappa^2\frac{n_-^2}{\mathbf 1^TK\mathbf 1+r^2 n_-}.
\]
Next, since $k$ is positive semidefinite and normalised, we have $|k(\bx_i,\bx_j)|
\le
\sqrt{k(\bx_i,\bx_i)k(\bx_j,\bx_j)}
\le
1$, 
and hence
\[
\mathbf 1^TK\mathbf 1
=
\sum_{i,j=1}^{n_-} k(\bx_i,\bx_j)
\le
n_-^2.
\]
It follows that
\[
\bk_x^TA^{-1}\bk_x
\ge
\frac{n_-^2\kappa^2}{n_-^2+r^2 n_-}
=
\frac{n_-\kappa^2}{n_-+r^2}
=
a_n\kappa^2.
\]
Substituting into the expression for $k_{D_-}(x,x)$ gives
\[
k_{D_-}(x,x)
\le
k(x,x)-a_n\kappa^2
\le
1-a_n\kappa^2.
\]
Taking the supremum over $x$ and over all common cores gives $V_n(r)\le 1-a_n\kappa^2$.

Now assume that $k(x,x)=1$ for all $x\in\Omega_X$. We prove the reverse inequality. By definition of $\kappa$, there exists a sequence of pairs $\left(\bx_1^{(j)},\bx_2^{(j)}\right)$ such that
\[
k\left(\bx_1^{(j)},\bx_2^{(j)}\right)\xrightarrow{j\to\infty} \kappa.
\]
For each $j$, let the common core $D_-^{(j)}$ consist of $n_-$ copies of $\bx_1^{(j)}$. Then
\[
K\left(X_-^{(j)},X_-^{(j)}\right)=\mathbf 1\mathbf 1^T,\qquad
\bk_x=k\left(\bx,\bx_1^{(j)}\right)\mathbf 1,\qquad
A=\mathbf 1\mathbf 1^T+r^2I.
\]
By the Sherman--Morrison formula,
\[
\left(\mathbf 1\mathbf 1^T+r^2I\right)^{-1}
=
\frac{1}{r^2}I
-
\frac{1}{r^2(n_-+r^2)}\,\mathbf 1\mathbf 1^T.
\]
Consequently, for any $\bx,\bx'\in\Omega_X$,
\[
k_{D_-}^{(j)}(\bx,\bx') = k(\bx,\bx') - k\left(\bx,\bx_1^{(j)}\right)k\left(\bx',\bx_1^{(j)}\right)\mathbf 1^T \left(\mathbf 1\mathbf 1^T+r^2I\right)^{-1}\mathbf 1
=
k(\bx,\bx')-a_n k\left(\bx,\bx_1^{(j)}\right)k\left(\bx',\bx_1^{(j)}\right).
\]
Taking $\bx=\bx'=\bx_2^{(j)}$, and using $k\left(\bx_2^{(j)},\bx_2^{(j)}\right)=1$, gives
\[
k_{D_-}^{(j)}\left(\bx_2^{(j)},\bx_2^{(j)}\right)
=
1-a_n k\left(\bx_1^{(j)},\bx_2^{(j)}\right)^2
\xrightarrow{j\to\infty}
1-a_n\kappa^2.
\]
Thus, $V_n(r)\ge 1-a_n\kappa^2$. Combining this with the upper bound proves $V_n(r)=1-a_n\kappa^2$.
\end{proof}

The problem of finding bounds for $\Delta_n(r)$ is more subtle. Below, we treat several special cases which illustrate how $\Delta_n(r)$ can exhibit a range of different behaviours with respect to $n$. Throughout, we restrict our considerations to response models with the response bounded in absolute value by a constant $M_Y>0$. We start with the generic upper bound that holds for general bounded-response models.

\begin{lemma}[Generic bounded-response bound for $\Delta_n(r)$]
\label{lemma:boundedResp_Delta_bound}
Assume $k(\bx,\bx)\le 1$ for all $\bx\in\Omega_X$, and suppose that all
training responses satisfy $|y_i|\le M_Y$. Then
\begin{equation}\label{eq:delta_gen1}
\Delta_n(r)
\le \bar\Delta_n(r):=
2M_Y\max_{0\le u\le V_n(r)}
\frac{
\sqrt u+
\frac{\sqrt{n-1}}{r}u\sqrt{1-u}
}{r^2+u}\leq \frac{M_Y}{r}\left(1+\frac{\sqrt{n-1}}{2r}\right)=\mathcal O(\sqrt{n}).
\end{equation}
We also have
\begin{equation}\label{eq:delta_gen2}
\Delta_n(r)
\le \hat\Delta_n(r):=\frac{M_Y\sqrt{n-1}}{r}\frac{V_n(r)}{r^2+V_n(r)}+2M_Y\max_{0\le u\le V_n(r)}\frac{\sqrt{u}}{r^2+u}.
\end{equation}
Thus, $\Delta_n(r)\leq \min\{\bar\Delta_n(r),\hat\Delta_n(r)\}$ where due to the difference in scaling at small $r$, $\bar\Delta_n(r)$ is tighter for large $r$ and $\hat\Delta_n(r)$ is tighter for small $r$.
\end{lemma}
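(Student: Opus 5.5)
The plan is to bound $\|g\|_{\mathcal H_-}$ directly, with $g:=\mu_D-\mu_{D'}$, using the one-step update \eqref{eq:update_formula} and Lemma~\ref{lem:Hn_RKHS_reduction}. By \eqref{eq:update_formula},
\[
g=\frac{y-\mu_{D_-}(\bx)}{v(\bx)+r^2}\,k_{D_-}(\cdot,\bx)-\frac{y'-\mu_{D_-}(\bx')}{v(\bx')+r^2}\,k_{D_-}(\cdot,\bx').
\]
By the reproducing property, $\|k_{D_-}(\cdot,\bx)\|_{\mathcal H_-}=\sqrt{v(\bx)}$. Throughout, write $A:=K(X_-,X_-)+r^2I$, $\bk_x:=K(X_-,\bx)$ and $u:=v(\bx)\le V_n(r)$. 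Both bounds then reduce to controlling the common-core mean at the new point, $\mu_{D_-}(\bx)$.

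\textbf{Step 1: the pointwise bound behind \eqref{eq:delta_gen1}.} Set $\bv:=A^{-1}\bk_x$ and $s:=\bk_x^TA^{-1}\bk_x=k(\bx,\bx)-u$. By Cauchy--Schwarz, $|\mu_{D_-}(\bx)|=|\bv^T y_-|\le\|\bv\|\,M_Y\sqrt{n-1}$, so it suffices to bound $\|\bv\|$.

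First, expand $s=\bv^TA\bv=\bv^TK\bv+r^2\|\bv\|^2$, where $K:=K(X_-,X_-)$. Next, apply positive semidefiniteness of the augmented Gram matrix of $X_-\cup\{\bx\}$ to the vector $(\bv,-t)$ with $t=s/k(\bx,\bx)$. This gives $\bv^TK\bv\ge s^2/k(\bx,\bx)$. Combining the two,
\[
r^2\|\bv\|^2\le s-\frac{s^2}{k(\bx,\bx)}=u\Bigl(1-\frac{u}{k(\bx,\bx)}\Bigr)\le u(1-u),
\]
where the last step uses $k(\bx,\bx)\le1$. Hence $|\mu_{D_-}(\bx)|\le M_Y\sqrt{n-1}\,\sqrt{u(1-u)}/r$. (A sanity check against the constant kernel shows this is tight.)

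Now apply the triangle inequality to the two terms of $g$, using $|y|\le M_Y$. Each term is at most
\[
M_Y\,\frac{\sqrt u+\frac{\sqrt{n-1}}{r}\,u\sqrt{1-u}}{r^2+u},
\]
and maximising over $u\in[0,V_n(r)]$ gives $\bar\Delta_n(r)$. For the simplified bound, use $\sqrt u/(r^2+u)\le 1/(2r)$ and $u\sqrt{1-u}\le \tfrac12\sqrt u$ (valid since $u\le1$). These give $2M_Y\bigl(\tfrac1{2r}+\tfrac{\sqrt{n-1}}{4r^2}\bigr)$, which is the stated bound.

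\textbf{Step 2: the operator representation behind \eqref{eq:delta_gen2}.} Here the two $\mu_{D_-}$-terms should be treated jointly rather than by the triangle inequality, because bounding them separately loses the factor needed. The identity $k_{D_-}(\bx,X_-)=r^2\bk_x^TA^{-1}$ lets us write
\[
\mu_{D_-}(\bx)=\langle k_{D_-}(\cdot,\bx),m\rangle_{\mathcal H_-},\qquad m:=r^{-2}k_{D_-}(\cdot,X_-)\,y_- .
\]
Its norm satisfies $\|m\|^2_{\mathcal H_-}=r^{-4}y_-^Tk_{D_-}(X_-,X_-)y_-=r^{-2}y_-^TKA^{-1}y_-\le (n-1)M_Y^2/r^2$, since $KA^{-1}\preceq I$.

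Define the rank-one positive operators $T_x:=k_{D_-}(\cdot,\bx)\otimes k_{D_-}(\cdot,\bx)/(v(\bx)+r^2)$ on $\mathcal H_-$. Their norms are $v/(v+r^2)\le V_n/(r^2+V_n)$, since $t\mapsto t/(t+r^2)$ is increasing. The mean part of $g$ equals $(T_{x'}-T_x)m$. For two positive operators, $-T_x\preceq T_{x'}-T_x\preceq T_{x'}$, so $\|T_{x'}-T_x\|\le\max(\|T_x\|,\|T_{x'}\|)$. Therefore the mean part has norm at most $\frac{M_Y\sqrt{n-1}}{r}\frac{V_n}{r^2+V_n}$. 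The two response parts contribute at most $2M_Y\max_u\sqrt u/(r^2+u)$, which gives $\hat\Delta_n(r)$.

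The main obstacle is Step 2: the triangle inequality is insufficient there, and one must see $\mu_{D_-}(\bx)$ as an $\mathcal H_-$ inner product and use the bound on the difference of positive operators. The sharp $\sqrt{u(1-u)}$ estimate in Step 1 is the other non-obvious point. The closing comparison of $\bar\Delta_n(r)$ and $\hat\Delta_n(r)$ follows by reading off their scaling: $\bar\Delta_n(r)$ scales like $\sqrt{n-1}/r^3$ at large $r$ (since $u\le V_n\le1$), while $\hat\Delta_n(r)$ scales like $\sqrt{n-1}/r$ at small $r$.
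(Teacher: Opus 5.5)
Your proposal is correct and is essentially the paper's proof. The augmented-Gram inequality $\bv^TK\bv\ge s^2/k(\bx,\bx)$ in Step~1 is algebraically the same as the paper's Cauchy--Schwarz between $k(\bx,\cdot)$ and $k_{D_-}(\bx,\cdot)$ in $\mathcal H_k$, since both reduce to $s^2\le k(\bx,\bx)\,\bv^TK\bv$. Step~2 is the paper's rank-one operator argument, with your $m$ being exactly $\mu_{D_-}$. You also supply the elementary derivation of the simplified bound $\frac{M_Y}{r}(1+\frac{\sqrt{n-1}}{2r})$, which the paper omits.

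The only loose point is the closing comparison. The rates you quote there hold for both $\bar\Delta_n(r)$ and $\hat\Delta_n(r)$, so they do not by themselves show which bound is tighter. The two bounds differ in constants, for example roughly $2M_Y\sqrt{n-1}/r$ versus $M_Y\sqrt{n-1}/r$ at small $r$. The paper states that remark just as informally.
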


\begin{proof}
Fix neighbouring datasets $D=D_-\cup\{(\bx,y)\}$, $D'=D_-\cup\{(\bx',y')\}$. Let $X_-=\{\bx_1,\dots,\bx_{n-1}\}$ and $\by_-$ denote the common-core covariates and responses, and write for short $K_- = K(X_-,X_-)$, $\bk(\bx)=K(X_-,\bx)$ $\ba(\bx)=\left(K_-+r^2I\right)^{-1}\bk(\bx)$. From the definition of posterior covariance we have $h_{\bx}:=k_{D_-}(\bx,\cdot)=k(\bx,\cdot)-\ba(\bx)^T.\bk(\cdot)\in\mathcal{H}_k$.
A series of direct expansions gives
\begin{align*}
\|h_{\bx}\|_{\mathcal{H}_k}^2=k(\bx,\bx)-2\ba^T.\bk(\bx)+\ba^T.K_-\ba=k(\bx,\bx)-2\ba^T.\bk(\bx)+\ba^T.\left(K_-+r^2I-r^2I\right)\left(K_-+r^2I\right)^{-1}\bk(\bx)
\\
=k(\bx,\bx)-2\ba^T.\bk(\bx)-r^2\|\ba\|_2^2+\ba^T.\bk(\bx)=v(\bx)-r^2\|\ba\|_2^2,
\end{align*}
where we have used the notation $v(\bx)=k_{D_-}(\bx,\bx)$. Similarly, we get $\langle k(\bx,\cdot),k_{D_-}(\bx,\cdot)\rangle_{\mathcal{H}_k}=v(\bx)$. Cauchy-Schwarz in $\mathcal{H}_k$ and $k(\bx,\bx)\leq 1$ imply
\[
v(\bx)^2\leq\|k(\bx,\cdot)\|_{\mathcal{H}_k}^2\,\|h_{\bx}\|_{\mathcal{H}_k}^2=k(\bx,\bx)\, \left(v(\bx)-r^2\|\ba\|_2^2\right)\leq v(\bx)-r^2\|\ba\|_2^2.
\]
Equivalently, $r^2\|\ba(\bx)\|_2^2\leq v(\bx)(1-v(\bx))$. For any $\bx\in\Omega_X$, the common-core posterior mean reads $\mu_{D_-}(\bx)=\ba(\bx)^T.\by_-$. Using Cauchy-Schwarz and the fact that $\|\by_-\|^2\leq (n-1)M_Y^2$ we get for any $|y|<M_Y$
\[
|y-\mu_{D_-}(\bx)|\leq M_Y+ \|\ba(\bx)\|_2\, \|\by_-\|_2\leq M_Y\left(1+\frac{\sqrt{n-1}}{r}\sqrt{v(\bx)(1-v(\bx))}\right).
\]
By the one-step update formula \eqref{eq:update_formula} we have $\mu_D-\mu_{D'}
=
\eta\,k_{D_-}(\cdot,\bx)
-
\eta'\,k_{D_-}(\cdot,\bx'),
$ for suitable scalars $\eta,\eta'$. Putting $k_{D_-}(\bx,\bx)=v$ and $k_{D_-}(\bx',\bx')=v'$ and using the reproducing property in
$\mathcal H_-:=\mathcal H_{k_{D_-}}$ we have
\begin{align*}
\|\mu_D-\mu_{D'}\|_{\mathcal H_-} & \leq \|\eta k_{D_-}(\cdot,\bx)\|_{\mathcal H_-}+\|\eta' k_{D_-}(\cdot,\bx')\|_{\mathcal H_-}
=
|\eta|\,\sqrt{v}+|\eta'|\,\sqrt{v'}
\\
&=|y-\mu_{D_-}(\bx)|\frac{\sqrt{v}}{v+r^2}+|y'-\mu_{D_-}(\bx')|\frac{\sqrt{v'}}{v'+r^2}\\
&\le
M_Y
\frac{
\sqrt v+
\frac{\sqrt{n-1}}{r}v\sqrt{1-v}
}{r^2+v}
+
M_Y
\frac{
\sqrt{v'}+
\frac{\sqrt{n-1}}{r}v'\sqrt{1-v'}
}{r^2+v'}.
\end{align*}
where in the penultimate step we have applied the formulae for $\eta,\eta'$ from \eqref{eq:update_formula}. Since $0\le v,v'\le V_n(r)$, each of the two
summands is bounded by the same supremum. Consequently,
\[
\|\mu_D-\mu_{D'}\|_{\mathcal H_-}
\le
2M_Y\max_{0\le u\le V_n(r)}
\frac{
\sqrt u+
\frac{\sqrt{n-1}}{r}u\sqrt{1-u}
}{r^2+u}.
\]
Taking the supremum over neighbouring pairs gives the bound \eqref{eq:delta_gen1}.

Let us next prove the bound \eqref{eq:delta_gen2}. To this end, rewrite the update formula \eqref{eq:update_formula} as 
\[
\mu_D-\mu_{D'}=\frac{y}{r^2+v}h_{\bx}
-
\frac{y'}{r^2+v'}h_{\bx'}
-
(H_{\bx}-H_{\bx'})\mu_-,\quad H_{\bx}
:=
\frac{h_{\bx}\otimes h_{\bx}^*}{r^2+v(\bx)},
\]
where $h_{\bx}^*$ is the $\mathcal{H}_-$-dual of $h_{\bx}$ i.e. $(h_{\bx}\otimes h_{\bx}^*)f=\langle h_{\bx},f\rangle_{\mathcal{H}_-}\,h_{\bx}$. This decomposition implies
\begin{equation}\label{eq:delta_bnd_intermediate}
\|\mu_D-\mu_{D'}\|_{\mathcal{H}_-}\leq M_Y\left(\frac{\sqrt{v}}{r^2+v}+\frac{\sqrt{v'}}{r^2+v}\right)+\|(H_{\bx}-H_{\bx'})\mu_-\|_{\mathcal{H}_-}\leq 2M_Y\max_{0\le u\le V_n(r)}\frac{\sqrt{u}}{r^2+u}+\|H_{\bx}-H_{\bx'}\|_{op}\|\mu_-\|_{\mathcal{H}_-},
\end{equation}
where $\|\cdot\|_{op}$ is the operator norm. $H_{\bx}$, $H_{\bx'}$ are rank-one operators in $\mathcal{H}_-$ and their respective nonzero eigenvalues are
\[
\frac{\|h_{\bx}\|_{\mathcal{H}_-}^2}{r^2+v(\bx)}=\frac{v(\bx)}{r^2+v(\bx)}\leq c_n(r):=\frac{V_n(r)}{r^2+V_n(r)}.
\]
Thus, $0\preceq  H_{\bx},H_{\bx'}\preceq c_n(r) I$ which implies $-c_n(r) I\preceq  H_{\bx}-H_{\bx'}\preceq c_n(r) I$ and hence $\|H_{\bx}-H_{\bx'}\|_{op}\leq c_n(r)$. It remains to bound $\|\mu_-\|_{\mathcal{H}_-}$. To this end, rewrite
\begin{align*}
\mu_-(\bx)=k(\bx,X_-)\left(K_-+r^2I\right)^{-1}\by_-=\frac{1}{r^2}k(\bx,X_-)\left(K_-+r^2I\right)^{-1}\left[ \left(K_-+r^2I\right) - K_-\right]\by_-
\\
=\frac{1}{r^2}\left(k(\bx,X_-)-k(\bx,X_-)\left(K_-+r^2I\right)^{-1}K_-\right)\by_-=\frac{1}{r^2} k_{D_-}(\bx,X_-)\by_-.
\end{align*}
Similarly, we get the identity $k_{D_-}(X_-,X_-)= r^2 K_-(K_-+r^2I)^{-1}$. Thus, 
\[
\|\mu_-(\cdot)\|_{\mathcal{H}_-}^2=\frac{1}{r^4}\by_-^Tk_{D_-}(X_-,X_-)\by_-=\frac{1}{r^2}\by_-^TK_-(K_-+r^2I)^{-1}\by_-\leq \frac{\|\by_-\|_2^2}{r^2}\leq \frac{(n-1)M_Y^2}{r^2},
\]
where we have used the fact that $K_-(K_-+r^2I)^{-1}\preceq I$. Combining this with inequality \eqref{eq:delta_bnd_intermediate} and taking the supremum over neighbouring datasets yields the final result \eqref{eq:delta_gen2}.
\end{proof}

The Lemma below shows that the $\sqrt{n}$-growth of the worst-case upper bound is in fact tight.

\begin{lemma}
\label{lemma:Delta_sqrt_growth_example}
Let $\Omega_X=[0,1]$ and suppose that all training responses satisfy $|y_i|\le M_Y$. There exists a positive semidefinite kernel $k:\Omega_X\times\Omega_X\to\mathbb R$ satisfying $\sup_{x\in\Omega_X}k(x,x)\le 1$ for which $\Delta_n(r)$ has the following lower bound.
\[
\Delta_n(r)
\ge
\frac{
M_Y\left(1+r^2+\sqrt{n-1}\right)
}
{
r\sqrt{1+r^2}\,(2+r^2)
}.
\]
\end{lemma}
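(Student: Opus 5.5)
The plan is to build an explicit feature-map kernel on $[0,1]$ and a single neighbouring pair for each $n$, and then evaluate $\|\mu_D-\mu_{D'}\|_{\mathcal H_-}$ exactly via the one-step update formula \eqref{eq:update_formula}. Choose distinct points $z_1,z_2,\dots$ and $w_2,w_3,\dots$ in $[0,1]$. Define a feature map $\varphi:[0,1]\to\ell^2$ by
\[
\varphi(z_i)=\be_i,\qquad \varphi(w_m)=\tfrac{1}{\sqrt{m-1}}\sum_{i<m}\be_i,
\]
and $\varphi\equiv 0$ elsewhere. Set $k(x,x')=\langle\varphi(x),\varphi(x')\rangle$. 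This kernel is positive semidefinite, satisfies $k(x,x)\le 1$, and does not depend on $n$.

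For a given $n$, take the common core $D_-=\{(z_i,M_Y)\}_{i=1}^{n-1}$. Let $D$ add $(\bx,y)=(w_n,-M_Y)$, and let $D'$ add $(\bx',y')$ with $\bx'$ a point where $\varphi(\bx')=0$ and, say, $y'=0$.

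The computation then proceeds in three steps.

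\textbf{Step 1: common-core quantities.} Here $K_-=I$, so $(K_-+r^2I)^{-1}=I/(1+r^2)$. Also $\bk(w_n)=(n-1)^{-1/2}\mathbf 1$. These give
\[
\mu_{D_-}(w_n)=\frac{M_Y\sqrt{n-1}}{1+r^2},\qquad v(w_n)=1-\frac{1}{1+r^2}=\frac{r^2}{1+r^2}.
\]

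\textbf{Step 2: the mean difference.} Since $k(\bx',\cdot)\equiv0$, we get $k_{D_-}(\cdot,\bx')\equiv 0$. The update for $D'$ is therefore trivial, and $\mu_{D'}=\mu_{D_-}$. By \eqref{eq:update_formula},
\[
\mu_D-\mu_{D'}=\eta\,k_{D_-}(\cdot,w_n),\qquad \eta=\frac{y-\mu_{D_-}(w_n)}{v(w_n)+r^2}.
\]
Because $y=-M_Y$ has the opposite sign to $\mu_{D_-}(w_n)$, the two contributions add. This gives
\[
|y-\mu_{D_-}(w_n)|=\frac{M_Y\,(1+r^2+\sqrt{n-1})}{1+r^2},\qquad v(w_n)+r^2=\frac{r^2(2+r^2)}{1+r^2}.
\]

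\textbf{Step 3: the RKHS norm.} The reproducing property in $\mathcal H_-$ gives $\|k_{D_-}(\cdot,w_n)\|_{\mathcal H_-}=\sqrt{v(w_n)}=r/\sqrt{1+r^2}$. Hence
\[
\|\mu_D-\mu_{D'}\|_{\mathcal H_-}=|\eta|\sqrt{v}=\frac{M_Y(1+r^2+\sqrt{n-1})}{r\sqrt{1+r^2}\,(2+r^2)}.
\]
The definition of $\Delta_n(r)$ as a supremum over neighbouring pairs (Lemma~\ref{lem:Hn_RKHS_reduction}) then yields the claimed lower bound.

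The main obstacle is choosing the construction so that the common-core mean contributes a term of order $\sqrt{n-1}$. Placing the core points at a single location does not work, since it gives only an $O(1)$ mean. The fix is to use orthonormal core features together with a query point whose feature is their normalised average, so that $\|\ba(\bx)\|_2\,\|\by_-\|_2$ is attained with equality in the Cauchy--Schwarz step of Lemma~\ref{lemma:boundedResp_Delta_bound}. The remaining checks are routine: the kernel is well defined with countably many nonzero feature points, it is PSD as a Gram kernel, and the points used by $D$ and $D'$ are admissible.
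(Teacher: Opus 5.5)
Your proposal is correct and is essentially the paper's own proof. Both use a feature-map kernel with orthonormal core features and a query point carrying their normalised average, a common core of $n-1$ points with response $M_Y$, a replacement at a null-feature point, and the same one-step-update computation of $|\eta|\sqrt{v}$. The differences (your $w_n$ versus the paper's $b_{n-1}$, and $y'=0$ versus $y'=M_Y$) are immaterial because $k_{D_-}(\cdot,\bx')\equiv 0$, and, like the paper, your construction implicitly requires $n\ge 2$.
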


\begin{proof}
Let $(\be_i)_{i\ge1}$ denote the standard basis of $\ell^2$. Choose mutually
distinct points
\[
x_0,\quad a_1,a_2,\dots,\quad b_1,b_2,\dots
\]
in $[0,1]$. Define a feature map $\varphi:[0,1]\to\ell^2$ by
\[
\varphi(x_0):=0,\qquad
\varphi(a_i):=\be_i,\qquad
\varphi(b_m):=\frac{1}{\sqrt{m}}\sum_{i=1}^{m}\be_i,
\]
and set $\varphi(x):=0$ for all remaining $x\in[0,1]$. Now define the GP kernel through this feature map $k(x,x'):=\langle \varphi(x),\varphi(x')\rangle_{\ell^2}$. Then $k$ has bounded diagonal since  $k(x,x)=\|\varphi(x)\|_{\ell^2}^2\leq 1$ for every $x\in[0,1]$.

Fix $n\ge2$. Consider the common-core dataset $D_-:=\{(a_i,M_Y)\}_{i=1}^{n-1}.$ Then $K(X_-,X_-)=I$ and  $K(X_-,b_{n-1})=\frac{1}{\sqrt{n-1}}\mathbf 1$. Thus, the common-core posterior mean at $x=b_{n-1}$ is
\[
\mu_{D_-}(b_{n-1})
=
K(b_{n-1},X_-)\left(K(X_-,X_-)+r^2I\right)^{-1}\by_-
=
\frac{M_Y\sqrt {n-1}}{1+r^2}.
\]
The corresponding common-core posterior variance is
\begin{align*}
v
:=
k_{D_-}(b_{n-1},b_{n-1})
&=
k(b_{n-1},b_{n-1})
-
K(b_{n-1},X_-)\left(K(X_-,X_-)+r^2I\right)^{-1}K(X_-,b_{n-1})
\\
&=
1-\frac{1}{1+r^2}
=
\frac{r^2}{1+r^2}.
\end{align*}

Now define neighbouring datasets $D:=D_-\cup\{(b_d,-M_Y)\}$, $D':=D_-\cup\{(x_0,M_Y)\}$. Since $\varphi(x_0)=0$, we have $k(\cdot,x_0)\equiv0$ and hence
$k_{D_-}(\cdot,x_0)\equiv0$. Thus, by the one-step update formula \eqref{eq:update_formula}
\[
\mu_D-\mu_{D'}
=
\eta\,k_{D_-}(\cdot,b_{n-1}), \qquad \eta
=
\frac{-M_Y-\mu_{D_-}(b_{n-1})}{v(\bx)+r^2}.
\]
Using the expressions above, we have
\[
\eta
=
-M_Y\left(1+\frac{\sqrt {n-1}}{1+r^2}\right)\left(\frac{r^2}{1+r^2}+r^2\right)^{-1}
=
-\frac{M_Y(1+r^2+\sqrt {n-1})}
{r^2(2+r^2)}.
\]
By the RKHS property we have we have $\|\mu_D-\mu_{D'}\|_{\mathcal H_-}^2
=
\eta^2 k_{D_-}(b_{n-1},b_{n-1})
=
\eta^2v.
$
Substituting the expressions for $\eta$ and $v$ gives
\[
\|\mu_D-\mu_{D'}\|_{\mathcal H_-}
=
\frac{
M_Y\left(1+r^2+\sqrt{n-1}\right)
}
{
r\sqrt{1+r^2}\,(2+r^2)
}.
\]
The claimed lower bound follows by taking the supremum over neighbouring datasets $D\sim D'$.
\end{proof}

\begin{lemma}
\label{lemma:delta_lower_constant}
Let $k:\Omega_X\times\Omega_X\to\mathbb R$ be a positive semidefinite kernel
satisfying $\sup_{\bx\in\Omega_X}k(\bx,\bx)\le 1$. Assume that there exist admissible datapoints $(\bx_0,y_0)$ and
$(\bx_1,y_1)$ such that $y_0>0$, $y_1<0$ and 
\[
k(\bx_0,\bx_0)=1,
\qquad
k(\bx_0,\bx_1)>0,
\qquad
k(\bx_1,\bx_1)-k(\bx_0,\bx_1)^2>0.
\]
Then
\[
\Delta_n(r)
\ge
\frac{|y_1|\sqrt{k(\bx_1,\bx_1)-k(\bx_0,\bx_1)^2}}{1+r^2}>0.
\]
\end{lemma}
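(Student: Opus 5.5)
The plan is to exhibit one explicit neighbouring pair $D\sim D'$ for which $\|\mu_D-\mu_{D'}\|_{\mathcal H_-}$ is at least the stated quantity. By Lemma~\ref{lem:Hn_RKHS_reduction}, $\Delta_n(r)$ is the supremum of this RKHS norm, so one pair suffices. Write $s:=k(\bx_1,\bx_1)-k(\bx_0,\bx_1)^2>0$. The pair is:
\[
D_-:=\{(\bx_0,y_0)\}^{n-1},\qquad D:=D_-\cup\{(\bx_1,y_1)\},\qquad D':=D_-\cup\{(\bx_0,y_0)\},
\]
that is, the common core consists of $n-1$ copies of $(\bx_0,y_0)$.

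Conditioning on $n-1$ identical observations at $\bx_0$ is equivalent to conditioning on a single observation at $\bx_0$ with noise $\rho:=r^2/(n-1)$. The Sherman--Morrison computation in the proof of Proposition~\ref{prop:general_positive_kernel_exact_Vn} gives this directly, with $a_n=(n-1)/(n-1+r^2)$:
\[
k_{D_-}(\bx,\bx')=k(\bx,\bx')-a_n\,k(\bx,\bx_0)k(\bx',\bx_0),\qquad \mu_{D_-}(\bx)=a_n\,y_0\,k(\bx,\bx_0).
\]
The case $n=1$ (empty core, $a_1=0$) is covered by the same formulas.

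Next I apply the one-step update \eqref{eq:update_formula}. It gives
\[
g:=\mu_D-\mu_{D'}=\eta\,h_{\bx_1}-\eta'\,h_{\bx_0},\qquad h_{\bx}:=k_{D_-}(\cdot,\bx),\qquad \eta=\frac{y_1-\mu_{D_-}(\bx_1)}{v(\bx_1)+r^2}.
\]
Since $h_{\bx_0}\in\mathcal H_-$, the norm of $g$ is at least $|\eta|$ times the distance from $h_{\bx_1}$ to $\operatorname{span}\{h_{\bx_0}\}$, which is the norm of the projection of $g$ onto the orthogonal complement of $h_{\bx_0}$. This distance squared is the Schur complement $v(\bx_1)-k_{D_-}(\bx_0,\bx_1)^2/v(\bx_0)$. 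If $v(\bx_0)=0$, I instead use $h_{\bx_0}=0$ and the bound $v(\bx_1)\ge s$.

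I then check that this Schur complement equals $s$. Probabilistically, it is the variance of $f(\bx_1)$ given $f(\bx_0)$ and the $\bx_0$-observations, and the extra observations at $\bx_0$ carry no information beyond $f(\bx_0)$. Algebraically, with $k_{00}=1$ and $k_{01}=k(\bx_0,\bx_1)$, we have $v(\bx_0)=1-a_n$, $k_{D_-}(\bx_0,\bx_1)=(1-a_n)k_{01}$ and $v(\bx_1)=k(\bx_1,\bx_1)-a_nk_{01}^2$. The Schur complement is therefore $k(\bx_1,\bx_1)-a_nk_{01}^2-(1-a_n)k_{01}^2=s$.

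Finally I bound $|\eta|$ from below. Since $y_0>0$ and $k_{01}>0$, we have $\mu_{D_-}(\bx_1)=a_ny_0k_{01}\ge0$. As $y_1<0$, this gives $|y_1-\mu_{D_-}(\bx_1)|\ge|y_1|$; this sign alignment is exactly where the opposite-sign and positivity hypotheses are used. Also $v(\bx_1)\le k(\bx_1,\bx_1)\le1$, so $|\eta|\ge|y_1|/(1+r^2)$. Combining the pieces yields $\|g\|_{\mathcal H_-}\ge|y_1|\sqrt{s}/(1+r^2)$.

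The main obstacle is controlling the $h_{\bx_0}$ component of $g$, which could otherwise cancel part of the $h_{\bx_1}$ contribution. I handle it by projecting onto the orthogonal complement of $h_{\bx_0}$ rather than estimating $\eta'$. Identifying that projected norm with $s$, independently of $n$ and $r$, is the key computation.
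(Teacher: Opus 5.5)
Your proof is correct. It uses the same extremal configuration as the paper: a common core of $n-1$ copies of $(\bx_0,y_0)$, with one dataset adding $(\bx_0,y_0)$ and the other adding $(\bx_1,y_1)$. Your $D$ and $D'$ are swapped relative to the paper, which is immaterial.

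Where you differ is in how you lower-bound the RKHS norm. The paper evaluates at the single test point $\bx_1$ and uses the reproducing property with Cauchy--Schwarz to get $\|\mu_D-\mu_{D'}\|_{\mathcal H_-}\ge|\mu_D(\bx_1)-\mu_{D'}(\bx_1)|/\sqrt{v_n}$. It then computes this difference explicitly. That computation also requires the update at $\bx_0$, through $\gamma_n=n/(n+r^2)$. The paper checks that both terms of the resulting numerator are nonnegative using $\gamma_n\ge a_n$, $y_0>0$, $k(\bx_0,\bx_1)>0$ and $y_1<0$. This yields $|y_1|\sqrt{v_n}/(v_n+r^2)$, which is relaxed at the end via $s\le v_n\le 1$.

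You instead project $g$ onto the orthogonal complement of $h_{\bx_0}$. This eliminates the $\eta'$ term outright, so you never need the update at $\bx_0$. The sign hypotheses then enter only through $\mu_{D_-}(\bx_1)\ge0$. The price is the Schur-complement computation, which you carry out correctly. It also explains structurally why $s$ is the right constant: $s$ is the conditional variance of $f(\bx_1)$ given $f(\bx_0)$, and observations at $\bx_0$ leave it unchanged.

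The paper's route keeps the slightly larger factor $\sqrt{v_n}\ge\sqrt{s}$ at the intermediate step. Yours is shorter and more conceptual, and both arrive at the same final bound.

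One minor point: your fallback for $v(\bx_0)=0$ is never needed, since $v(\bx_0)=1-a_n=r^2/(n-1+r^2)>0$ for $r>0$.
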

\begin{proof}
Let $D_-$ consist of $n-1$ repeated copies of $(\bx_0,y_0)$ and set $D=D_-\cup\{(\bx_0,y_0)\}$,
$D'=D_-\cup\{(\bx_1,y_1)\}$ and define
\[
a_n:=\frac{n-1}{n-1+r^2},
\qquad
\gamma_n:=\frac{n}{n+r^2}
\]
and let $X_T=\{\bx_1\}$. Let $D_-$ consist of $n-1$ repeated copies of $(\bx_0,y_0)$. Then, $K(X_-,X_-)=\mathbf 1 \mathbf 1^T$ and using the Sherman-Morrison formula it is straightforward to verify that
\[
\mu_{D_-}(\bx_1)=a_n k(\bx_0,\bx_1)\,y_0,
\quad
\mu_D(\bx_1)=\gamma_n k(\bx_0,\bx_1)\,y_0,
\quad 
v_n:=k_{D_-}(\bx_1,\bx_1)
=
k(\bx_1,\bx_1)-a_n k(\bx_0,\bx_1)^2>0.
\]
Also, because $a_n\ge0$ and $\sup_{\bx}k(\bx,\bx)\le1$, we have $v_n\le k(\bx_1,\bx_1)\le1$. For $D'$, the one-point update formula \eqref{eq:update_formula} gives
\begin{align*}
\mu_{D'}(\bx_1)
=
\mu_{D_-}(\bx_1)
+
k_{D_-}(\bx_1,\bx_1)
\frac{y_1-\mu_{D_-}(\bx_1)}
{k_{D_-}(\bx_1,\bx_1)+r^2}
& =
a_n k(\bx_0,\bx_1)\,y_0
+
v_n\frac{y_1-a_n k(\bx_0,\bx_1)\,y_0}{v_n+r^2}
\\
& =
\frac{a_n k(\bx_0,\bx_1)\,y_0\,r^2+v_n y_1}{v_n+r^2}.
\end{align*}
Hence,
\begin{align*}
\mu_D(\bx_1)-\mu_{D'}(\bx_1)
& =
\gamma_n k(\bx_0,\bx_1)\,y_0-\frac{a_n k(\bx_0,\bx_1)\,y_0\,r^2+v_n y_1}{v_n+r^2}
\\
& =
\frac{
v_n(\gamma_n k(\bx_0,\bx_1)\,y_0-y_1)
+
k(\bx_0,\bx_1)\,y_0\,r^2(\gamma_n-a_n)
}{v_n+r^2}.
\end{align*}
Since $k(\bx_0,\bx_1)>0$, $y_0>0$, $y_1<0$, and
\[
\gamma_n-a_n
=
\frac{r^2}{(n+r^2)(n-1+r^2)}
\ge0,
\]
both terms in the numerator are nonnegative. Thus
\[
\mu_D(\bx_1)-\mu_{D'}(\bx_1)
\ge
\frac{
v_n(\gamma_n k(\bx_0,\bx_1)\,y_0-y_1)
}{v_n+r^2}
\ge
\frac{v_n |y_1|}{v_n+r^2}.
\]

By the reproducing property in $\mathcal H_{-}$ and Cauchy-Schwarz
\[
|\mu_D(\bx_1)-\mu_{D'}(\bx_1)|=\left|\langle\mu_D-\mu_{D'},k_{D_-}(\bx_1,\cdot)\rangle_{\mathcal H_{-}}\right|
\le
\|\mu_D-\mu_{D'}\|_{\mathcal H_{-}}
\sqrt{k_{D_-}(\bx_1,\bx_1)}
=
\|\mu_D-\mu_{D'}\|_{\mathcal H_{-}}
\sqrt{v_n}.
\]
Therefore
\[
\|\mu_D-\mu_{D'}\|_{\mathcal H_{-}}^2
\ge
\frac{
\bigl(\mu_D(\bx_1)-\mu_{D'}(\bx_1)\bigr)^2
}{v_n}
\ge
|y_1|^2
\frac{v_n}{(v_n+r^2)^2}.
\]
Since $k(\bx_1,\bx_1)-k(\bx_0,\bx_1)^2\le v_n\le1$, we have
\[
\frac{v_n}{(v_n+r^2)^2}
\ge
\frac{k(\bx_1,\bx_1)-k(\bx_0,\bx_1)^2}{(1+r^2)^2}.
\]
Hence
\[
\|\mu_D-\mu_{D'}\|_{\mathcal H_{-}}
\ge
\frac{|y_1|\sqrt{k(\bx_1,\bx_1)-k(\bx_0,\bx_1)^2}}{1+r^2}.
\]
Taking the supremum over admissible neighbouring pairs gives proves the claim.
\end{proof}

On the other hand, there exist typical situations where $\Delta_n(r)$ is upper-bounded by a constant or even decreases with $n$.

\begin{lemma}[Constant kernel]
\label{lemma:Delta_constant_kernel}
Let $k(\bx,\bx')\equiv 1$ for all $\bx,\bx'\in\Omega_X$  and suppose that all admissible responses satisfy $|y|\le M_Y$. Then
\[
\Delta_n(r)
\le
\frac{2M_Y\sqrt{r^2+n-1}}{r(r^2+n)}=\mathcal O(1/\sqrt{n}).
\]
\end{lemma}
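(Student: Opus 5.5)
For the constant kernel everything reduces to scalar computations, because all covariates look the same to $k$. The plan is to compute $k_{D_-}$, the one-step mean update \eqref{eq:update_formula}, and the RKHS norm in $\mathcal H_-$ explicitly. Then I bound the residuals using $|y|\le M_Y$.

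\textbf{Common core.} With $n_-:=n-1$ the common-core Gram matrix is $\mathbf 1\mathbf 1^T$. By Sherman--Morrison, exactly as in the proof of Proposition~\ref{prop:general_positive_kernel_exact_Vn}, this gives
\[
k_{D_-}(\bx,\bx')\equiv 1-\frac{n_-}{n_-+r^2}=\frac{r^2}{n-1+r^2}=:v.
\]
It also gives $\mu_{D_-}\equiv \frac{\mathbf 1^T\by_-}{n-1+r^2}=:m$. Hence $v(\bx)=v(\bx')=v$ and $k_{D_-}(\cdot,\bx)=k_{D_-}(\cdot,\bx')=v\cdot\mathbf 1(\cdot)$, the same constant function.

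\textbf{Mean difference and its norm.} By \eqref{eq:update_formula},
\[
\mu_D-\mu_{D'}=\frac{(y-m)-(y'-m)}{v+r^2}\,k_{D_-}(\cdot,\bx)=\frac{y-y'}{v+r^2}\,k_{D_-}(\cdot,\bx).
\]
The $m$ terms cancel, which is the key simplification: the $\sqrt{n}$ growth in Lemma~\ref{lemma:boundedResp_Delta_bound} came from $\mu_{D_-}$. The reproducing property then gives
\[
\|\mu_D-\mu_{D'}\|_{\mathcal H_-}=\frac{|y-y'|\sqrt v}{v+r^2}\le\frac{2M_Y\sqrt v}{v+r^2}.
\]
Substituting $v=r^2/(n-1+r^2)$ gives $\sqrt v=r/\sqrt{n-1+r^2}$. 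It also gives
\[
v+r^2=\frac{r^2(1+n-1+r^2)}{n-1+r^2}=\frac{r^2(n+r^2)}{n-1+r^2}.
\]
So the bound equals $\frac{2M_Y\sqrt{n-1+r^2}}{r(n+r^2)}$. Taking the supremum over $D\sim D'$ yields the stated bound, which is $\mathcal O(1/\sqrt n)$ for fixed $r$.

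The only mildly delicate point is that $\mathcal H_-$ is the one-dimensional RKHS of the constant kernel $v$. In this space $\|c\,v\mathbf 1\|_{\mathcal H_-}=|c|\sqrt v$ by the reproducing property, and no other step is an obstacle.
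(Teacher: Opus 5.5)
Your proof is correct and is essentially the paper's argument: both reduce to the scalar computation $\mu_D-\mu_{D'}=(y-y')/(r^2+n)$ together with the $\mathcal H_-$-norm of a constant under the constant kernel $k_{D_-}\equiv r^2/(r^2+n-1)$. The only difference is cosmetic. The paper computes $\mu_D=\sum_i y_i/(r^2+n)$ directly by Sherman--Morrison on the full dataset, whereas you use the one-step update \eqref{eq:update_formula} and the reproducing property; the two give the identical quantity.
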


\begin{proof}
Let $D=D_-\cup\{(\bx,y)\}$,  $D'=D_-\cup\{(\bx',y')\}$ where $D_-$ is any common core of size $n-1$. Since the kernel is constant for any $X$ we have $K(X,X)=\mathbf 1 \mathbf 1^T$ and it is straightforward to verify (using Sherman–Morrison formula)  that for any dataset $D$ of size $n$ the posterior mean is a constant function and that the posterior covariance is also constant and given by
\[
\mu_D(\cdot)
=
\frac{\sum_{i=1}^n y_i}{r^2+n},
\qquad
k_{D_-}(\bu,\bv)
=
\frac{r^2}{r^2+n-1}.
\]
Thus, $\mu_D-\mu_{D'}
=
(y-y')/(r^2+n)$ and $\mathcal H_{-}$ consists of constant functions. A constant function $g$ has RKHS norm
\[
\|g\|_{\mathcal H_{-}}
= |g|\, \frac{\sqrt{r^2+n-1}}{r}.
\]
Hence
\[
\|\mu_D-\mu_{D'}\|_{\mathcal H_{-}}
=
\frac{|y-y'|}{r^2+n}
\frac{\sqrt{r^2+n-1}}{r}.
\]
Taking the supremum over $|y|,|y'|\le M_Y$ gives $|y-y'|\le2M_Y$ from which the final result follows.
\end{proof}

\begin{lemma}[Purely diagonal kernel]
\label{lemma:Delta_purely_diagonal_kernel}
Let $k(\bx,\bx')=q(\bx)$ if $\bx=\bx'$ with $0\le q(\bx)\le 1$ and zero otherwise. Suppose that all admissible responses satisfy $|y_i|\le M_Y$. Then
\[
\Delta_n(r)
\le
\begin{cases}
\dfrac{\sqrt{2}M_Y}{r}, & 0<r\le1,\\[2mm]
\dfrac{2\sqrt{2}M_Y}{1+r^2}, & r\ge1.
\end{cases}
\]
\end{lemma}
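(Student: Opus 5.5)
The plan is to exploit the fact that for a purely diagonal kernel everything decouples across distinct input locations, so $\Delta_n(r)$ reduces to an explicit scalar optimisation. First I would compute the common-core quantities. Fix a common core $D_-$ and a location $\bx$, and let $m\ge0$ be the number of copies of $\bx$ in $X_-$. Then $K(X_-,\bx)=q(\bx)$ on those $m$ entries and zero elsewhere, and $K(X_-,X_-)$ is block diagonal, with the block for the copies of $\bx$ equal to $q\mathbf 1\mathbf 1^T$. By Sherman--Morrison, with $q=q(\bx)$ and $a:=mq$,
\[
v(\bx)=k_{D_-}(\bx,\bx)=\frac{q r^2}{a+r^2},\qquad |\mu_{D_-}(\bx)|\le \frac{a M_Y}{a+r^2},\qquad k_{D_-}(\bx,\bx'')=0\ \ (\bx''\neq \bx).
\]
Hence $k_{D_-}$ is again a diagonal kernel, and in $\mathcal H_-$ the functions $k_{D_-}(\cdot,\bx)$ and $k_{D_-}(\cdot,\bx')$ are orthogonal whenever $\bx\neq\bx'$.

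Next I would use the one-step update formula \eqref{eq:update_formula}, exactly as in Lemma~\ref{lemma:boundedResp_Delta_bound}, to write $\mu_D-\mu_{D'}=\eta\,k_{D_-}(\cdot,\bx)-\eta'\,k_{D_-}(\cdot,\bx')$, where $\eta=(y-\mu_{D_-}(\bx))/(v+r^2)$. I would then split into two cases.

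\emph{Case $\bx=\bx'$.} The difference is $\frac{y-y'}{v+r^2}k_{D_-}(\cdot,\bx)$, whose norm is at most $2M_Y\sqrt v/(v+r^2)$. By AM--GM this is at most $M_Y/r$. For $r\ge1$, the map $v\mapsto \sqrt v/(v+r^2)$ is increasing on $[0,1]\subset[0,r^2]$, which gives the bound $2M_Y/(1+r^2)$. Both cases of the claim are therefore covered.

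\emph{Case $\bx\neq\bx'$.} By orthogonality,
\[
\|\mu_D-\mu_{D'}\|_{\mathcal H_-}^2=\eta^2 v+\eta'^2v'.
\]
It then suffices to show that each summand is at most $M_Y^2/(2r^2)$ when $r\le1$, and at most $4M_Y^2/(1+r^2)^2$ when $r\ge1$. Substituting the formulas above, together with $v+r^2=r^2(a+q+r^2)/(a+r^2)$ and $|y-\mu_{D_-}(\bx)|\le M_Y(2a+r^2)/(a+r^2)$, gives
\[
\eta^2 v\le M_Y^2\,F(a,q),\qquad F(a,q):=\frac{q\,(2a+r^2)^2}{r^2\,(a+q+r^2)^2\,(a+r^2)},\qquad a\ge0,\ q\in[0,1].
\]

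The main obstacle is the two-variable maximisation of $F$. The crude bound $(2a+r^2)\le 2(a+r^2)$ loses a factor of $4$ at $a=0$, so it is not enough. At $a=0$, AM--GM gives $F=q/(q+r^2)^2\le 1/(4r^2)$, and $F\to0$ as $a\to\infty$. My first attempt would therefore be to maximise in $a$ for fixed $q$ directly, writing $F$ as a function of $s=a+r^2$ and checking the sign of $\partial_s\log F$, and then to maximise in $q\in[0,1]$. I expect the worst case to sit at $q=1$. For $r\ge1$ I would verify $F\le 4/(1+r^2)^2$ using monotonicity in $q$ on $[0,1]$ together with $r^2\ge1$. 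If the exact maximum in $a$ proves awkward, I would fall back on bounding $(2a+r^2)^2/((a+q+r^2)^2(a+r^2))$ by treating the regimes $a\le r^2$ and $a\ge r^2$ separately. Finally, taking the supremum over neighbouring pairs and using Lemma~\ref{lem:Hn_RKHS_reduction} yields the stated bound on $\Delta_n(r)$.
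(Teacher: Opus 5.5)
\paragraph{The gap is in the target for $0<r\le 1$.} Your setup is correct: the diagonal form of $k_{D_-}$, the formulas for $v$ and $\mu_{D_-}(\bx)$, orthogonality in $\mathcal H_-$, and the case $\bx=\bx'$ all hold. The problem is the per-summand target in the case $\bx\neq\bx'$ with $0<r\le1$. Since $(\sqrt2M_Y/r)^2/2=M_Y^2/r^2$, it suffices to show $\eta^2v\le M_Y^2/r^2$. You require $M_Y^2/(2r^2)$ instead, which would give $\Delta_n(r)\le M_Y/r$, and that is false for every $r<1$.

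\paragraph{Counterexample.} Take $q(\bx)=1$, put one copy of $(\bx,-M_Y)$ in the core (so $a=1$), and use the new response $y=M_Y$. Then
\begin{itemize}
\item $\mu_{D_-}(\bx)=-M_Y/(1+r^2)$ and $v=r^2/(1+r^2)$;
\item $\eta=M_Y/r^2$;
\item $\eta^2v=M_Y^2/\bigl(r^2(1+r^2)\bigr)$, which exceeds $M_Y^2/(2r^2)$ whenever $r<1$.
\end{itemize}
In your notation, $F(q,q)=q/\bigl(r^2(q+r^2)\bigr)\to 1/r^2$ as $r\to0$. So the maximiser of $F$ sits at the interior point $a\approx q$, not near $a=0$. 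Neither the planned maximisation of $F$ nor the fallback split at $a=r^2$ can prove an inequality that is false. Repeating the construction at a second location $\bx'$ (with $q\equiv1$, $n\ge3$) gives $\Delta_n(r)\ge\sqrt2M_Y/(r\sqrt{1+r^2})$. Hence the lemma's constant is essentially sharp as $r\to0$.

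\paragraph{The fix is the bound you discarded.} Once the target is corrected, the estimate you called crude is enough: $|y-\mu_{D_-}(\bx)|\le 2M_Y$ together with $v\le1$ gives
\[
\eta^2v\le 4M_Y^2\sup_{0\le v\le 1}\frac{v}{(r^2+v)^2}.
\]
The right-hand side equals $M_Y^2/r^2$ for $r\le1$ and $4M_Y^2/(1+r^2)^2$ for $r\ge1$, which are exactly the required per-summand bounds. It is loose at $a=0$, but that is not where $F$ peaks; near $a\approx q$ it is almost tight. With this change your argument coincides with the paper's proof.
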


\begin{proof}
Let $D_-$ be the common core. For $\bu\in\Omega_X$, define the overlap index set 
\[
\mathcal I_\cap(\bu):=\{i:\bx_i=\bu,\;(\bx_i,y_i)\in D_-\} ,\qquad n_\cap(\bu):=\#\mathcal I_\cap(\bu),
\qquad
S_\cap(\bu):=\sum_{i\in \mathcal I_\cap(\bu)} y_i.
\]
For the purely diagonal kernel, the common-core posterior kernel is again
diagonal. Its diagonal value is
\[
v(\bu):=k_{D_-}(\bu,\bu)
=
\frac{q(\bu)r^2}{r^2+n_\cap(\bu)q(\bu)}\le q(\bu)\le1.
\]
The common-core posterior mean is bounded as
\[
|\mu_{D_-}(\bu)|
=
\frac{q(\bu)|S_\cap(\bu)|}
{r^2+n_\cap(\bu)q(\bu)}
\le
\frac{|S_\cap(\bu)|}
{n_\cap(\bu)}
\le M_Y.
\]
where we have used the fact that $|S_\cap(\bu)|\le n_\cap(\bu)M_Y$. Hence, for any additional admissible observation $(\bu,y)$ we have $|y-\mu_{D_-}(\bu)|\le 2M_Y$. Now let $D=D_-\cup\{(\bx,y)\}$ and $D'=D_-\cup\{(\bx',y')\}$. If $\bx\neq\bx'$, then $k_{D_-}(\bx,\cdot)$ and
$k_{D_-}(\bx',\cdot)$ are orthogonal in $\mathcal H_{-}$, because
$k_{D_-}$ is diagonal. Then, the one-point update formula \eqref{eq:update_formula} yields
\[
\|\mu_D-\mu_{D'}\|_{\mathcal H_{-}}
\le
2\sqrt{2}M_Y
\sup_{0\le v\le1}
\frac{\sqrt v}{r^2+v}.
\]
It is straightforward to check that the same bound holds also when $\bx=\bx'$. The final result is obtained from the fact that
\[
\sup_{0\le v\le1}\frac{\sqrt v}{r^2+v}
=
\begin{cases}
\dfrac{1}{2r}, & r^2\le 1,\\
\dfrac{1}{r^2+1}, & r^2>1.
\end{cases}
\]
\end{proof}

\begin{lemma}[RKHS Responses]
\label{lemma:Delta_rkhs_response_bound}
Let $k$ be a positive semidefinite normalised kernel with bounded diagonal
with RKHS $\mathcal H_k$ and let $r>0$. Suppose that the responses are
generated by a fixed function $f_*\in\mathcal H_k$ i.e., for every
admissible dataset $D=(X,\by)$ with $X=(\bx_1,\ldots,\bx_n)$, we have $y_i=f_*(\bx_i)$, $i=1,\ldots,n$. Let $\mu_D$ denote the GP posterior mean as defined in Section \ref{sec:setup}. Then 
\[
\Delta_n(r)
\le
\|f_*\|_{\mathcal H_k}\,
\frac{V_n(r)}{r^2+V_n(r)}.
\]
\end{lemma}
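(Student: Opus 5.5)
The plan is to rewrite $\mu_D-\mu_{D'}$ as a difference of two rank-one operators applied to a single residual function. The operator-norm estimate from the proof of Lemma~\ref{lemma:boundedResp_Delta_bound} then gives the factor $V_n(r)/(r^2+V_n(r))$. What remains is to show that the residual has $\mathcal H_-$-norm at most $\|f_*\|_{\mathcal H_k}$.

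\textbf{Step 1 (structure of $\mathcal H_-$).} Write $\Phi:=\sum_{i=1}^{n-1}k(\bx_i,\cdot)\otimes k(\bx_i,\cdot)^*$ for the sampling operator on $\mathcal H_k$, so that $\langle g,\Phi g\rangle_{\mathcal H_k}=\|g(X_-)\|_2^2$. I will identify $\mathcal H_-$ with $\mathcal H_k$ as sets, with equivalent norms
\[
\|g\|_{\mathcal H_-}^2=\|g\|_{\mathcal H_k}^2+r^{-2}\|g(X_-)\|_2^2=\left\langle g,(I+r^{-2}\Phi)g\right\rangle_{\mathcal H_k}.
\]
To verify this, define the Hilbert space $\mathcal G$ as $\mathcal H_k$ carrying the right-hand inner product, and check that $k_{D_-}(\bx,\cdot)$ reproduces in $\mathcal G$. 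Since $k_{D_-}(\bx,\cdot)=(I+r^{-2}\Phi)^{-1}k(\bx,\cdot)$, the reproducing property is immediate. The inverse identity itself is the Woodbury identity applied to $K_{XX}^{(r)}$-type matrices, and checking it carefully is the main technical obstacle.

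\textbf{Step 2 (operator form of the update).} Set $g:=f_*-\mu_{D_-}\in\mathcal H_-$, and let $H_{\bx}:=(h_{\bx}\otimes h_{\bx}^*)/(r^2+v(\bx))$ with $h_{\bx}=k_{D_-}(\bx,\cdot)$, as in the proof of Lemma~\ref{lemma:boundedResp_Delta_bound}. By the reproducing property, $H_{\bx}g=\big(f_*(\bx)-\mu_{D_-}(\bx)\big)h_{\bx}/(r^2+v(\bx))$. Because $y=f_*(\bx)$, the update formula \eqref{eq:update_formula} gives $\mu_D=\mu_{D_-}+H_{\bx}g$, and likewise $\mu_{D'}=\mu_{D_-}+H_{\bx'}g$. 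Hence
\[
\mu_D-\mu_{D'}=(H_{\bx}-H_{\bx'})g.
\]
The earlier proof showed $\|H_{\bx}-H_{\bx'}\|_{op}\le V_n(r)/(r^2+V_n(r))$, so it follows that
\[
\|\mu_D-\mu_{D'}\|_{\mathcal H_-}\le \frac{V_n(r)}{r^2+V_n(r)}\,\|g\|_{\mathcal H_-}.
\]

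\textbf{Step 3 (contraction).} The common-core mean is kernel ridge regression applied to $f_*$, namely $\mu_{D_-}=\Phi(\Phi+r^2I)^{-1}f_*$. This follows from the representer identity $k(\cdot,X_-)(K_-+r^2I)^{-1}f_*(X_-)$ together with a push-through argument. Consequently $g=r^2(\Phi+r^2I)^{-1}f_*$. Using Step 1,
\[
\|g\|_{\mathcal H_-}^2=\left\langle f_*,\,r^4(\Phi+r^2I)^{-1}(I+r^{-2}\Phi)(\Phi+r^2I)^{-1}f_*\right\rangle_{\mathcal H_k}=\left\langle f_*,\,r^2(\Phi+r^2I)^{-1}f_*\right\rangle_{\mathcal H_k}.
\]
Since $0\preceq\Phi$, we have $r^2(\Phi+r^2I)^{-1}\preceq I$, and therefore $\|g\|_{\mathcal H_-}\le\|f_*\|_{\mathcal H_k}$.

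Combining Steps 2 and 3 and taking the supremum over neighbouring pairs $D\sim D'$ gives the bound stated in the lemma.
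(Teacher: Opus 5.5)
Your proof is correct and is essentially the paper's argument. It uses the same identification of $\mathcal H_-$ with $\mathcal H_k$ under the inner product $\langle g,g'\rangle_{\mathcal H_k}+r^{-2}g(X_-)^Tg'(X_-)$, the same representation $\mu_D-\mu_{D'}=(H_{\bx}-H_{\bx'})(f_*-\mu_{D_-})$ with operator-norm bound $V_n(r)/(r^2+V_n(r))$, and the same residual bound $\|f_*-\mu_{D_-}\|_{\mathcal H_-}\le\|f_*\|_{\mathcal H_k}$. The differences are only in presentation. You phrase everything through the sampling operator $\Phi$ and close with $r^2(\Phi+r^2I)^{-1}\preceq I$, while the paper checks the reproducing property directly via $h_{\bx}(X_-)=r^2\ba(\bx)$ and closes with $\|\rho_-\|_{\mathcal H_-}^2=\langle f_*,\rho_-\rangle_{\mathcal H_k}$ plus Cauchy--Schwarz. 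The Woodbury step you flag as the main obstacle is just the routine push-through identity.
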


\begin{proof}
Fix neighbouring datasets $D=D_-\cup\{(\bx,f_*(\bx))\}$, $D'=D_-\cup\{(\bx',f_*(\bx'))\}$. We use the same notation and similar methodology as in the proof of
Lemma~\ref{lemma:boundedResp_Delta_bound}. In particular, let 
\[
K_-=K(X_-,X_-), \quad \bk(\bx)=K(X_-,\bx), \quad \ba(\bx)=(K_-+r^2I)^{-1}\bk(\bx)
\]
and write $h_{\bx}:=k_{D_-}(\bx,\cdot)$, $v(\bx)=k_{D_-}(\bx,\bx)=\|h_{\bx}\|_{\mathcal H_-}^2$. We first derive an explicit expression for the inner product in
$\mathcal H_-$. The spaces $\mathcal H_-$ and $\mathcal H_k$ agree as
sets and for $g,g'\in\mathcal H_k$ their inner products are related
by
\begin{equation}\label{eq:Hminus_inner_product}
\langle g,g'\rangle_{\mathcal H_-}
=
\langle g,g'\rangle_{\mathcal H_k}
+
\frac{1}{r^2}g(X_-)^Tg'(X_-).
\end{equation}
Indeed, since $h_{\bx}(X_-)
=
\bk(\bx)-K_-\ba(\bx)
=
r^2\ba(\bx)$ (see Lemma~\ref{lemma:boundedResp_Delta_bound} for the proof of the last equality), we have for every $g\in\mathcal H_k$ that
\begin{align*}
\langle g,h_{\bx}\rangle_{\mathcal H_-}
&=
\langle g,h_{\bx}\rangle_{\mathcal H_k}
+
\frac{1}{r^2}g(X_-)^Th_{\bx}(X_-)
=
g(\bx)-g(X_-)^T\ba(\bx)
+
g(X_-)^T\ba(\bx)
=
g(\bx).
\end{align*}
Thus, $\mathcal H_k$ equipped with the inner product on the RHS of~\eqref{eq:Hminus_inner_product} is a Hilbert space whose reproducing kernel is $k_{D_-}$. By uniqueness of the RKHS associated with a reproducing kernel, this space coincides with $\mathcal H_-=\mathcal H_{k_{D_-}}$ as a Hilbert space of functions and \eqref{eq:Hminus_inner_product} holds for every $g,g'\in\mathcal H_-$. In particular,
\begin{equation}\label{eq:Hk_Hminus_norm}
\|g\|_{\mathcal H_k}\leq \|g\|_{\mathcal H_-}.
\end{equation}

Consider now the residual difference $\rho_-:=f_*-\mu_-\in\mathcal{H}_k$. We claim that
\begin{equation}\label{eq:common_core_residual_bound}
\|\rho_-\|_{\mathcal H_-}
\leq
\|f_*\|_{\mathcal H_k}.
\end{equation}
To see this, put $\bb:=(K_-+r^2I)^{-1}f_*(X_-)$. Then $\mu_-(\cdot)=\bb^T\bk(\cdot)$ and $\rho_-(X_-)
=
f_*(X_-)-K_-\bb
=
r^2\bb
$.
By~\eqref{eq:Hminus_inner_product}, we get
\begin{align*}
\langle \rho_-,\rho_-\rangle_{\mathcal H_-}
=
\langle f_*-\mu_-,\rho_-\rangle_{\mathcal H_k}
+
\frac{1}{r^2}\rho_-(X_-)^T\rho_-(X_-)
=
\langle f_*,\rho_-\rangle_{\mathcal H_k}
-
\bb^T\rho_-(X_-)
+
\bb^T\rho_-(X_-)
=
\langle f_*,\rho_-\rangle_{\mathcal H_k}.
\end{align*}
Using~\eqref{eq:Hk_Hminus_norm} and Cauchy-Schwarz gives $\|\rho_-\|_{\mathcal H_-}^2
=
\langle f_*,\rho_-\rangle_{\mathcal H_k}
\leq
\|f_*\|_{\mathcal H_k}
\|\rho_-\|_{\mathcal H_k}
\leq
\|f_*\|_{\mathcal H_k}
\|\rho_-\|_{\mathcal H_-}
$
which proves~\eqref{eq:common_core_residual_bound}.

As in the proof of
Lemma~\ref{lemma:boundedResp_Delta_bound} define
\[
H_{\bx}
:=
\frac{h_{\bx}\otimes h_{\bx}^*}{r^2+v(\bx)},
\]
where $(h_{\bx}\otimes h_{\bx}^*)g
=
\langle h_{\bx},g\rangle_{\mathcal H_-}h_{\bx}$. By the reproducing property in $\mathcal H_-$,
\[
H_{\bx}\rho_-
=
\frac{\rho_-(\bx)}{r^2+v(\bx)}h_{\bx}
=
\frac{f_*(\bx)-\mu_-(\bx)}
{r^2+v(\bx)}h_{\bx}.
\]
Thus, the one-point update formula~\eqref{eq:update_formula} can be written as
\begin{equation}\label{eq:rkhs_operator_update}
\mu_D-\mu_{D'}
=
(H_{\bx}-H_{\bx'})\rho_-.
\end{equation}

As explained in the proof of
Lemma~\ref{lemma:boundedResp_Delta_bound}, the operators $H_{\bx}$ and $H_{\bx'}$ are positive semidefinite
rank-one operators in $\mathcal H_-$ and their respective nonzero
eigenvalues are
\[
\frac{\|h_{\bx}\|_{\mathcal H_-}^2}{r^2+v(\bx)}
=
\frac{v(\bx)}{r^2+v(\bx)}
\leq
c_n(r):=\frac{V_n(r)}{r^2+V_n(r)}.
\]
Hence, we have $0\preceq H_{\bx},H_{\bx'}\preceq c_n(r)I$ and thus $-c_n(r)I
\preceq
H_{\bx}-H_{\bx'}
\preceq
c_n(r)I
$
and 
$
\|H_{\bx}-H_{\bx'}\|_{\mathrm{op}}
\leq c_n(r)$. Combining this with~\eqref{eq:rkhs_operator_update} and
\eqref{eq:common_core_residual_bound} yields
\[
\|\mu_D-\mu_{D'}\|_{\mathcal H_-}
\leq
\|H_{\bx}-H_{\bx'}\|_{\mathrm{op}}
\|\rho_-\|_{\mathcal H_-}
\leq
\frac{V_n(r)}{r^2+V_n(r)}
\|f_*\|_{\mathcal H_k}.
\]
Taking the supremum over neighbouring datasets proves the result.
\end{proof}

\subsection{1D Exponential Kernel}

\begin{lemma}
\label{lemma:Delta_exp_1D_mu_bounded}
Let $\Omega_X\subseteq\mathbb R$ and let $k(x,x')=\exp\left(-|x-x'|/\ell\right)$ with $\ell>0$. Suppose that all admissible responses satisfy $|y_i|\le M_Y$. Then, for every admissible dataset $D$ and for all $x\in \mathbb R$ we have $|\mu_{D}(x)|\leq M_Y$.
\end{lemma}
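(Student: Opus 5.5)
The plan is to show that, for every $x$, the posterior mean $\mu_D(x)$ is a combination of the responses $y_i$ with \emph{nonnegative} weights summing to at most one. The bound $|\mu_D(x)|\le M_Y$ then follows immediately. I would not work with the weight vector $(K_{XX}^{(r)})^{-1}\bk_X(x)$ directly. Instead I would use the Markov (Ornstein--Uhlenbeck) structure of the exponential kernel in one dimension.

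\textbf{Setup.} Fix $D$ and a test point $x$, and consider the latent vector $\mathbf f$ of the prior field $f\sim\mathcal{GP}(0,k)$ at the sorted distinct points $z_1<\dots<z_m$ of $X\cup\{x\}$. Repeated covariates are merged beforehand. A repeated covariate with $n_j$ observations is equivalent to one observation of the average response with noise variance $r^2/n_j$. Since the average still lies in $[-M_Y,M_Y]$, it is enough to handle distinct sites with site-dependent noise levels $r_j^2>0$.

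\textbf{Step 1 (precision and posterior mean).} With $\rho_j:=e^{-(z_{j+1}-z_j)/\ell}\in(0,1)$, the covariance $k(z_i,z_j)=\prod_{l=i}^{j-1}\rho_l$ has the classical tridiagonal inverse $Q$ (Gauss--Markov property). Its entries are:
\begin{itemize}
\item off-diagonal: $Q_{j,j+1}=-\rho_j/(1-\rho_j^2)<0$;
\item interior diagonal: $Q_{jj}=\frac{1}{1-\rho_{j-1}^2}+\frac{\rho_j^2}{1-\rho_j^2}$, with the analogous one-sided formulas at the two ends.
\end{itemize}
Let $P$ be the diagonal matrix with $P_{jj}=r_j^{-2}$ at observed sites and $0$ elsewhere, and let $\tilde{\by}$ be the responses placed at the observed sites (zero elsewhere). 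The posterior mean of $\mathbf f$ is then
\[
\mathbf m=(Q+P)^{-1}P\tilde{\by},
\]
and $\mu_D(x)$ is the component of $\mathbf m$ corresponding to $x$.

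\textbf{Step 2 (nonnegative weights).} $Q+P$ is symmetric positive definite with nonpositive off-diagonal entries, so it is a nonsingular M-matrix and $(Q+P)^{-1}\ge 0$ entrywise. Hence $W:=(Q+P)^{-1}P\ge 0$ entrywise, i.e.\ each component of $\mathbf m$ is a nonnegative combination of the responses.

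\textbf{Step 3 (row sums at most one).} It suffices to show $W\mathbf 1_{obs}\le \mathbf 1$, where $\mathbf 1_{obs}$ is the indicator vector of the observed sites. Since $(Q+P)^{-1}\ge0$, this follows from the componentwise inequality
\[
(Q+P)\mathbf 1\ \ge\ P\mathbf 1,
\]
which is equivalent to $Q\mathbf 1\ge 0$ componentwise. I expect this row-sum computation to be the main obstacle, though it is explicit:
\begin{itemize}
\item interior row: $(Q\mathbf 1)_j=\frac{1-\rho_{j-1}}{1+\rho_{j-1}}\cdot\frac{1}{1-\rho_{j-1}}\cdot(1-\rho_{j-1})+\dots$, which simplifies to $\frac{1}{1+\rho_{j-1}}-\frac{\rho_j}{1+\rho_j}$ plus correction terms; this should reduce to a sum of terms of the form $\frac{1-\rho}{1+\rho}\ge0$;
\item end rows: these equal $\frac{1}{1+\rho_1}$ and $\frac{1}{1+\rho_{m-1}}$, which are positive.
\end{itemize}
I would verify the interior-row simplification carefully, or alternatively check it via the identity $K^{-1}\mathbf 1=\text{(weights of the BLUE of a constant mean)}$, which are known to be nonnegative for OU sampling.

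\textbf{Conclusion.} By Steps 2 and 3, $\mu_D(x)=\sum_i w_i y_i$ with $w_i\ge 0$ and $\sum_i w_i\le 1$. Hence
\[
|\mu_D(x)|\le M_Y\sum_i w_i\le M_Y.
\]
The cases $m=1$ and $x$ equal to a data site are covered by the same argument, and $x$ outside the data range simply corresponds to an end row.
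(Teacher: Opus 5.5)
Your proof is correct, and it handles points off the data by a genuinely different route from the paper.

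The paper works only with the training Gram matrix. It writes $\mu_D(X)=(I+r^2K^{-1})^{-1}\by$ and uses the same tridiagonal form of $K^{-1}$ (nonpositive off-diagonals, nonnegative row sums). A discrete maximum principle at the extremal index then gives $|\mu_D(x_i)|\le M_Y$, but only at the training sites. For general $x$ the paper needs a separate analytic step:
\begin{itemize}
\item between consecutive knots, $\mu_D$ is a combination of $e^{\pm x/\ell}$, so $\mu_D''=\mu_D/\ell^2$ forces $|\mu_D|$ to peak at the endpoints;
\item on the exterior intervals, $\mu_D$ decays exponentially away from the data.
\end{itemize}

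You instead adjoin $x$ as an unobserved node of the Ornstein--Uhlenbeck chain. This replaces both of the paper's steps with one M-matrix argument: $(Q+P)^{-1}\ge 0$ entrywise together with $Q\mathbf 1\ge 0$. It also avoids the interior/exterior case analysis. As a bonus, it shows explicitly that $\mu_D(x)$ is a nonnegative combination of the (averaged) responses with total weight at most one. The cost is that you must invoke the information-form posterior mean and the OU precision matrix on the augmented grid. The paper, by contrast, stays with the matrix $K_{XX}^{(r)}$ that appears in the definition of $\mu_D$. The treatment of repeated covariates is the same in both.

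One thing to clean up: your displayed interior-row computation is garbled, and there are no correction terms. Directly,
\[
(Q\mathbf 1)_j=\frac{1-\rho_{j-1}}{1-\rho_{j-1}^2}-\frac{\rho_j(1-\rho_j)}{1-\rho_j^2}=\frac{1}{1+\rho_{j-1}}-\frac{\rho_j}{1+\rho_j}=\frac{1-\rho_{j-1}\rho_j}{(1+\rho_{j-1})(1+\rho_j)}>0 .
\]
This also equals $\tfrac12\frac{1-\rho_{j-1}}{1+\rho_{j-1}}+\tfrac12\frac{1-\rho_j}{1+\rho_j}$, so your guess was right up to the factor $\tfrac12$.
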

\begin{proof}
For $D=(X,\by)$ write $X=(x_1,\dots,x_n)$ and assume that $x_1<x_2<\dots<x_n$. Denoting $K:=K(X,X)$ we have 
\begin{align*}
\mu_D(X) = K(K+r^2 I)^{-1}\by & = K K^{-1}(I+r^2 K^{-1})^{-1}\by=(I+r^2K^{-1})^{-1}\by.
\end{align*}
For $i<j$ we can represent the entries of $K$ as
\[
K_{ij}=\exp\left(\frac{x_i-x_j}{\ell}\right)=\prod_{k=1}^{j-1}\rho_k,\qquad \rho_k:=\exp\left(\frac{x_k-x_{k+1}}{\ell}\right).
\]
One can verify by a straightforward calculation that $K^{-1}$ is tridiagonal with entries given by
\begin{gather*}
\left[K^{-1}\right]_{11}=\frac{1}{1-\rho_1^2}, \quad \left[K^{-1}\right]_{nn}=\frac{1}{1-\rho_{n-1}^2}, \quad \left[K^{-1}\right]_{ii}=\frac{1}{1-\rho_{i-1}^2}+\frac{\rho_i^2}{1-\rho_{i}^2}, \quad 2\le i\le n-1,
\\
\left[K^{-1}\right]_{i,i+1}=\left[K^{-1}\right]_{i+1,i} = -\frac{\rho_i}{1-\rho_{i}^2}.
\end{gather*}
Using these formulas we also get that $K^{-1}$ has nonpositive off-diagonal entries and non-negative row-sums. Consequently,  $A:=I+r^2K^{-1}$ has nonpositive off-diagonal entries and row-sums $\ge 1$.

Now find index $i_*$ such that $\mu_D(x_{i_*})=\max_{i}\mu_D(x_{i})$. Then, for any $j$ we have $\mu_D(x_{j})\le \mu_D(x_{i_*})$ Since $A_{ij}\le 0$ for $i\neq j$, we also have $A_{i_*,j}\mu_D(x_{j})\ge A_{i_*,j}\mu_D(x_{i_*})$. Summing over $j$ yields
\[
y_{i_*} = \left[A\,\mu_D(X)\right]_{i_*} = \sum_{j}A_{i_*,j}\mu_D(x_{j})\ge \mu_D(x_{i_*})\sum_{j}A_{i_*,j}\geq \mu_D(x_{i_*}),
\]
since the row-sum satisfies $\sum_{j}A_{i_*,j}\ge 1$. By the response-boundedness, this yields $\mu_D(x_{j})\le\mu_D(x_{i_*})\le M_Y$. By picking $\underline i_*$ such that $-\mu_D(x_{\underline i_*})=\max_{i}\left(-\mu_D(x_{i})\right)$ and repeating the reasoning above, we get $-\mu_D(x_{j})\le-\mu_D(x_{\underline i_*})\le M_Y$, i.e. $\mu_D(x_{j})\ge -M_Y$ and consequently
\[
\left|\mu_D(x_{j})\right|\leq M_Y\quad\mathrm{for\ all}\quad x_j\in X.
\]
It remains to show that $|\mu_D(x)|\leq M_Y$ for any $x$ outside the training set $X$. Assume first that $x_i\le x\le x_{i+1}$ for some $1\le i<n$. We claim that $|\mu_D(x)|$ is convex on the interval $[x_i,x_{i+1}]$, thus
\[
\max_{x_i\le x\le x_{i+1}}\,|\mu_D(x)|=\max\{|\mu_D(x_i)|,|\mu_D(x_{i+1})|\}\leq M_Y.
\] 
To see this, note that $\mu_D(x)$ as a function of $x$ is a linear combination of $e^{-x/\ell}$ and $e^{x/\ell}$, so it satisfies the equation $\partial_x^2\mu_D(x) = \mu_D(x)/\ell^2$. In the region where $\mu_D(x)\ge 0$ this gives $\partial_x^2|\mu_D(x)|>0$ i.e. $|\mu_D(x)|$ is convex. In the region where $\mu_D(x)<0$ we have $|\mu_D(x)|=-\mu_D(x)$ and thus $\partial_x^2|\mu_D(x)|=-\mu_D(x)/\ell^2>0$ as well. The same conclusion holds on the exterior intervals (if they are present). Indeed, on the left exterior interval $\Omega_X\cap(-\infty,x_1)$, the function $\mu_D(x)$ is of the form $\mu_D(x)=a_- e^{x/\ell}$ for some constant $a_-$, and hence $|\mu_D(x)|$ is maximised at $x_1$ if $x_1$ is finite, while it tends to zero as $x_1\to-\infty$ when $\Omega_X$ is unbounded to the left. Analogous reasoning holds for the right exterior interval.

When the dataset $D$ contains repeated covariates, then we consider the collapsed dataset  $\bar D = (\bar X, \bar \by)$ of the unique covariates $\bar x_1<\bar x_2<\dots<\bar x_m$ where $\bar x_k$ appears $n_k$ times in the original $D$, $n_k\ge1$. Suppose that $y_{k,1},\dots, y_{k,n_k}$ are the responses of $\bar x_k$ in $D$. Then, we define the collapsed response of $\bar x_k$ as $\bar y_k:=\frac{1}{n_k}\sum_{i=1}^{n_k} y_{k,i}$. In this notation, at the unique training points we have
\[
\mu_D(\bar X)=\bar\mu_{\bar D}(\bar X):=(I+R \bar K^{-1})^{-1}\bar \by,
\]
 where $R$ is a diagonal matrix with entries $R_{kk}=r^2/n_k$ and $\bar K = K(\bar X, \bar X)$. The rest of the proof follows exactly the same as before with $\mu_D$ replaced with $\bar\mu_D$.
\end{proof}

\begin{lemma}[Exponential kernel in 1D]
\label{lemma:Delta_exp_1D}
Let \(\Omega_X\subseteq\mathbb R\) be a possibly unbounded interval and let $k(x,x')=\exp\left(-|x-x'|/\ell\right)$, 
$x,x'\in \Omega_X$ with $\ell>0$. Suppose that all admissible responses satisfy $|y_i|\le M_Y$. Then, for every $r>0$,
\[
\Delta_n(r)
\le 4M_Y \sqrt{\Phi_n(r)},\qquad 
\Phi_n(r)=
\sup_{0\le u\le V_n(r)}
\frac{u}{(u+r^2)^2}=
\begin{cases}
\dfrac{1}{4r^2}, & V_n(r)\ge r^2,\\[2mm]
\dfrac{V_n(r)}{\left(V_n(r)+r^2\right)^2}, & V_n(r)\le r^2.
\end{cases}
\]
\end{lemma}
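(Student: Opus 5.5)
We follow the first part of the proof of Lemma~\ref{lemma:boundedResp_Delta_bound} and replace the crude residual bound with the one supplied by Lemma~\ref{lemma:Delta_exp_1D_mu_bounded}. Fix neighbouring datasets $D=D_-\cup\{(x,y)\}$ and $D'=D_-\cup\{(x',y')\}$. By the one-step update formula \eqref{eq:update_formula},
\[
\mu_D-\mu_{D'}=\eta\,k_{D_-}(\cdot,x)-\eta'\,k_{D_-}(\cdot,x'),
\qquad
\eta=\frac{y-\mu_{D_-}(x)}{v(x)+r^2},\quad \eta'=\frac{y'-\mu_{D_-}(x')}{v(x')+r^2},
\]
where $v(x)=k_{D_-}(x,x)$. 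The triangle inequality and the reproducing property in $\mathcal H_-$ then give
\[
\|\mu_D-\mu_{D'}\|_{\mathcal H_-}\le |\eta|\sqrt{v(x)}+|\eta'|\sqrt{v(x')}.
\]

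The key new input is that the common core $D_-$ is itself an admissible dataset of size $n-1$ with responses bounded by $M_Y$, for the same exponential kernel. Lemma~\ref{lemma:Delta_exp_1D_mu_bounded} therefore gives $|\mu_{D_-}(u)|\le M_Y$ for every $u\in\Omega_X$. Combined with $|y|,|y'|\le M_Y$, this yields
\[
|y-\mu_{D_-}(x)|\le 2M_Y,\qquad |y'-\mu_{D_-}(x')|\le 2M_Y.
\]
Consequently
\[
|\eta|\sqrt{v(x)}\le 2M_Y\frac{\sqrt{v(x)}}{v(x)+r^2}\le 2M_Y\sqrt{\Phi_n(r)},
\]
using $0\le v(x)\le V_n(r)$. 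The same bound holds for the $x'$ term, so summing gives $\|\mu_D-\mu_{D'}\|_{\mathcal H_-}\le 4M_Y\sqrt{\Phi_n(r)}$. Taking the supremum over neighbouring pairs and invoking Lemma~\ref{lem:Hn_RKHS_reduction} gives the bound on $\Delta_n(r)$. The degenerate case $n=1$ (empty core, $\mu_{D_-}\equiv0$) is covered by the same argument.

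It remains to evaluate $\Phi_n(r)$, which is elementary calculus. The function $g(u)=u/(u+r^2)^2$ has derivative
\[
g'(u)=\frac{r^2-u}{(u+r^2)^3},
\]
so $g$ increases on $[0,r^2]$ and decreases afterwards. If $V_n(r)\ge r^2$, the supremum over $[0,V_n(r)]$ is attained at $u=r^2$, giving $1/(4r^2)$. Otherwise it is attained at the endpoint $u=V_n(r)$, giving $V_n(r)/(V_n(r)+r^2)^2$.

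The only substantive step is the uniform bound $|\mu_{D_-}|\le M_Y$, and this has already been established, including for repeated covariates and for points outside the training set. The one detail to check is that Lemma~\ref{lemma:Delta_exp_1D_mu_bounded} is applied on all of $\Omega_X$, so that it covers the new covariates $x$ and $x'$, which need not lie in the convex hull of $X_-$; the exterior-interval part of that lemma handles exactly this case.
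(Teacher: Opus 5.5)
Your proposal is correct and follows the paper's proof essentially step for step. Both arguments use the one-step update formula, the triangle inequality with $\|k_{D_-}(\cdot,x)\|_{\mathcal H_-}=\sqrt{v(x)}$, the residual bound $|y-\mu_{D_-}(x)|\le 2M_Y$ from Lemma~\ref{lemma:Delta_exp_1D_mu_bounded} applied to the common core, and the supremum over $v\in[0,V_n(r)]$. Your derivative computation for the closed form of $\Phi_n(r)$ is a small addition that the paper leaves implicit.
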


\begin{proof}
Let $D-=(X_-,\by_-)$ be the common core of $D\sim D'$. Denote by $\mu_{D_-}$ and $k_{D_-}$ the corresponding posterior mean and covariance and set $v(\bx)=k_{D_-}(\bx,\bx)$. By the one-step update formula \ref{eq:update_formula} we have
\[
\mu_{D_-\cup\{(\bx,y)\}}-\mu_{D_-}
=
\frac{y-\mu_{D_-}(\bx)}{r^2+v(\bx)}\,k_{D_-}(\bx,\cdot).
\]
The reproducing property gives $\sqrt{v(\bx)}=\|k_{D_-}(\bx,\cdot)\|_{\mathcal H_-}$. The 1D-exponential kernel has the property that $|\mu_{D}(\bx)|\leq M_Y$ for any $D$ and any $\bx$, see Lemma \ref{lemma:Delta_exp_1D_mu_bounded}. Hence,
\[
\left|\frac{y-\mu_{D_-}(\bx)}{r^2+v(\bx)}\right|\leq \frac{2M_Y}{r^2+v(\bx)}.
\]
Set $D=D_-\cup\{(\bx,y)\}$ and $D'=D_-\cup\{(\bx',y')\}$. By the one-step update formula \ref{eq:update_formula} we have
\[
\|\mu_{D}-\mu_{D'}\|_{\mathcal H_-} \leq 2M_Y\left(\frac{\|k_{D_-}(\bx,\cdot)\|_{\mathcal H_-}}{r^2+v(\bx)} + \frac{\|k_{D_-}(\bx',\cdot)\|_{\mathcal H_-}}{r^2+v(\bx')}\right) \le 4M_Y \sup_{0\leq v \leq V_n(r)}\frac{\sqrt{v}}{r^2+v}=4M_Y\sqrt{\Phi_n(r)}.
\]
Taking the supremum over $D\sim D'$ yields the result.
\end{proof}

\begin{lemma}[Refined mean-term bounds]
\label{lem:refined_mean_term}
Suppose that a uniform pointwise sensitivity bound
\[
S\geq
\sup_{D\sim D'}\sup_{\bx\in\Omega_X}
|\mu_D(\bx)-\mu_{D'}(\bx)|
\]
is available. Then, for every $1<\alpha<1+\frac{r^2}{V_n(r)}$ one has for every test set $X_T$ and every $D\sim D'$
\[
\frac{\alpha}{2\sigma^2}
\delta_T^TK_\alpha^\dagger\delta_T
\leq
\frac{\alpha}{2\sigma^2}
\left[
\Delta_n^2(r)
+
\frac{\alpha
\min\left\{
V_n(r)\Delta_n^2(r),S^2
\right\}}
{r^2-(\alpha-1)V_n(r)}
\right],
\]
where $\delta_T=\mu_D(X_T)-\mu_{D'}(X_T)$ and $K_\alpha=\alpha K_{D'}+(1-\alpha)K_D$.
\end{lemma}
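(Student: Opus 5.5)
The plan is to expand $K_\alpha$ around the common-core covariance $C$, discard the favourable rank-one term, and invert the remaining rank-one downdate exactly via Sherman--Morrison.

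\textbf{Step 1 (decomposition).} By the one-step update formulas \eqref{eq:update_formula}, with $P:=\bc(\bx)\bc(\bx)^T/(v(\bx)+r^2)$ and $P':=\bc(\bx')\bc(\bx')^T/(v(\bx')+r^2)$, I write
\[
K_\alpha=\alpha K_{D'}-(\alpha-1)K_D=C-\alpha P'+(\alpha-1)P\;\succeq\;B:=C-\alpha P'.
\]
As in the proof of Lemma~\ref{lem:local_common_core_sandwich}, $\bc(\bx')\bc(\bx')^T\preceq v(\bx')C$. Hence
\[
B\succeq \frac{r^2-(\alpha-1)v(\bx')}{v(\bx')+r^2}\,C,
\]
which is positive on $\operatorname{range}(C)$ in the admissible range of $\alpha$. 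Therefore $K_\alpha$, $B$ and $C$ share the nullspace of $C$. Moreover, $\delta_T\in\operatorname{range}(C)$, as shown in the proof of Proposition~\ref{prop:rdp_main}. Using the variational identity $\bu^TA^\dagger\bu=\sup_{\bv}\{2\bu^T\bv-\bv^TA\bv\}$ from Proposition~\ref{prop:rdp_with_extra_gp_noise}, it follows that $\delta_T^TK_\alpha^\dagger\delta_T\le\delta_T^TB^\dagger\delta_T$.

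\textbf{Step 2 (Sherman--Morrison on $\operatorname{range}(C)$).} Write $\bc':=\bc(\bx')$ and $v':=v(\bx')$. Restricted to $\operatorname{range}(C)$, the rank-one downdate gives
\[
\delta_T^TB^\dagger\delta_T=\delta_T^TC^\dagger\delta_T+\frac{\alpha\,(\bc'^TC^\dagger\delta_T)^2}{v'+r^2-\alpha\,\bc'^TC^\dagger\bc'}.
\]
Since $\bc'\bc'^T\preceq v'C$, we have $\bc'^TC^\dagger\bc'\le v'$. The denominator is therefore at least $r^2-(\alpha-1)v'\ge r^2-(\alpha-1)V_n(r)>0$. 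For the first term, Lemma~\ref{lem:Hn_RKHS_reduction} gives $\delta_T^TC^\dagger\delta_T\le\Delta_n^2(r)$.

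\textbf{Step 3 (the two bounds on the numerator).} The bound $V_n\Delta_n^2$ follows from Cauchy--Schwarz in the $C^\dagger$-inner product:
\[
(\bc'^TC^\dagger\delta_T)^2\le(\bc'^TC^\dagger\bc')(\delta_T^TC^\dagger\delta_T)\le V_n(r)\Delta_n^2(r).
\]
For the bound $S^2$, I would first reduce to test sets containing $\bx'$. The reduction rests on a monotonicity claim: $\delta_T^TK_\alpha^\dagger\delta_T$ for a Gaussian-type quadratic form can only increase when $X_T$ is enlarged to $X_T\cup\{\bx'\}$. This holds because the Mahalanobis form of a sub-vector is dominated by that of the full vector, by a Schur-complement argument, or by taking $\bv$ supported on the original coordinates in the variational formula. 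Once $\bx'\in X_T$, we have $\bc'=C\be_{\bx'}$. Then $\bc'^TC^\dagger\delta_T=\be_{\bx'}^TCC^\dagger\delta_T=\delta_T(\bx')$, since $\delta_T\in\operatorname{range}(C)$. Hence the numerator equals $(\mu_D(\bx')-\mu_{D'}(\bx'))^2\le S^2$.

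Taking the minimum of the two numerator bounds, multiplying by $\alpha/(2\sigma^2)$, and taking suprema yields the claim. The main obstacle is the $S^2$ branch. The monotonicity of the pseudoinverse quadratic form under enlarging $X_T$ must be checked carefully in the possibly singular case; I would handle it via the variational formula restricted to vectors supported on the original coordinates. One must also verify that Step 1's discarding of $(\alpha-1)P$ is compatible with this enlargement, and it is, since the decomposition in Step 1 holds for every test set.
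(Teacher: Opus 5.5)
Your proposal is correct and follows essentially the same route as the paper: lower-bound $K_\alpha$ by the rank-one downdate $C-\alpha\,\bc'\bc'^T/(r^2+v')$, invert it by Sherman--Morrison on $\operatorname{range}(C)$, and bound the extra term either by Cauchy--Schwarz in the $C^\dagger$-form or, after enlarging $X_T$ to contain $\bx'$, by $S^2$. The differences are cosmetic. The paper enlarges $X_T$ to contain both $\bx$ and $\bx'$ up front, so that $\bc'^TC^\dagger\bc'=v'$ holds exactly, and justifies the monotonicity via the block-inverse formula. You instead use $\bc'^TC^\dagger\bc'\le v'$ and the variational characterisation of $\bu^TA^\dagger\bu$.
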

\begin{proof}
Let $D=D_-\cup\{(\bx,y)\}$, $D'=D_-\cup\{(\bx',y')\}$. Let $\widetilde X_T:=X_T\cup\{\bx,\bx'\}$ and denote the corresponding mean difference and mixture covariance by $\widetilde\delta_T$ and $\widetilde K_\alpha$. Applying the block-inverse formula to $\widetilde K_\alpha$ one can show that adding evaluation points to $X_T$ cannot decrease the associated mean-term i.e.
$
\delta_T^TK_\alpha^\dagger\delta_T
\leq
\widetilde\delta_T^T
\widetilde K_\alpha^\dagger
\widetilde\delta_T$. Thus, it suffices to prove the claimed bound for
$\widetilde X_T$. Relabelling $\widetilde X_T$ as $X_T$, we may thus
assume without loss of generality that $\{\bx,\bx'\}\subseteq X_T$.

For brevity, use the notation from the update formula \eqref{eq:update_formula} and write $v=v(\bx)$, $v'=v(\bx')$, $\bc=\bc(\bx)$, $\bc'=\bc(\bx')$. The update formula \eqref{eq:update_formula} for covariance gives
\begin{equation}\label{eq:K_alpha_succ_Mx}
K_\alpha = C
+
(\alpha-1)\frac{\bc.\bc^T}{r^2+v}
-
\alpha\frac{\bc'.{\bc'}^T}{r^2+v'}\succeq
C-\alpha\frac{\bc'.{\bc'}^T}{r^2+v'}
=:M_{\bx'}.
\end{equation}
As shown in the proof of Proposition~\ref{prop:rdp_main},
$\bc,\bc'$ and $\delta_T$ belong to $\operatorname{range}(C)$ and $K_\alpha$ has the same nullspace as
$C$ on the admissible range of $\alpha$. So, we may restrict all matrices to the common support $\operatorname{range}(C)$ on which
$C$ is positive definite. On this support $M_{\bx'}$ is positive definite and $\ker M_{\bx}=\ker C$. Indeed, as shown in Lemma~\ref{lem:local_common_core_sandwich} for any vector $\ba$ we have $\left(\ba^T\bc(\bx)\right)^2\leq v(\bx)\,\ba^TC\ba$. So, 
\begin{align*}
\ba^T.M_{\bx'}\ba=\ba^T.C\ba-\frac{\alpha}{r^2+v'}\left(\ba^T.\bc'\right)^2\geq \ba^T.C\ba\left(1-\frac{\alpha v'}{r^2+v'}\right)
\\
=\frac{r^2-(\alpha-1)v'}{r^2+v'}\ba^T.C\ba\geq \frac{r^2-(\alpha-1)V_n(r)}{r^2+V_n(r)}\ba^T.C\ba.
\end{align*}
On the admissible range of $\alpha$ the prefactor in the above expression is strictly positive. Thus, for every nonzero $\ba\in \operatorname{range}(C)$ we have $\ba^T.M_{\bx'}\ba>0$. This means that applying Moore-Penrose inverse to both sides of~\eqref{eq:K_alpha_succ_Mx} gives $K_\alpha^\dagger \preceq M_{\bx'}^\dagger$. Applying the Sherman-Morrison formula for the inverse or rank-one perturbation gives
\[
M_{\bx'}^\dagger
=
C^\dagger
+
\frac{\alpha}{r^2+v'-\alpha \bc'^T.C^\dagger \bc'}
C^\dagger\bc'{\bc'}^TC^\dagger=
C^\dagger
+
\frac{\alpha}{r^2-(\alpha-1) v'}
C^\dagger\bc'{\bc'}^TC^\dagger,
\]
where we have used that fact that ${\bc'}^TC^\dagger\bc'=v'$ which follows from the assumption that $\bx'\in X_T$. Hence,
\begin{align}
\delta_T^TK_\alpha^\dagger\delta_T=
\delta_T^TC^\dagger\delta_T
+
\frac{\alpha}{r^2-(\alpha-1) v'}
\left({\bc'}^TC^\dagger\delta_T\right)^2.
\label{eq:mean_direction_intermediate}
\end{align}
By the definition of $\Delta_n(r)$ we have $\delta_T^TC^\dagger\delta_T\leq \Delta_n(r)^2$. It remains to bound ${\bc'}^TC^\dagger\delta_T$. To this end, we use Cauchy-Schwarz for the positive semidefinite quadratic form
induced by $C^\dagger$ together with
${\bc'}^TC^\dagger\bc'=v'$ which gives
\begin{align*}
\left({\bc'}^TC^\dagger\delta_T\right)^2
\leq
\left({\bc'}^TC^\dagger\bc'\right)
\left(\delta_T^TC^\dagger\delta_T\right)
=
v'\delta_T^TC^\dagger\delta_T\leq V_n(r)\Delta_n(r)^2.
\end{align*}
On the other hand, $\bx'\in X_T$ implies that ${\bc'}^TC^\dagger\delta_T=\mu_D(\bx')-\mu_{D'}(\bx')$, so $\left({\bc'}^TC^\dagger\delta_T\right)^2\leq S^2$. Combining this with~\eqref{eq:mean_direction_intermediate} yields the result.
\end{proof}

\begin{corollary}[Refined RDP bound for 1D exponential kernel]
\label{coro:exp_1d}
Lemma~\ref{lemma:Delta_exp_1D_mu_bounded} provides $S=2M_Y$. Plugging this and the sensitivity bound from Lemma~\ref{lemma:Delta_exp_1D} into the refined mean-term bound from Lemma~\ref{lem:refined_mean_term} yields the following refined RDP bound.
\[
D_\alpha\left(\Pi_D(X_T)\middle\|\Pi_{D'}(X_T)\right)
\le
-\frac{\log\left(
1-\frac{\alpha(\alpha-1)\tau_n(r)^2}{1+\tau_n(r)}
\right)}{2(\alpha-1)}+
\frac{16M_Y^2\alpha}{2\sigma^2}
\left[
\Phi_n(r)
+
\frac{\alpha
\min\left\{
V_n(r) \Phi_n(r),1/4
\right\}}
{r^2-(\alpha-1)V_n(r)}
\right].
\]
This $S$-refinement improves the scaling of the RDP bound at small $r$.
\end{corollary}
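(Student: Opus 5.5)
The bound is assembled from three earlier results. The Gaussian R\'enyi formula \eqref{eq:gaussian_div} splits $D_\alpha(\Pi_D(X_T)\|\Pi_{D'}(X_T))$ into a determinant term and a mean term. The determinant term does not involve the mean at all, so the determinant half of the proof of Proposition~\ref{prop:rdp_main} applies verbatim. It uses the sandwich of Lemma~\ref{lem:local_common_core_sandwich} and the rank-two structure of $K_D-K_{D'}$, and bounds this term by $\psi_\alpha(\tau_n(r))$ for every $1<\alpha<1+r^2/V_n(r)$. This is exactly the first term of the claimed bound.

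The mean term $\frac{\alpha}{2\sigma^2}\delta_T^TK_\alpha^\dagger\delta_T$ is handled by Lemma~\ref{lem:refined_mean_term}. That lemma needs a uniform bound on $\Delta_n(r)$ and a uniform pointwise sensitivity $S$, and both are supplied by the 1D exponential kernel results.

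\textbf{Pointwise sensitivity.} Lemma~\ref{lemma:Delta_exp_1D_mu_bounded} gives $|\mu_D(x)|\le M_Y$ for every admissible dataset and every $x$. The triangle inequality then yields $|\mu_D(x)-\mu_{D'}(x)|\le 2M_Y$, so $S=2M_Y$ is admissible.

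\textbf{RKHS sensitivity.} Lemma~\ref{lemma:Delta_exp_1D} gives $\Delta_n(r)\le 4M_Y\sqrt{\Phi_n(r)}$, hence $\Delta_n^2(r)\le 16M_Y^2\Phi_n(r)$.

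\textbf{Substitution.} The bracket in Lemma~\ref{lem:refined_mean_term} is monotone increasing in $\Delta_n^2$ and in $S^2$, because $\min\{\cdot,\cdot\}$ is monotone in each argument and the denominator $r^2-(\alpha-1)V_n(r)$ is positive on the admissible range. So both upper bounds may be substituted directly. For the $\min$ term,
\[
\min\{V_n(r)\Delta_n^2(r),S^2\}\le\min\{16M_Y^2V_n(r)\Phi_n(r),4M_Y^2\}=16M_Y^2\min\{V_n(r)\Phi_n(r),1/4\}.
\]
Pulling the common factor $16M_Y^2$ out of the bracket gives
\[
\frac{16M_Y^2\alpha}{2\sigma^2}\left[\Phi_n(r)+\frac{\alpha\min\{V_n(r)\Phi_n(r),1/4\}}{r^2-(\alpha-1)V_n(r)}\right],
\]
which is the second term of the stated bound. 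Adding the two terms and noting that everything is uniform in $X_T$ and in $D\sim D'$ completes the argument.

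\textbf{Main obstacle.} No genuine difficulty remains at this level; the real content lives in the cited lemmas. The only point needing care is that Lemma~\ref{lem:refined_mean_term} is stated with the exact $\Delta_n$ and some admissible $S$, so the monotonicity used in the substitution step must be checked. One should also confirm that the admissible range of $\alpha$ coincides with the one in Proposition~\ref{prop:rdp_main}, so that the two term-wise bounds hold simultaneously.
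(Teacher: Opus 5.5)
Your proposal is correct and follows the paper's route. The determinant term is bounded by $\psi_\alpha(\tau_n(r))$ exactly as in Proposition~\ref{prop:rdp_main}, and the mean term comes from substituting $S=2M_Y$ and $\Delta_n^2(r)\le 16M_Y^2\Phi_n(r)$ into Lemma~\ref{lem:refined_mean_term}, with the factorisation $\min\{16M_Y^2V_n(r)\Phi_n(r),4M_Y^2\}=16M_Y^2\min\{V_n(r)\Phi_n(r),1/4\}$ giving the stated constant.
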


\subsection{$\mathcal O(1)$ sensitivity under data separation condition}
\label{app:Delta_kernel_sum_bounds}

The generic bounded-response estimate in
Lemma~\ref{lemma:boundedResp_Delta_bound} does not exploit spatial decay
of the kernel and consequently grows as $\mathcal O(\sqrt n)$. We next
give an $n$-independent bound under uniform bounds on the relevant kernel
sums. This condition is related to the strict diagonal dominance used by
\citet{pmlr-v84-smith18a} in their supplementary discussion of private
training inputs. There, diagonal dominance is used to bound the infinity norm
of the inverse regularized Gram matrix. Here, we combine the same matrix
control with the one-point posterior-update representation to bound the sensitivity $\Delta_n(r)$ of a posterior draw.

Throughout this subsection, assume that $k(\bx,\bx)=1$ and that
the responses satisfy $|y|\le M_Y$.

\begin{proposition}[Sensitivity under kernel-sum bounds]
\label{prop:Delta_kernel_sum_bounds}
Suppose that there exist constants $R_0,R_1<\infty$, independent of $n$,
such that every admissible common-core design
$X_-=(\bx_1,\ldots,\bx_{n-1})$ and every admissible replacement location $\bx$ satisfy
\[
\max_{1\le i\le n-1}
\sum_{j\ne i}|k(\bx_i,\bx_j)|
\le R_0,
\qquad
\sum_{i=1}^{n-1}|k(\bx,\bx_i)|
\le R_1.
\]
If $\gamma:=1+r^2-R_0>0$, then
\[
\Delta_n(r)
\le
\frac{M_Y}{r}
\left(
1+\frac{R_1}{\gamma}
\right).
\]
In particular, $\Delta_n(r)=\mathcal O(1)$ for fixed $r$, $R_0$, and
$R_1$.
\end{proposition}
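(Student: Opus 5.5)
Our plan is to follow the first part of the proof of Lemma~\ref{lemma:boundedResp_Delta_bound}, but replace the crude Cauchy--Schwarz estimate of the common-core mean $\mu_{D_-}(\bx)=\ba(\bx)^T\by_-$ by a diagonal-dominance estimate. By the one-step update formula \eqref{eq:update_formula} and the reproducing property in $\mathcal H_-$, we first write
\[
\|\mu_D-\mu_{D'}\|_{\mathcal H_-}\le |y-\mu_{D_-}(\bx)|\frac{\sqrt{v(\bx)}}{r^2+v(\bx)}+|y'-\mu_{D_-}(\bx')|\frac{\sqrt{v(\bx')}}{r^2+v(\bx')}.
\]
Thus it suffices to bound $|\mu_{D_-}(\bx)|$ uniformly by an $n$-independent constant. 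We would then combine this with an elementary bound on $\sqrt v/(r^2+v)$.

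\textbf{Step 1 (bound the common-core mean).} Write $A:=K_-+r^2I$. Since $k(\bx_i,\bx_i)=1$, the diagonal entries of $A$ equal $1+r^2$, and its off-diagonal row sums are at most $R_0$. Hence $A$ is strictly diagonally dominant with margin $\gamma=1+r^2-R_0>0$. The standard Varah-type bound then gives $\|A^{-1}\|_{\infty\to\infty}\le 1/\gamma$. To see this, take $\bu=A^{-1}\bb$ and let $i$ be the index maximising $|u_i|$. Then
\[
|b_i|\ge (1+r^2)|u_i|-\sum_{j\ne i}|A_{ij}||u_j|\ge\gamma|u_i|.
\]
Therefore $\|A^{-1}\by_-\|_\infty\le M_Y/\gamma$. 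It follows that
\[
|\mu_{D_-}(\bx)|=\left|\bk(\bx)^TA^{-1}\by_-\right|\le \|\bk(\bx)\|_1\,\|A^{-1}\by_-\|_\infty\le \frac{R_1M_Y}{\gamma},
\]
and consequently $|y-\mu_{D_-}(\bx)|\le M_Y(1+R_1/\gamma)$. The same bound holds at $\bx'$.

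\textbf{Step 2 (combine).} Using $\sup_{v\ge0}\sqrt v/(r^2+v)=1/(2r)$, each of the two summands is at most $M_Y(1+R_1/\gamma)/(2r)$. Their sum is therefore at most $\frac{M_Y}{r}(1+R_1/\gamma)$. Taking the supremum over $D\sim D'$ and using Lemma~\ref{lem:Hn_RKHS_reduction} then gives the claim. The fact that the constant $2$ from the two summands cancels exactly against the $1/(2r)$ suggests this is precisely the intended route.

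The only genuine content is Step 1, namely the $\ell_\infty$ control of $(K_-+r^2I)^{-1}$ via diagonal dominance. The main care point is that diagonal dominance must hold for the regularised Gram matrix of the common core. It is not needed for the full design including $\bx$, and only the kernel-sum bound $R_1$ enters at the new location. No other obstacle is expected, since the hypotheses are assumed uniformly over admissible designs.
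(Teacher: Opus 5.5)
Your proposal is correct and follows essentially the same route as the paper: the one-step update with the triangle inequality in $\mathcal H_-$, the bound $\|(K_-+r^2I)^{-1}\|_\infty\le 1/\gamma$ from strict diagonal dominance, H\"older's inequality with $\|\bk(\bx)\|_1\le R_1$, and finally $\sup_{v\ge0}\sqrt v/(r^2+v)=1/(2r)$. The only cosmetic difference is that you prove the Varah-type bound directly and apply it to $A^{-1}\by_-$. The paper instead cites that bound and passes through $\|A^{-1}\|_1=\|A^{-1}\|_\infty$, which holds by symmetry of $A$.
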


\begin{proof}
Fix neighbouring datasets
$D=D_-\cup\{(\bx,y)\}$ and
$D'=D_-\cup\{(\bx',y')\}$, and let
$A:=K(X_-,X_-)+r^2I$. Since $A_{ii}=1+r^2$ and its off-diagonal row
sums are bounded by $R_0$, the matrix $A$ is strictly diagonally
dominant with margin at least $\gamma$. Hence
$\|A^{-1}\|_\infty\le\gamma^{-1}$, see Supplementary Material in \citet{pmlr-v84-smith18a}. Since $A$ is symmetric, we also have
$\|A^{-1}\|_1\le\gamma^{-1}$. Then, for every admissible replacement location $\bx$,
\[
\begin{aligned}
|\mu_{D_-}(\bx)|
&=
|k_{X_-}(\bx)^\top A^{-1}y_-|
\le
\|k_{X_-}(\bx)\|_1
\|A^{-1}\|_1
\|y_-\|_\infty 
\le
\frac{M_YR_1}{\gamma}.
\end{aligned}
\]
It follows that
$|y-\mu_{D_-}(\bx)|\le M_Y(1+R_1/\gamma)$, and the same bound holds
for $(\bx',y')$. Now let $v(\bx):=k_{D_-}(\bx,\bx)$. The one-point posterior update \eqref{eq:update_formula} gives
\[
\begin{aligned} \|\mu_D-\mu_{D'}\|_{\mathcal H_-} &\le |y-\mu_{D_-}(\bx)| \frac{\sqrt{v(\bx)}}{r^2+v(\bx)} + |y'-\mu_{D_-}(\bx')| \frac{\sqrt{v(\bx')}}{r^2+v(\bx')} 
\le \frac{M_Y}{r} \left( 1+\frac{R_1}{\gamma} \right)
\end{aligned}
\]
where we have used the fact that $\|k_{D_-}(\cdot,\bx)\|_{\mathcal H_{-}}=\sqrt{v(\bx)}$ and
$\sup_{v\ge0}\sqrt v/(r^2+v)=1/(2r)$. Taking the supremum over neighbouring datasets proves the result.
\end{proof}

The conditions of Proposition~\ref{prop:Delta_kernel_sum_bounds} can be
verified geometrically for standard decaying kernels. Suppose that
$\Omega_X\subseteq\mathbb R^d$ and that every admissible dataset is
$q$-separated, meaning that $\|\bx_i-\bx_j\|\ge q$ for all distinct
locations in the dataset. Let
$k(\bx,\bx')=\varphi(\|\bx-\bx'\|)$, where $\varphi(0)=1$ and
$\varphi$ is nonnegative and nonincreasing. Define
$N_{d,m}:=(2m+3)^d-(2m-1)^d$ and
\[
R_{d}(q)
:=
\sum_{m=1}^{\infty}
N_{d,m}\varphi(mq).
\]

\begin{corollary}[Separated data with decaying kernels]
\label{cor:Delta_separated_designs}
Suppose that every admissible dataset is $q$-separated and that
$R_{d}(q)<1+r^2$. Then
\[
\Delta_n(r)
\le
\frac{M_Y}{r}
\left(
1+
\frac{R_{d}(q)}
{1+r^2-R_{d}(q)}
\right).
\]
\end{corollary}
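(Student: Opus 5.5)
The corollary follows from Proposition~\ref{prop:Delta_kernel_sum_bounds} once we show that $q$-separation forces both kernel-sum conditions with $R_0=R_1=R_d(q)$. Substituting $\gamma=1+r^2-R_d(q)>0$ then gives exactly the stated bound. The whole task is therefore a lattice-packing count: for a $q$-separated point set $\{\bz_j\}$ and any centre $\bz$, show that
\[
\sum_{j:\,\bz_j\ne \bz}\varphi(\|\bz-\bz_j\|)\le \sum_{m\ge1}N_{d,m}\varphi(mq)=R_d(q).
\]

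\textbf{Reduction to the two conditions.} For $R_0$, the centre is a design point $\bx_i$ and the other points are the remaining design points. For $R_1$, the centre is the replacement location $\bx$ and the points are the common core $X_-$. The set $X_-\cup\{\bx\}$ is itself an admissible dataset, hence $q$-separated. One caveat: if $\bx$ coincides with some $\bx_i$, that term contributes $\varphi(0)=1$, which the sum above does not cover. I would handle this by reading admissible designs as having distinct locations (implicit in the separation assumption). Failing that, I would absorb the extra term directly into the bound on $|y-\mu_{D_-}(\bx)|$.

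\textbf{Cubic shells.} Partition $\RR^d$ into cubes of side $s=q/\sqrt d$ so that each closed cube has diameter $\le q$. Then a $q$-separated set has at most one point per half-open cube. Index the cubes relative to the one containing the centre and group them into shells by Chebyshev index distance $m$. A point in shell $m$ lies at distance at least roughly $(m-1)s$ from the centre, so monotonicity of $\varphi$ lets us bound each term by $\varphi$ evaluated at a multiple of $q$.

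\textbf{Counting constant (expected obstacle).} The particular constant $N_{d,m}=(2m+3)^d-(2m-1)^d$ points to a slightly different partition. It looks like an annulus between cubes of side $(2m+3)$ and $(2m-1)$ in units of $q/2$, i.e.\ a scale of $q/2$ per cell. In that scheme, points at distance in $[mq,(m+1)q)$ have disjoint $q/2$-balls packed into the annulus $\{m q-q/2\le\|\cdot\|_\infty\le (m+1)q+q/2\}$. Comparing volumes with cubes of side $q$ (using that $q/2$-balls in $\|\cdot\|_\infty$ are cubes of side $q$) would give at most $N_{d,m}$ points in that distance band. Each such point contributes at most $\varphi(mq)$, since $\varphi$ is nonincreasing.

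The main difficulty is making this count rigorous. With Euclidean separation the disjoint regions must be Euclidean balls, not cubes, and the exact normalisation of $N_{d,m}$ is easy to get off by a constant or a dimensional factor. My first attempt is the max-norm cube packing: place disjoint axis-aligned cubes of side $q/\sqrt d$, or side $q$ if separation is read in $\ell^\infty$, and check that the resulting band count is at most $(2m+3)^d-(2m-1)^d$. If the constant does not match, I would fall back to proving the bound with whatever $N_{d,m}$ the packing actually yields, which still gives an $n$-independent $R_d(q)$. Summing over $m$ bounds both kernel sums by $R_d(q)$, and Proposition~\ref{prop:Delta_kernel_sum_bounds} then finishes the proof.
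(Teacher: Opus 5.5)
Your reduction is correct. The core $X_-$ and the set $X_-\cup\{\bx\}$ are both $q$-separated, so both kernel-sum conditions of Proposition~\ref{prop:Delta_kernel_sum_bounds} hold with $R_0=R_1=R_d(q)$, provided at most $N_{d,m}$ points of a $q$-separated set lie in the band $mq\le\|\bx_i-\bx\|<(m+1)q$. Setting $\gamma=1+r^2-R_d(q)$ then gives the stated bound.

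\textbf{Gap.} That counting step is the whole content, and the proposal leaves it unresolved. Neither cube construction works under the stated Euclidean hypothesis.
\begin{itemize}
\item Cubes of side $q$, i.e.\ $\ell^\infty$-balls of radius $q/2$, need not be disjoint when $d\ge2$. Two points differing by $(q/\sqrt d)(1,\dots,1)$ are at Euclidean distance $q$ but only $q/\sqrt d$ apart in $\ell^\infty$.
\item For the same reason, a cube around a point of the Euclidean band need not lie inside your $\ell^\infty$ shell $\{mq-q/2\le\|\cdot\|_\infty\le(m+3/2)q\}$.
\item Cubes of side $q/\sqrt d$ do contain at most one point each. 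But grouping them into Chebyshev shells gives a different series, with counts $(2m+1)^d-(2m-1)^d$ weighted by $\varphi$ at radii of order $(m-1)q/\sqrt d$. That series is not $R_d(q)$.
\end{itemize}
Your fallback, using ``whatever $N_{d,m}$ the packing yields'', changes both the hypothesis $R_d(q)<1+r^2$ and the conclusion. It would prove a different corollary.

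\textbf{Fix.} You were one observation away: the volume comparison you sketched in $\ell^\infty$ works verbatim in the Euclidean norm, and your worry about normalisation is unfounded.
\begin{itemize}
\item The open Euclidean balls of radius $q/2$ centred at the points of the $m$-th band are pairwise disjoint by separation.
\item By the triangle inequality they lie in the shell $(m-\frac12)q\le\|\cdot-\bx\|\le(m+\frac32)q$.
\item A Euclidean ball of radius $\rho$ has volume $\omega_d\rho^d$, so $\omega_d$ cancels in the comparison. The band therefore contains at most $[(m+\frac32)^d-(m-\frac12)^d]/(\frac12)^d=(2m+3)^d-(2m-1)^d=N_{d,m}$ points.
\end{itemize}
The same computation works in any fixed norm, as long as separation, bands and balls are all measured in that norm. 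The cube picture you recognised is just the $\ell^\infty$ case.

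To finish, note that every other point is at distance at least $q$, so the bands $m\ge1$ cover the whole sum. Monotonicity bounds each point in band $m$ by $\varphi(mq)$, and summing over $m$ bounds both kernel sums by $R_d(q)$. This is exactly the paper's argument.
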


\begin{proof}
Fix a location $\bx$ and partition the remaining covariate locations into
the annuli
\[
mq\le\|\bx_i-\bx\|<(m+1)q,
\qquad m\ge1.
\]
The balls of radius $q/2$ centred at the points in the $m$th annulus
are disjoint and are contained between radii $(m-\frac12)q$ and
$(m+\frac32)q$ around $\bx$. Comparing volumes shows that the number
of points in this annulus is at most
\[
\frac{(m+\frac32)^d-(m-\frac12)^d}{(1/2)^d}
=
N_{d,m}.
\]
Let $I_m$ be the set of indices of points in the $m$-th annulus. Since $\varphi$ is nonincreasing,
\[
\sum_i k(\bx,\bx_i) = \sum_i \phi(\|\bx-\bx_i\|)=\sum_{m=1}^\infty \sum_{i\in I_m}\phi(\|\bx-\bx_i\|)
\le
\sum_{m=1}^{\infty}
|I_m|\varphi(mq)\le
\sum_{m=1}^{\infty}
N_{d,m}\varphi(mq)
=
R_{d}(q).
\]
The same argument bounds every off-diagonal row sum of the Gram matrix.
Thus, Proposition~\ref{prop:Delta_kernel_sum_bounds} applies with
$R_0=R_1=R_{d}(q)$.
\end{proof}

For $d=2$, one has $N_{2,m}=16m+8$. For the exponential kernel
$\varphi(s)=\exp(-s/\ell)$ and writing $\zeta:=\exp(-q/\ell)$ gives
\[
R_{2}^{(\exp)}(q)
\le
\sum_{m=1}^{\infty}(16m+8)\zeta^m
=
\frac{16\zeta}{(1-\zeta)^2}
+
\frac{8\zeta}{1-\zeta}.
\]
For the squared-exponential kernel
$\varphi(s)=\exp(-s^2/(2\ell^2))$ writing
$\zeta:=\exp(-q^2/(2\ell^2))$ and using $\zeta^{m^2}\le \zeta^m$ gives the same
sufficient upper bound
\[
R_{2}^{(\mathrm{SE})}(q)
\le
\frac{16\zeta}{(1-\zeta)^2}
+
\frac{8\zeta}{1-\zeta}.
\]
The series $R_{d}(q)$ is also finite for every Mat\'ern kernel
because its radial profile decays exponentially at infinity. Thus, for
exponential, squared-exponential and Mat\'ern kernels sufficiently
large $q/\ell$ or sufficiently large $r$ ensures
$R_{d}(q)<1+r^2$.

The separation condition restricts the admissible data universe by requiring all training datapoints to be at least distance $q$ apart. Besides excluding duplicate or near-duplicate observations and maintaining dataset diversity, such a condition may be imposed specifically to control privacy loss, since it bounds the aggregate kernel similarity and hence the sensitivity. For example when working with images one might enforce a minimum distance between different images. On an unbounded domain this condition permits arbitrarily large datasets. However, on a fixed
compact domain only finitely many $q$-separated points can fit (in moderate- or high-dimensional spaces this admissible dataset size may be very large), so
fixed-domain $n$-asymptotics require $q=q_n$ to decrease with $n$ and the bound remains uniform
when $q_n/\ell_n$ stays bounded away from zero. Note that Proposition~\ref{prop:Delta_kernel_sum_bounds} also applies whenever the
required kernel-sum bounds can be established without an explicit separation condition.

\subsection{A lower bound for covariance leakage}
\label{app:covariance_lower_bound}

The covariance contribution in Theorem~\ref{thm:main_rdp} is
$\mathcal O(1)$ in the dataset size. We next show that in general this cannot be improved. 

\begin{proposition}[Non-vanishing covariance leakage]
\label{prop:covariance_leakage_lower_bound}
Suppose that $k(x,x)=1$ for every $x\in\Omega_X$. Define $\kappa:=\inf_{x,x'\in\Omega_X} k(x,x')$ and assume $ 0\leq \kappa<1$. Define
\[
\Lambda_n(r,\kappa)
:=
\frac{
\bigl[r^2+n(1-\kappa^2)\bigr]
\bigl[r^4+nr^2+(n-1)(1-\kappa^2)\bigr]
}{
r^2(r^2+n)
\bigl[r^2+(n-1)(1-\kappa^2)\bigr]
}.
\]
Then, for every $n\geq1$ and every $\alpha>1$ satisfying $\alpha-(\alpha-1)\Lambda_n(r,\kappa)>0$ one has
\[
\sup_{D\sim D'}\sup_{|X_T|<\infty}
D_\alpha\!\left(
\Pi_D(X_T)\middle\|\Pi_{D'}(X_T)
\right)
\geq
\phi_\alpha\!\left(
1+\frac{1-\kappa^2}{r^2}
\right)
>0, \quad
\phi_\alpha(\lambda)
:=
-\frac12\log\lambda
-\frac{1}{2(\alpha-1)}
\log\!\left(\alpha-(\alpha-1)\lambda\right).
\]
\end{proposition}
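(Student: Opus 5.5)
The plan is to exhibit, for each $n$, an explicit neighbouring pair $D\sim D'$ and a one-point test set $X_T=\{\bx_2\}$ on which the Rényi divergence can be computed in closed form. I mimic the extremal construction used in the proof of Proposition~\ref{prop:general_positive_kernel_exact_Vn}. Take a sequence of pairs $(\bx_1^{(j)},\bx_2^{(j)})$ with $k(\bx_1^{(j)},\bx_2^{(j)})\to\kappa$. Let the common core $D_-$ consist of $n-1$ copies of $\bx_1^{(j)}$. Set $D=D_-\cup\{\bx_1^{(j)}\}$ and $D'=D_-\cup\{\bx_2^{(j)}\}$, with all responses equal to $0$ (assuming $0$ is an admissible response). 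Then $\mu_D\equiv\mu_{D'}\equiv0$, so the mean term in \eqref{eq:gaussian_div} vanishes. For two centred one-dimensional Gaussians with variances $\sigma^2 s_D$ and $\sigma^2 s_{D'}$, formula \eqref{eq:gaussian_div} reduces exactly to $\phi_\alpha(\lambda)$ with $\lambda=s_D/s_{D'}$, the same function $\phi_\alpha$ used in the proof of Proposition~\ref{prop:rdp_main}. This is finite precisely when $\alpha-(\alpha-1)\lambda>0$.

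Next I compute the two variances at $\bx_2$ with $\kappa_j:=k(\bx_1^{(j)},\bx_2^{(j)})$, using Sherman--Morrison exactly as in Proposition~\ref{prop:general_positive_kernel_exact_Vn}. For $D$, which has $n$ copies of $\bx_1$, this gives
\[
s_D=1-\frac{n\kappa_j^2}{n+r^2}=\frac{r^2+n(1-\kappa_j^2)}{n+r^2}.
\]
For $D'$ I use the one-step update \eqref{eq:update_formula}. The common-core variance is $v=1-a_n\kappa_j^2=\frac{r^2+(n-1)(1-\kappa_j^2)}{n-1+r^2}$, and then
\[
s_{D'}=v-\frac{v^2}{v+r^2}=\frac{vr^2}{v+r^2}.
\]
The simplification $v+r^2=\frac{r^4+nr^2+(n-1)(1-\kappa_j^2)}{n-1+r^2}$ then shows that $s_D/s_{D'}=\Lambda_n(r,\kappa_j)$. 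Letting $j\to\infty$ and using continuity of $\Lambda_n$ and $\phi_\alpha$ in $\kappa_j$ gives
\[
\sup_{D\sim D'}\sup_{X_T}D_\alpha\ge\phi_\alpha(\Lambda_n(r,\kappa)).
\]
This is valid whenever $\alpha-(\alpha-1)\Lambda_n(r,\kappa)>0$, since then finiteness also holds for all $j$ large. If the infimum defining $\kappa$ is attained, no limit is needed.

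It remains to compare $\phi_\alpha(\Lambda_n)$ with $\phi_\alpha(1+(1-\kappa^2)/r^2)$. From the proof of Proposition~\ref{prop:rdp_main}, $\phi_\alpha$ is convex with unique minimum $\phi_\alpha(1)=0$, and it is increasing on $\lambda>1$ within its domain. So it suffices to show $\Lambda_n(r,\kappa)\ge 1+(1-\kappa^2)/r^2>1$; strict positivity then follows from $\kappa<1$. A check at $n=1$ gives equality, $\Lambda_1=1+(1-\kappa^2)/r^2$. Also note that $\alpha-(\alpha-1)\Lambda_n>0$ together with $\Lambda_n\ge 1+(1-\kappa^2)/r^2$ automatically makes the smaller argument admissible.

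The main obstacle is this final polynomial inequality. I would write $c:=1-\kappa^2\in(0,1]$, clear the positive denominator $r^2(r^2+n)(r^2+(n-1)c)$, and move everything to one side. The hope is that the difference
\[
[r^2+nc][r^4+nr^2+(n-1)c]-(r^2+c)(r^2+n)[r^2+(n-1)c]
\]
factors as $(n-1)\,c\,(1-c)\cdot(\text{nonnegative})$ or a similar manifestly nonnegative expression. If direct expansion is messy, I would instead show that $\Lambda_n$ is nondecreasing in $n$, treating $n$ as continuous and differentiating its logarithm, and combine this with the equality at $n=1$.
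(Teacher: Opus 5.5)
Your proposal is correct and follows the paper's proof essentially step for step: the same extremal pair (a common core of $n-1$ copies of one point with zero responses, replaced by the far point), the single test point, the Sherman--Morrison and one-step-update computation giving the variance ratio $\Lambda_n(r,\kappa)$, the scalar Gaussian formula reducing to $\phi_\alpha$, monotonicity of $\phi_\alpha$ above $1$, and the same limiting argument when the infimum is not attained. The factorization you were hoping for does hold, with no fallback needed: the cleared difference equals $r^2(n-1)c(1-c)=r^2(n-1)(1-\kappa^2)\kappa^2\ge 0$. This is exactly the paper's identity $\Lambda_n-(1+c/r^2)=c(n-1)\kappa^2/\bigl((r^2+n)[r^2+(n-1)c]\bigr)$.
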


\begin{proof}
Suppose that the infimum defining $\kappa$ is attained at points
$a,b\in\Omega_X$, so that $k(a,b)=\kappa$. Let the common core $D_-$
consist of $n-1$ copies of $(a,0)$ and define the neighbouring datasets
\[
D=D_-\cup\{(a,0)\},
\qquad
D'=D_-\cup\{(b,0)\}.
\]
Because all responses are zero, $\mu_D=\mu_{D'}=0$. Evaluate the released path at the single test point $X_T=\{b\}$. After
conditioning on the common core, the posterior variance at $b$ is
\[
k_{D_-}(b,b)
=
1-\kappa^2\,
\mathbf 1^\top
\left(\mathbf 1\mathbf 1^\top+r^2I\right)^{-1}
\mathbf 1
=
1-\frac{(n-1)\kappa^2}{n-1+r^2}
=:
v_{-,n}.
\]
Adding another observation at $a$ gives
\[
k_D(b,b)=1-\frac{n\kappa^2}{n+r^2}=:v_{+,n},
\]
whereas adding an observation at $b$ gives, by the one-point posterior
update \eqref{eq:update_formula},
\[
k_{D'}(b,b)
=
v_{-,n}-\frac{v_{-,n}^2}{v_{-,n}+r^2}
=
\frac{r^2v_{-,n}}{r^2+v_{-,n}}.
\]
Thus, for $X_T=\{b\}$ we have
\[
\Pi_D(\{b\})
=
\mathcal N(0,\sigma^2v_{+,n}),
\qquad
\Pi_{D'}(\{b\})
=
\mathcal N\!\left(
0,
\sigma^2\frac{r^2v_{-,n}}{r^2+v_{-,n}}
\right).
\]
The ratio of these variances is $\Lambda_n(r,\kappa)$, and the scale
$\sigma^2$ cancels from their R\'enyi divergence. The scalar Gaussian R\'enyi divergence
formula consequently gives
\[
D_\alpha\!\left(
\Pi_D(\{b\})\middle\|\Pi_{D'}(\{b\})
\right)
=
\phi_\alpha\!\left(\Lambda_n(r,\kappa)\right).
\]

Moreover, direct simplification gives
\[
\Lambda_n(r,\kappa)
-\left(1+\frac{q}{r^2}\right)
=
\frac{q(n-1)\kappa^2}
{(r^2+n)\left[r^2+(n-1)q\right]}
\geq 0.
\]
Since $\phi_\alpha'(\lambda)>0$ throughout the finite-divergence domain, the assumed condition implies
\[
\phi_\alpha\!\left(\Lambda_n(r,\kappa)\right)
\geq
\phi_\alpha\!\left(1+\frac{q}{r^2}\right)>0,
\]
which proves the claim. If the infimum defining $\kappa$ is not attained, the same conclusion follows by taking a sequence of pairs $(a_m,b_m)$ such that $k(a_m,b_m)\to\kappa$ and using continuity.
\end{proof}

\begin{remark}[Tightness in the strongly regularised regime]
Let $q:=1-\kappa^2$. As $r\to\infty$,
\[
\phi_\alpha\!\left(1+\frac{q}{r^2}\right)
=
\frac{\alpha q^2}{4r^4}
+\mathcal O(r^{-6}).
\]
The covariance term in Theorem~\ref{thm:main_rdp} has the same
$r^{-4}$ scaling. Thus the covariance-channel upper bound has the
correct dependence on both $n$ and $r$ up to constants in this regime.
\end{remark}

\subsection{GP-Thompson sampling: public covariates and adaptive RDP composition}
\label{app:public_covariates}
Thompson sampling is a classical Bayesian method for stochastic multi-armed bandit problems \citep{pmlr-v23-agrawal12}. GP-Thomson sampling (GP-TS, \citet{pmlr-v70-chowdhury17a}) extends this principle to continuous action spaces (the set of arms is continuous) by drawing a function from the current GP posterior and selecting its maximizer. In the finite-armed setting ordinary Thompson sampling has recently been shown to be intrinsically differentially private: posterior sampling itself supplies the required randomization without additional privacy noise or modification of the decision rule \citep{pmlr-v238-ou24a}. Here, we establish the corresponding intrinsic privacy guarantees for GP-TS including release of the full function-valued posterior-draw and action transcript. To our knowledge, this is the first result of this kind.

We start with a brief review of GP-TS whose full details can be found in \citet{pmlr-v70-chowdhury17a}. Consider the task of maximising an unknown function $f_*$. The Thompson sampling algorithm (TS) chooses at each round $t$ a {\bf{public}} action $\bx_t$ and subsequently observes the {\bf{private}} noisy value of the function $y_t=f_*(\bx_t)+\xi_t$. The action $\bx_t$ is selected depending on all the previous actions $\bx_1,\dots,\bx_{t-1}$ and the observed responses. We denote this history by $H_{t-1}:=\{(\bx_1,y_1)\dots,(\bx_{t-1},y_{t-1})\}$. We assume that $f_*$ has bounded RKHS norm $\|f_*\|_{\mathcal{H}_k}\leq B$ and that $\xi_i\stackrel{\text{i.i.d.}}{\sim} P_\xi$ are supported on a bounded interval $[-M_\xi,M_\xi]$. We assume bounded-diagonal kernel $k(\bx,\bx)\le 1$ and define $M_Y:=B + M_\xi$.

The goal of GP-TS is to minimise regret which is the loss incurred due to not knowing $f_*$’s maximiser. Let $\bx_*:=\arg \max_{\bx\in\Omega_X}f_*(\bx)$ be a maximiser of $f_*$. The instantaneous regret is defined as $f(\bx_*)-f(\bx_t)$ and total regret is $R_T:=\sum_{t=1}^T (f(\bx_*)-f(\bx_t))$. GP-TS proceeds as follows \citep{pmlr-v70-chowdhury17a}. Let $\mu_{t-1}$ and $k_{t-1}$ be the GP posterior mean and covariance obtained by conditioning the GP prior on the training set $H_{t-1}$. At round $t$ the algorithm draws $f_t\sim \mathcal{GP}(\mu_{t-1},\sigma_t^2 k_{t-1})$ and selects $\bx_t=\arg \max_{\bx\in\Omega_X}f_t(\bx)$. The covariance scale $\sigma_t$ depends on the round $t$ and this schedule is optimised to minimise the regret, see \citet{pmlr-v70-chowdhury17a} for the exact expression.

Thus, the privacy model relevant to TS is the response-only privacy model in which the covariates
$X=(\bx_1,\ldots,\bx_n)$ are public and only the response vector $y$ is
private. Neighbouring datasets are thus $D=(X,y)$ and
$D'=(X,y')$, where $y$ and $y'$ differ in one coordinate. Since the
posterior covariance depends only on $X$, one has $k_D=k_{D'}$, and
the covariance-leakage term in Theorem~\ref{thm:main_rdp} vanishes. Let us next derive RDP bounds for this model.

Let $K:=K(X,X)$ and $A:=K+r^2I$. For $j\in\{1,\ldots,t\}$, define the
kernel ridge leverage score \citep{elalaoui2015fast}
\[
h_j(X)
:=
\bigl[K(K+r^2I)^{-1}\bigr]_{jj}
=
1-r^2[A^{-1}]_{jj}.
\]
Since the eigenvalues of
$K(K+r^2I)^{-1}$ belong to $[0,1]$, one has
\[
0\le h_j(X)\le 1.
\]

\begin{proposition}[Posterior sampling with public covariates]
\label{prop:public_X_rdp}
Suppose that $D=(X,y)$ and $D'=(X,y')$ with public $X$. Then, $D\sim D'$ means that $y$ and $y'$ differ only in one response data point $j$. If $|y|\le M_Y$, then the release of one GP posterior sample path is $(\alpha,\alpha\rho_j)$-RDP for
every $\alpha>1$, where 
\[
\rho_j\le \frac{2M_Y^2}{\sigma^2r^2}h_j(X).
\]
\end{proposition}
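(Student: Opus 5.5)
With public covariates the posterior covariance does not depend on the responses, so $K_D=K_{D'}=k_X(X_T,X_T)$ for every finite $X_T$. The plan is therefore to reduce to a Gaussian mechanism with a fixed covariance and bound only the mean term of \eqref{eq:gaussian_div}. When $K_\alpha=K_D$, the determinant term vanishes identically, and
\[
D_\alpha\left(\Pi_D(X_T)\|\Pi_{D'}(X_T)\right)=\frac{\alpha}{2\sigma^2}\,\delta_T^T K_D^{\dagger}\delta_T,\qquad \delta_T:=\mu_D(X_T)-\mu_{D'}(X_T).
\]
This already gives the linear-in-$\alpha$ form $\alpha\rho_j$ with $\rho_j=\sup_{X_T}\delta_T^TK_D^\dagger\delta_T/(2\sigma^2)$, valid for every $\alpha>1$ because no admissibility condition on $\alpha$ arises.

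Next I would identify the mean difference as an element of the RKHS of the posterior kernel. Write $y-y'=(y_j-y_j')\be_j$. Then
\[
\mu_D-\mu_{D'}=(y_j-y_j')\,\bk_X(\cdot)^TA^{-1}\be_j,\qquad A:=K+r^2I.
\]
The function $g(\cdot)=\bk_X(\cdot)^T\bb$ with $\bb:=A^{-1}\be_j$ lies in $\mathcal H_k$. Arguing as in Lemma~\ref{lem:Hn_RKHS_reduction} via the interpolation inequality \eqref{eq:interpolation}, $\delta_T^TK_D^\dagger\delta_T\le (y_j-y_j')^2\|g\|^2_{\mathcal H_{k_D}}$, where $k_D$ is the posterior kernel conditioned on all of $X$. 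It then remains to compute $\|g\|_{\mathcal H_{k_D}}$. For this I would reuse the inner-product identity \eqref{eq:Hminus_inner_product} from Lemma~\ref{lemma:Delta_rkhs_response_bound}, now applied with the full design $X$ in place of $X_-$:
\[
\|g\|^2_{\mathcal H_{k_D}}=\|g\|^2_{\mathcal H_k}+\frac{1}{r^2}\|g(X)\|_2^2=\bb^TK\bb+\frac{1}{r^2}\bb^TK^2\bb=\frac{1}{r^2}\bb^TK(K+r^2I)\bb .
\]
Substituting $\bb=A^{-1}\be_j$ gives $\|g\|^2_{\mathcal H_{k_D}}=r^{-2}\be_j^TK(K+r^2I)^{-1}\be_j=h_j(X)/r^2$.

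Finally, $|y_j-y_j'|\le 2M_Y$ yields
\[
\delta_T^TK_D^\dagger\delta_T\le \frac{4M_Y^2}{r^2}h_j(X),
\]
and hence $D_\alpha\le \alpha\cdot 2M_Y^2h_j(X)/(\sigma^2r^2)$ uniformly over finite $X_T$. This is the claimed $(\alpha,\alpha\rho_j)$-RDP bound; the functional statement then follows by \citet[Proposition 6]{hall13}. For singular $K_D$ one works on the common support exactly as in Proposition~\ref{prop:rdp_main}, since $\delta_T\in\operatorname{range}(K_D)$ by the RKHS membership of $g$. The main obstacle is the RKHS-norm computation, specifically justifying that \eqref{eq:Hminus_inner_product} holds with the full design $X$, including repeated covariates. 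Its proof in Lemma~\ref{lemma:Delta_rkhs_response_bound} only used the reproducing check $h_{\bx}(X)=r^2\ba(\bx)$, which is valid for any finite design, so it transfers verbatim.
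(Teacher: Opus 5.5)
Your proposal is correct. Its skeleton matches the paper's proof exactly: the common covariance makes the log-determinant term vanish, the interpolation inequality bounds $\delta_T^TK_D^\dagger\delta_T$ by an $\mathcal H_{k_D}$-norm, and $|y_j-y_j'|\le 2M_Y$ finishes. The only difference is how you evaluate $\|g\|_{\mathcal H_{k_D}}$.

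\textbf{The paper's route.} It never identifies $\mathcal H_{k_D}$ with $\mathcal H_k$. From $I-A^{-1}K=r^2A^{-1}$ it gets $k_D(\cdot,\bx_j)=r^2\,\bk_X(\cdot)^TA^{-1}\be_j$. Hence the mean difference is the single kernel section $\frac{y_j'-y_j}{r^2}k_D(\cdot,\bx_j)$. The reproducing property and $k_D(\bx_j,\bx_j)=r^2h_j(X)$ then give the norm in one line.

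\textbf{Your route.} You transport the identity $\|g\|^2_{\mathcal H_{k_D}}=\|g\|^2_{\mathcal H_k}+r^{-2}\|g(X)\|_2^2$ from the RKHS-response lemma to the full design. You then compute $r^{-2}\bb^TK(K+r^2I)\bb=r^{-2}\be_j^TKA^{-1}\be_j$, using that $K$ and $A^{-1}$ commute. Your justification for the transfer is sound. The reproducing check $h_{\bx}(X)=r^2\ba(\bx)$ holds for any finite design, repeated covariates included. The modified norm is also equivalent to the $\mathcal H_k$-norm, so completeness is automatic.

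\textbf{What each buys.} The paper's route is self-contained. Membership $g\in\mathcal H_{k_D}$ is immediate because $g$ is literally a posterior kernel section. This gives $\delta_T$ in the range of $K_D$ for the singular case with no extra work. It also makes explicit that changing the response at $\bx_j$ moves the mean along $k_D(\cdot,\bx_j)$. Your route first requires establishing the Hilbert-space identification $\mathcal H_{k_D}=\mathcal H_k$. In exchange, it can evaluate the $\mathcal H_{k_D}$-norm of any element of $\mathcal H_k$, not just functions in the span of the training kernel sections. That extra generality is not needed for this proposition.
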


\begin{proof}
Let $A:=K+r^2I$ and 
$
\bk_X(\bx)
:=
\bigl(k(\bx,\bx_1),\ldots,k(\bx,\bx_t)\bigr)^\top
$
and write
$
g:=\mu_{D'}-\mu_D.
$
Since $D$ and $D'$ have the same covariates, their posterior covariance
kernels coincide and we denote this common kernel by $k_D$. The posterior-mean
formula gives
\[
\begin{aligned}
g(\bx)
&=
\bk_X(\bx)^\top A^{-1}(y'-y) 
=
(y_j'-y_j)\,\bk_X(\bx)^\top A^{-1}e_j.
\end{aligned}
\]

Fix any finite test set $X_T$, and define $\delta_T:=g(X_T)$, $K_{D,T}:=k_D(X_T,X_T)$.
The corresponding finite-dimensional posterior distributions have the same
covariance matrix $\sigma^2K_{D,T}$. Hence the common-covariance Gaussian
R\'enyi-divergence formula gives
\[
D_\alpha\!\left(
\Pi_D(X_T)\,\middle\|\,\Pi_{D'}(X_T)
\right)
=
\frac{\alpha}{2\sigma^2}
\delta_T^\top K_{D,T}^{\dagger}\delta_T.
\]
By the RKHS inequality
$
\delta_T^\top K_{D,T}^{\dagger}\delta_T
\le
\left\| g\right\|_{\mathcal H_{k_D}}^2.
$
It remains to calculate this RKHS norm. The posterior covariance kernel is
$
k_D(\bx,\bx')
=
k(\bx,\bx')
-
\bk_X(\bx)^\top A^{-1}\bk_X(\bx').
$
Since $\bk_X(\bx_j)=Ke_j$, we have
\[
\begin{aligned}
k_D(\bx,\bx_j)
&=
\bk_X(\bx)^\top e_j
-
\bk_X(\bx)^\top A^{-1}Ke_j 
=
\bk_X(\bx)^\top
\bigl(I-A^{-1}K\bigr)e_j 
=
r^2\bk_X(\bx)^\top A^{-1}e_j,
\end{aligned}
\]
where the last equality follows from
$
I-A^{-1}K
=
A^{-1}(A-K)
=
r^2A^{-1}.
$
Consequently,
$
g
=
\frac{y_j'-y_j}{r^2}\,k_D(\cdot,\bx_j).
$
By the reproducing property,
\[
\begin{aligned}
\lVert g\rVert_{\mathcal H_{k_D}}^2
&=
\frac{(y_j'-y_j)^2}{r^4}
\left\lVert
k_D(\cdot,\bx_j)
\right\rVert_{\mathcal H_{k_D}}^2 
=
\frac{(y_j'-y_j)^2}{r^4}
k_D(\bx_j,\bx_j).
\end{aligned}
\]
Moreover,
\[
\begin{aligned}
k_D(\bx_j,\bx_j)
&=
e_j^\top
\bigl(K-KA^{-1}K\bigr)e_j 
=
r^2e_j^\top KA^{-1}e_j 
=
r^2h_j(X).
\end{aligned}
\]
Hence,
\[
\lVert g\rVert_{\mathcal H_{k_D}}^2
=
\frac{(y_j'-y_j)^2}{r^2}h_j(X).
\]
Combining all the above results, uniformly over all finite test
sets $X_T$,
\[
D_\alpha\!\left(
\Pi_D(X_T)\,\middle\|\,\Pi_{D'}(X_T)
\right)
\le
\alpha
\frac{(y_j'-y_j)^2}{2\sigma^2r^2}h_j(X)
=
\alpha\rho_j.
\]
The uniform finite-dimensional bound gives the stated sample-path RDP
guarantee. Finally if $|y_i|\le M_Y$ then
$|y_j'-y_j|\le2M_Y$ which proves the remaining bounds.
\end{proof}

The influence of an existing response cannot increase when further
design points are added.

\begin{lemma}[Monotonicity of leverage score]
\label{lem:influence_monotonicity}
Let $X=(\bx_1,\dots,\bx_t)$ and $X_-=(\bx_1,\dots,\bx_{t-1})$. Then, for every $j\le t-1$, $h_j(X)\le h_j(X_-)$. In particular,
\[
h_j(X)
\le
\frac{
k(\bx_j,\bx_j)
}{
k(\bx_j,\bx_j)+r^2
}.
\]
For a normalized kernel satisfying $k(\bx,\bx)=1$ this becomes
$
h_j(X)
\le
1/(1+r^2)
$. If $k(\bx,\bx)\leq 1$ then $
h_j(X)
\le\min\{1,
1/r^2\}
$.
\end{lemma}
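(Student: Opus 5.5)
The monotonicity claim is that the $j$-th leverage score cannot increase when the design point $\bx_t$ is appended. The cleanest route is to express $h_j$ through posterior variances and then apply the one-step update formula \eqref{eq:update_formula}.

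\textbf{Step 1: leverage score as a posterior variance.} From the proof of Proposition~\ref{prop:public_X_rdp} we have
\[
k_{X}(\bx_j,\bx_j)=r^2h_j(X),
\]
where $k_X$ denotes the posterior covariance after conditioning on the covariates $X$. This identity holds for any design containing $\bx_j$, so also
\[
k_{X_-}(\bx_j,\bx_j)=r^2h_j(X_-).
\]
Hence it suffices to show $k_X(\bx_j,\bx_j)\le k_{X_-}(\bx_j,\bx_j)$.

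\textbf{Step 2: the one-step update.} Conditioning on the common core $X_-$ and adding $\bx_t$, the covariance part of \eqref{eq:update_formula} with test set $\{\bx_j\}$ gives
\[
k_X(\bx_j,\bx_j)=k_{X_-}(\bx_j,\bx_j)-\frac{k_{X_-}(\bx_j,\bx_t)^2}{k_{X_-}(\bx_t,\bx_t)+r^2}\le k_{X_-}(\bx_j,\bx_j).
\]
Dividing by $r^2$ yields $h_j(X)\le h_j(X_-)$.

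\textbf{Step 3: explicit bound.} Iterate Step~2, deleting every point other than $\bx_j$ one at a time (by permutation invariance of $h_j$ these can be removed in any order). This gives $h_j(X)\le h_j((\bx_j))$. For the single-point design,
\[
h_j((\bx_j))=\frac{k(\bx_j,\bx_j)}{k(\bx_j,\bx_j)+r^2}.
\]
Normalisation $k(\bx,\bx)=1$ gives $h_j(X)\le 1/(1+r^2)$. Under $k(\bx,\bx)\le1$, the map $u\mapsto u/(u+r^2)$ is increasing, so $h_j(X)\le 1/(1+r^2)$. This is at most $\min\{1,1/r^2\}$, since $1/(1+r^2)\le1$ and $1/(1+r^2)\le 1/r^2$.

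The only point needing care is Step~1: checking that the identity $k_X(\bx_j,\bx_j)=r^2[KA^{-1}]_{jj}$ is valid for both designs with the same index $j$. This follows directly from $I-A^{-1}K=r^2A^{-1}$, as in the proof of Proposition~\ref{prop:public_X_rdp}. Otherwise the argument is routine.
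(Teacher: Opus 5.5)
Your proof is correct and is essentially the paper's argument. The paper applies the block-inverse formula directly to $[(K(X,X)+r^2I)^{-1}]_{jj}$, with $A_-=K(X_-,X_-)+r^2I$, $a=K(X_-,\bx_t)$ and $\gamma=k(\bx_t,\bx_t)+r^2-a^\top A_-^{-1}a$, to get $h_j(X)=h_j(X_-)-r^2(e_j^\top A_-^{-1}a)^2/\gamma$; this is exactly your decrement $k_{X_-}(\bx_j,\bx_t)^2/\bigl(r^2(k_{X_-}(\bx_t,\bx_t)+r^2)\bigr)$, since $k_{X_-}(\bx_j,\bx_t)=r^2e_j^\top A_-^{-1}a$ and $\gamma=k_{X_-}(\bx_t,\bx_t)+r^2$, while your bound $h_j(X)\le 1/(1+r^2)$ under $k(\bx,\bx)\le1$ is slightly sharper than the stated $\min\{1,1/r^2\}$.
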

\begin{proof}
Let
$
A_-:=K(X_-,X_-)+r^2I
$ and $
a:=k(X_-,\bx_t)
$
and define the positive Schur complement
$
\gamma
:=
k(\bx_t,\bx_t)+r^2-a^\top A_-^{-1}a
>0.
$
The block-inverse formula gives
\[
\left[
\bigl(K(X,X)+r^2I\bigr)^{-1}
\right]_{1:t-1,1:t-1}
=
A_-^{-1}
+
\frac{A_-^{-1}aa^\top A_-^{-1}}{\gamma}.
\]
Hence, for every $j\le t-1$,
\[
h_j(X)
=
h_j(X_-)
-
\frac{
r^2\bigl(e_j^\top A_-^{-1}a\bigr)^2
}{\gamma}
\le
h_j(X_-).
\]
The final bound follows by starting from the single-point dataset
$(\bx_j)$ for which
\[
h_j\bigl((\bx_j)\bigr)
=
\frac{k(\bx_j,\bx_j)}
{k(\bx_j,\bx_j)+r^2},
\]
and adding the remaining data points one at a time.
\end{proof}

Because the bound in Proposition~\ref{prop:public_X_rdp} is linear in
$\alpha$ and valid for every $\alpha>1$, repeated releases admit a
simple analytic conversion to approximate DP.

\begin{corollary}[$T$ rounds of GP-Thompson sampling]
\label{cor:T_round_TS}
$T$ rounds of GP-Thompson sampling together satisfy $(\alpha,\alpha \rho_T)$-RDP with 
\[
\rho_T \leq \frac{2M_Y^2}{r^2}\sup_{\bx\in\Omega_X} \frac{
k(\bx,\bx)
}{
k(\bx,\bx)+r^2
}\sum_{t=2}^T \frac{1}{\sigma_t^2}\leq (T-1)\frac{2M_Y^2}{\bar \sigma^2 r^2}\sup_{\bx\in\Omega_X} \frac{
k(\bx,\bx)
}{
k(\bx,\bx)+r^2
},\qquad \bar \sigma:=\min_{t=2,\dots,T} \sigma_t.
\]
Consequently, $T$ rounds of GP-Thompson sampling are 
\[
\left(
\rho_T+2\sqrt{\rho_T\log(1/\delta)},
\delta
\right)\text{-DP}.
\]
\end{corollary}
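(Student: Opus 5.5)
The plan is to view $T$ rounds of GP-TS as an adaptive composition of $T$ posterior-sampling mechanisms with public covariates. Then I apply Proposition~\ref{prop:public_X_rdp} round by round and sum the per-round RDP costs, using adaptive composition of RDP \citep[Proposition~1]{rdp}. The released transcript consists of the full posterior draw $f_t$ (hence $\bx_t$, a post-processing) at every round. Round $t$ samples $f_t\sim\mathcal{GP}(\mu_{t-1},\sigma_t^2k_{t-1})$ conditioned on $H_{t-1}$. The covariates $\bx_1,\dots,\bx_{t-1}$ are functions of earlier released draws, so they are public and may be treated as fixed given the past.

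The neighbouring relation changes a single private response $y_j$ (same public action $\bx_j$). For rounds $t\le j$ the history $H_{t-1}$ does not contain $y_j$. The conditional output distributions then coincide and the cost is zero. In particular round $1$ uses the prior, which explains why the sum starts at $t=2$.

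For rounds $t>j$, I would apply Proposition~\ref{prop:public_X_rdp} with design $X=(\bx_1,\dots,\bx_{t-1})$ and scale $\sigma_t$. This gives a conditional cost $\alpha\cdot\frac{2M_Y^2}{\sigma_t^2r^2}h_j(X)$. Lemma~\ref{lem:influence_monotonicity} then bounds $h_j(X)\le k(\bx_j,\bx_j)/(k(\bx_j,\bx_j)+r^2)\le\sup_{\bx}k(\bx,\bx)/(k(\bx,\bx)+r^2)$. This bound is uniform in the adaptively chosen design, which is exactly what adaptive composition needs: a per-round RDP bound holding for every realisation of the past.

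One technical point is that responses are bounded by $M_Y=B+M_\xi$. This holds because $|f_*(\bx)|\le\|f_*\|_{\mathcal H_k}\sqrt{k(\bx,\bx)}\le B$ and $|\xi_t|\le M_\xi$.

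Summing over $t=2,\dots,T$ (bounding the cost for $t\le j$ by the same nonnegative terms) gives the first inequality for $\rho_T$. Replacing each $\sigma_t$ by $\bar\sigma$ gives the second.

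For the DP conversion, the bound $(\alpha,\alpha\rho_T)$-RDP holds for all $\alpha>1$, so I use the standard conversion $\varepsilon=\alpha\rho_T+\log(1/\delta)/(\alpha-1)$ from \citet{rdp}. Setting $\alpha-1=\sqrt{\log(1/\delta)/\rho_T}$ gives $\varepsilon=\rho_T+2\sqrt{\rho_T\log(1/\delta)}$.

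The main obstacle I expect is the adaptive-composition step. I need to justify that the per-round bound is uniform over histories. In particular, the design $X$ at round $t$ must be identical under $D$ and $D'$ once we condition on the same released past, so that the public-covariate Proposition applies. I would handle this by formulating the mechanism as a sequence of conditional kernels indexed by the released transcript, and noting that the covariates are deterministic functions of that transcript.
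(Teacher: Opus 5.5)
Your proposal is correct and follows the same route as the paper. Both arguments bound each round with Proposition~\ref{prop:public_X_rdp} and Lemma~\ref{lem:influence_monotonicity}, apply adaptive RDP composition \citep[Proposition~1]{rdp}, and convert via $\varepsilon(\alpha)=\alpha\rho_T+\log(1/\delta)/(\alpha-1)$ at $\alpha_*=1+\sqrt{\log(1/\delta)/\rho_T}$. Your extra bookkeeping only makes explicit what the paper's short proof leaves implicit: zero cost for rounds $t\le j$, summing $\sigma_t^{-2}$ from $t=2$, and a per-round bound that holds uniformly over released histories.
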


\begin{proof}
By Proposition~\ref{prop:public_X_rdp} and Lemma~\ref{lem:influence_monotonicity} one round of GP-TS is $(\alpha,\alpha \rho)$-RDP. The composition guarantees of RDP show that the adaptive composition of $T$ mechanisms that each satisfy $(\alpha,\gamma(\alpha))$-RDP is $(\alpha,T\gamma(\alpha))$-RDP \citep[see][Proposition 1]{rdp}. Thus, $T$ rounds of GP-TS is $(\alpha,\alpha T\rho)$-RDP. The standard conversion
to approximate DP gives $\varepsilon(\alpha)
=
\alpha \rho_T
+
\log(1/\delta)/(\alpha-1)
$.
The minimum over $\alpha>1$ is attained at
$
\alpha_*
=
1+
\sqrt{
\frac{\log(1/\delta)}{\rho_T}
}
$ which gives the claimed bound.
\end{proof}

This result mirrors the DP analysis of finite-armed Thompson sampling
in \citet[Appendix~C.2]{pmlr-v238-ou24a}. There, one round is
bounded by $(\alpha,\alpha/4)$-RDP, and adaptive composition over $T$
rounds gives $(\alpha,\alpha T/4)$-RDP and both guarantees have identical $T$-scaling. 

\paragraph{Sublinear growth of the RDP bound for Mat\'{e}rn kernels} Interestingly, under a suitable variance schedule $\sigma_t$ one can obtain sublinear growth of the DP/RDP bounds with $T$. Such a schedule can be obtained as a lower bound for the variance schedule of \citet{pmlr-v70-chowdhury17a} which in our notation reads $\sigma_t = B + M_\xi \sqrt{2(\gamma_{t-1}+1+\log(2/\delta_{conf}))}$ where $\gamma_t$ is the so-called maximum information gain and $\delta_{conf}\in ]0,1[$ is a free confidence parameter used by the algorithm. The key is to use known lower bounds for $\gamma_t$. Namely, \citet[Section~1.1]{pmlr-v130-vakili21a} explain that for Mat\'{e}rn-$\nu$ kernels on compact $d$-dimensional domains there exists $c>0$ such that  $\gamma_t\geq c\, t^{d/(2\nu+d)}$. Consequently, we have
\[
\sigma_t^2\ge 2M_\xi^2 \gamma_{t-1} \ge c\,M_\xi^2\,(t-1)^{d/(2\nu+d)}.
\]
Plugging this into the bound from Corollary~\ref{cor:T_round_TS} and using $k(\bx,\bx)=1$ we get 
\[
\rho_T \le \frac{2M_Y^2}{r^2(1+r^2)}
\sum_{t=2}^{T}\frac{1}{\sigma_t^2}\le \frac{2M_Y^2}{c\,M_\xi^2\,r^2(1+r^2)}
\sum_{t=1}^{T-1}t^{-d/(2\nu+d)}= \mathcal O\!\left( T^{2\nu/(2\nu+d)} \right).
\]

\subsection{Number of releasable posterior paths}
\label{app:composition}
To illustrate the effect of composition, we define
$L_{\max}(r,\sigma)$ as the largest number of independent posterior paths
whose composed guarantee satisfies $\varepsilon_L\le \varepsilon_{budget}$ at
$\delta=10^{-3}$. Figure~\ref{fig:lmax_contours} shows that regularisation
and prior scale can permit tens to thousands of exact posterior draws while using specialised bounds such as
the RKHS-response assumption further increases $L_{\max}$ relative to the
generic bounded-response model.

\begin{figure}[t]
    \centering
    \includegraphics[width=\linewidth]{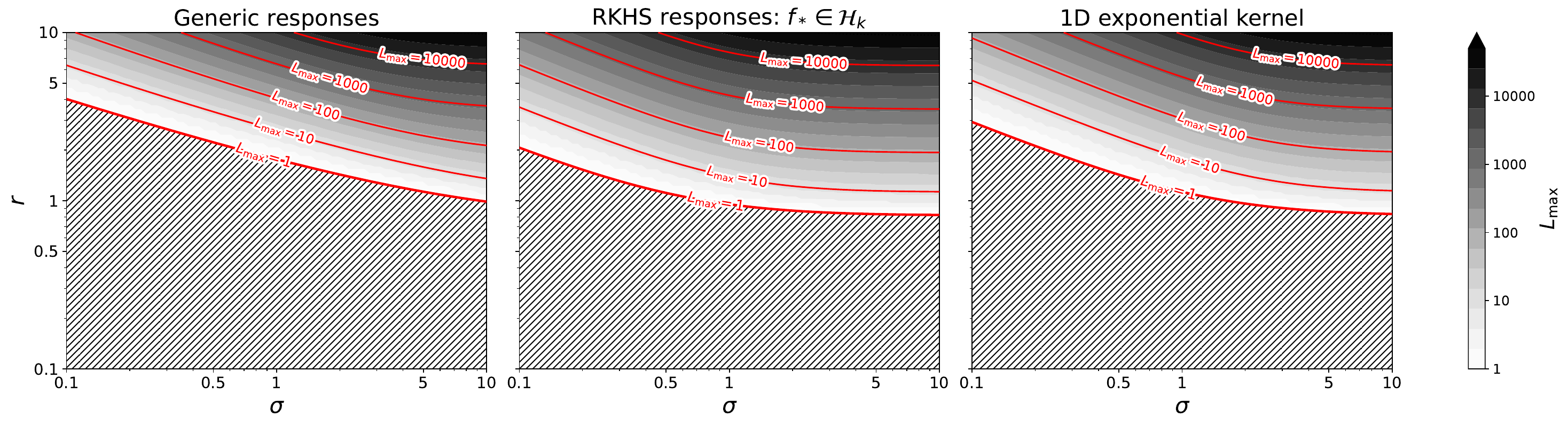}
    \includegraphics[width=\linewidth]{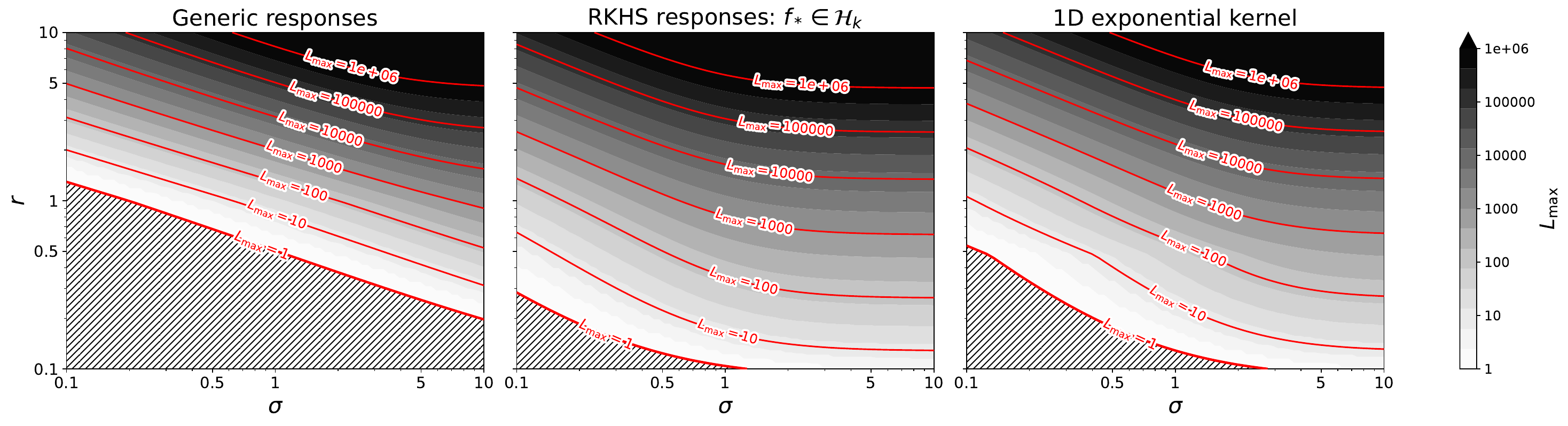}
    \caption{
    Maximum number $L_{\max}$ of independent posterior sample paths that can
    be released under the given privacy budget
    $(\varepsilon,\delta)$, as a function of the regularisation
    parameter $r$ and prior scale $\sigma$. The left column uses the generic
    bounded-response assumption $|y_i|\le M_Y=1$ (Lemma~\ref{lemma:boundedResp_Delta_bound}) and the middle column uses
    the RKHS assumption $y_i = f_*(\bx_i)$ with $\|f_*\|_{\mathcal H_k}\le1$ (Lemma~\ref{lemma:Delta_rkhs_response_bound}). The right column uses one-dimensional exponential kernel with bounded responses $|y_i|\le M_Y=1$ (Corollary~\ref{coro:exp_1d}). Hatched regions do not permit
    even one path under the given privacy budget. {\textbf{Top row:}} Privacy budget $\varepsilon_{budget}=10$. {\textbf{Bottom row:}} Privacy budget $\varepsilon_{budget}=10^3$ which is more illustrative for protecting against training data reconstruction, see the discussion in Section \ref{sec:discussion}. Every panel has
    $n=100$, $\delta=10^{-3}$ and $\kappa=\sqrt{0.1}$. 
    }
    \label{fig:lmax_contours}
\end{figure}

\section{FURTHER DETAILS OF THE MEMBERSHIP ATTACK}\label{app:mia}

We evaluate membership leakage using a LiRA-style shadow-model attack. The attack targets a fixed point \(x_0\) and distinguishes between two hypotheses:
\[
    H_{\rm in}: x_0 \in D,
    \qquad
    H_{\rm out}: x_0 \notin D .
\]
For each hypothesis, the adversary generates shadow datasets from the same data-generating distribution used in the experiment (uniform on $\Omega_X=[0,1]$). In the step-function experiment, responses are generated from
\[
    f_{step}(x)=
    \begin{cases}
    -1, & x<1/2,\\
    1, & x\geq 1/2,
    \end{cases}
\]
with bounded additive label noise $y_i=(1-M_\xi)f_{step}(x_i)+\xi_i$, $\xi_i\sim {\rm Unif}[-M_\xi,M_\xi]$. Under $H_{\rm in}$, the target point $x_0$ is inserted into the dataset and the remaining points are sampled from the uniform distribution on $[0,1]$. Under $H_{\rm out}$, all $n$ points are sampled from the uniform distribution on $[0,1]$. For a shadow dataset $D=(X,y)$ the posterior mean and normalized posterior variance at $x_0$ are
\[
    \mu_D(x_0)
    =
    k(x_0,X)\bigl(K(X,X)+r^2 I\bigr)^{-1}y,
\]
and
\[
    v_D(x_0)
    =
    1-
    k(x_0,X)\bigl(K(X,X)+r^2 I\bigr)^{-1}k(X,x_0),
\]
where $k(x,x')=\exp(-|x-x'|/\ell)$ is the exponential kernel function. The released object consists of \(L\) posterior sample-path values at the target point,
\begin{equation}\label{eq:posterior_draw}
    f_D^{(\ell)}(x_0)
    =
    \mu_D(x_0)
    +
    \sigma \sqrt{v_D(x_0)} Z_l,
    \qquad
    Z_l\sim N(0,1),
    \qquad
    l=1,\ldots,L.
\end{equation}
The attack uses two scalar statistics based on $L$ released values:
\[
    \widehat f_D(x_0)
    =
    \frac{1}{L}\sum_{\ell=1}^L f_D^{(\ell)}(x_0),
\qquad 
    \widehat v_D(x_0)
    =
    \frac{1}{L\sigma^2}
    \sum_{\ell=1}^L
    \left(
        f_D^{(\ell)}(x_0)-\widehat f_D(x_0)
    \right)^2 .
\]
Thus $\widehat f_D(x_0)$ captures the posterior mean signal available from the released paths, while $\widehat v_D(x_0)$ captures the posterior covariance signal. For $L=1$, the empirical variance statistic is undefined and the attack uses only $\widehat f_D(x_0)$.

The LiRA score is obtained by fitting separate density models for the in and out shadow distributions. Let $\rho_{\rm in}^{(f)},\ \rho_{\rm out}^{(f)}$ denote the fitted densities of $\widehat f_D(x_0)$ under $H_{\rm in}$ and $H_{\rm out}$ respectively. The sample-mean statistic is modelled by a two-component latent Gaussian mixture. Specifically, the model introduces a latent Gaussian mixture variable
\[
    \phi \sim
    \sum_{j=1}^2 \omega_j N(m_j,\tau_j^2),\quad N(m_j,\tau_j^2):\ \phi\mapsto \rho_{Gauss}\left(\phi;m_j,\tau_j^2\right):=\frac{1}{\tau_j\sqrt{2\pi}}
\exp\left[-(\phi-m_j)^2/(2\tau_j^2)\right]
\]
and models the observed statistic as
\[
    \widehat f_D(x_0)=u+\eta, \qquad u=\tanh(\phi/2),
    \qquad
    \eta\sim N(0,s^2).
\]
This latent model is motivated by Equation~\ref{eq:posterior_draw} where $\mu_D(x_0)$ takes values in $[-1,1]$ (by Lemma~\ref{lemma:Delta_exp_1D_mu_bounded}) and thus is modelled by $\tanh(\phi/2)$ whereas the contribution $\sigma \sqrt{v_D(x_0)} Z_l$ is modelled by the random normal variable $\eta$. Consequently, the fitted density is obtained by marginalising over the distribution of the latent variable $\phi$
\[
    \rho^{(f)}(t) = \int_{-\infty}^\infty\rho(t\mid \phi) \rho(\phi)d\phi
    =
    \sum_{j=1}^2
    \omega_j
    \int_{-\infty}^\infty
    \rho_{Gauss}\left(t;\tanh(\phi/2),s^2\right)
    \rho_{Gauss}\left(\phi;m_j,\tau_j^2\right)
    \,d\phi ,
\]
with the parameters fitted separately for the in- and out- shadow distributions by maximum likelihood. The integral is evaluated numerically.

For the nonnegative sample-variance statistic $\widehat v_D(x_0)$ the attack instead fits a two-component Gaussian mixture after the logarithmic transformation i.e.,
\[
    \phi_v
    =
    \log\widehat v_D(x_0),
\qquad
    \phi_v
    \sim
    \sum_{j=1}^2 \tilde\omega_j\, N\left(\tilde m_j,\tilde \tau_j^2\right).
\]

Given a fresh evaluation release, the fitted likelihood-ratio scores are
\[
    S_f
    =
    \log \rho_{\rm in}^{(f)}\!\left(\widehat f_D(x_0)\right)
    -
    \log \rho_{\rm out}^{(f)}\!\left(\widehat f_D(x_0)\right),
\]
and, for \(L>1\),
\[
    S_v
    =
    \log \rho_{\rm in}^{(v)}\!\left(\widehat v_D(x_0)\right)
    -
    \log \rho_{\rm out}^{(v)}\!\left(\widehat v_D(x_0)\right).
\]
The combined attack score is $S = S_f + S_v$. Equivalently, this is a plug-in product approximation to the joint likelihood ratio of
$(\widehat f_D(x_0),\widehat v_D(x_0))$. When $L=1$ the combined score reduces to $S=S_f$. Figure~\ref{fig:lira_histogram} shows that the latent models model the empirical in- and out- histograms well.

Attack performance is evaluated on independent in- and out- evaluation datasets. We report the ROC curve and true-positive rates at fixed false-positive rates. For a score threshold $t$ the Neyman--Pearson test declares membership when
$S\geq t$. This gives false-positive and true-positive rates
\[
    {\rm FPR}(t)=\mathbb P_{\rm out}\{S\geq t\},
    \qquad
    {\rm TPR}(t)=\mathbb P_{\rm in}\{S\geq t\}.
\]
To report ${\rm TPR}@\alpha$ we vary $t$ until the false-positive rate under
the out distribution reaches the target level $\alpha$, and then evaluate the
true-positive rate under the in distribution at the same threshold. Thus, for
a threshold $t_\alpha$ satisfying ${\rm FPR}(t_\alpha)=\alpha$ we report
$
    {\rm TPR}@\alpha
    =
    {\rm TPR}(t_\alpha)
    =
    \mathbb P_{\rm in}\{S\geq t_\alpha\}
$.
This evaluation measures how well an adversary equipped with shadow samples from the two membership hypotheses and knowledge of the release parameters $(\ell,r,\sigma,L)$ can distinguish whether the target point was included in the training dataset. Figure~\ref{fig:mia2} presents results for $\mathrm{TPR}@\mathrm{FPR}=0.01$.
\begin{figure}[t]
  \centering
 \includegraphics[width=.7\columnwidth]{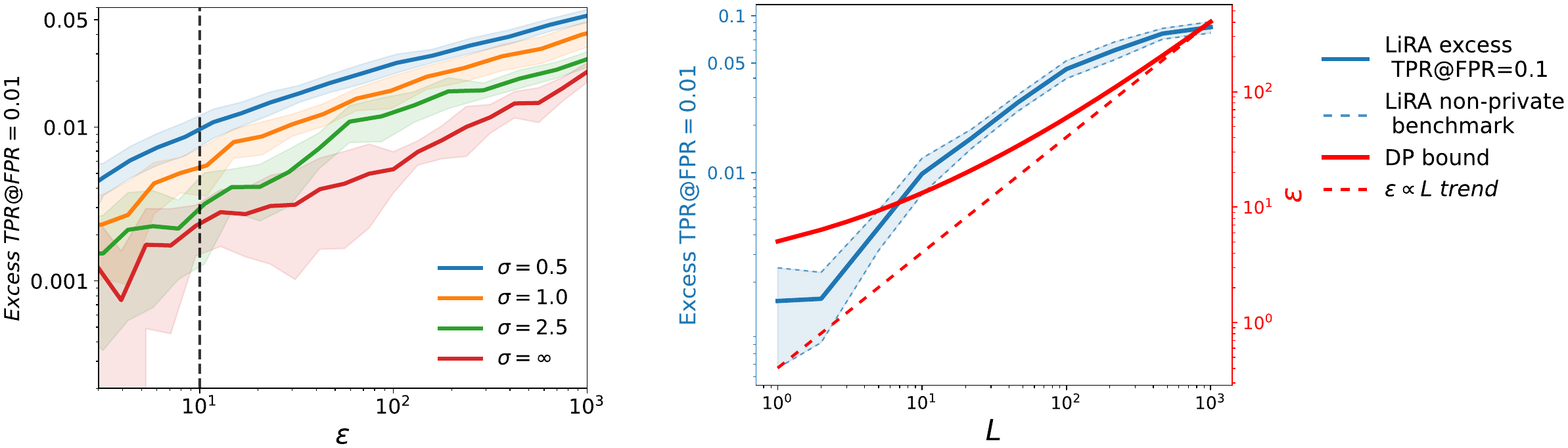}
 \caption{
MIA success and $(\varepsilon,\delta)$-DP bounds ($n=10$, $\delta=0.05$). 
{\bf Left:} Excess $\mathrm{TPR}@\mathrm{FPR}=0.01$ vs. $\varepsilon$ for different $\sigma$ ($L=1$). Error bands show standard deviation over $10$ random seeds. {\bf Right:} Effect of releasing $L$ posterior sample paths. The LiRA attack starts with random-guess effectiveness ($TPR\approx FPR$) at $L=1$ and strengthens with $L$ approaching the non-private benchmark while the $(\varepsilon,\delta)$-DP bound grows sub-linearly with $L$. Parameters: $r=1$, $\sigma=5$, $n=10$, $\delta=0.05$. Error bands show standard deviation over $10$ random seeds.
}
  \label{fig:mia2}
\end{figure}

\begin{figure}[t]
  \centering
 \includegraphics[width=.75\textwidth]{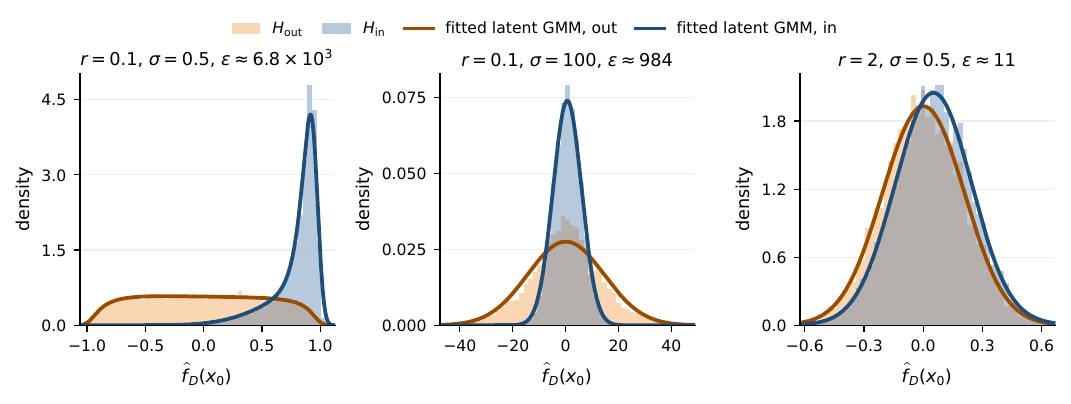}
 \includegraphics[width=.75\textwidth]{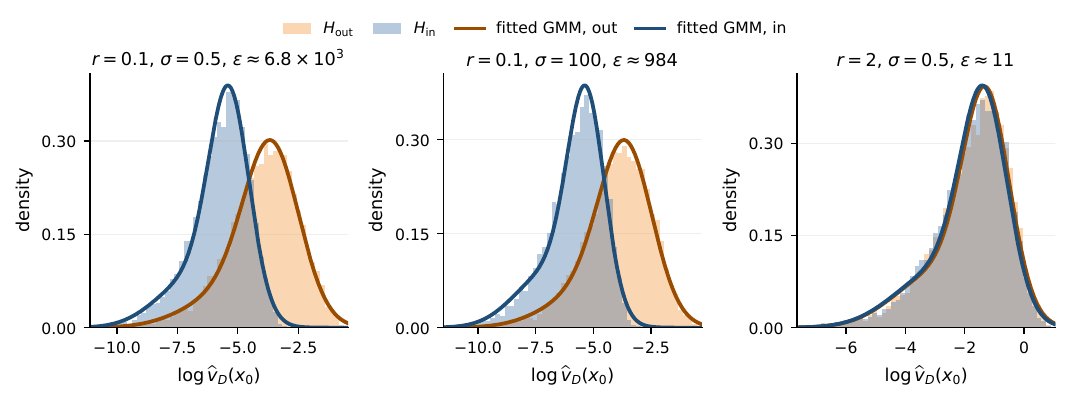}
  \caption{Shadow-distribution histograms for the LiRA-style attack at $x_0=1/2$ showing the sample-path mean statistic $\widehat f_D(x_0)$ in the top row and the log-variance statistic $\log \widehat v_D(x_0)$ in the bottom row. Each column corresponds to a different choice of the effective ridge parameter $r$ and kernel scale $\sigma$ with the corresponding approximate DP value $\varepsilon$ shown in the figure title. The comparison illustrates that privacy depends sharply on effective ridge regularisation. When $r$ is small, the in- and out-shadow distributions remain well separated, and even a very large posterior sampling scale $\sigma$ does not yield a small privacy parameter. In contrast, increasing $r$ substantially reduces the distinguishability of the two hypotheses: in the right column the in/out histograms and fitted densities nearly overlap, corresponding to much stronger membership privacy protection. Dataset size $|D|=10$, $\delta=0.001$, the LiRA attack uses $L=3$ independent posterior draws. The responses are generated as noiseless ($M_\xi=0$).
}
  \label{fig:lira_histogram}
\end{figure}

\section{FURTHER DETAILS OF THE PRIVACY-UTILITY EXPERIMENTS}
\label{app:utility}

\paragraph{Details of the 1D excursion-set utility experiment.}
For each experimental configuration we generate a pool of $N_{\rm pair}=10^3$
independent pairs $\left(f_*^{(j)},D^{(j)}\right)$, $j=1,\dots,N_{\rm pair}$ on $\Omega_X=[0,1]$. For pair $j$ the covariates
$x^{(j)}_1,\ldots,x^{(j)}_n$ are sampled i.i.d. from
$\mathrm{Unif}[0,1]$. The latent target is sampled from a centred GP with
exponential kernel
$\widetilde k(x,x')=\exp(-|x-x'|/\ell_{\rm gen})$ with $\ell_{\rm gen}=1$. The sampled path is rescaled to have
supremum norm $1-M_\xi$ and the observations for pair $j$ are generated as
\[
    y_i^{(j)}=f_*^{(j)}\left(x_i^{(j)}\right)+\xi_i^{(j)},
    \qquad
    \xi_i^{(j)}\overset{\rm i.i.d.}{\sim}
    \mathrm{Unif}[-M_\xi,M_\xi].
\]
More precisely, if the unnormalised prior GP draw is denoted by $\tilde f_*^{(j)}$,
then
$f_*^{(j)}=(1-M_\xi)\tilde f_*^{(j)}/\left\|\widetilde f_*^{(j)}\right\|_\infty$.
Thus the amplitude noise-to-signal ratio is $NSR=M_\xi/(1-M_\xi)$. In all reported experiments we take the excursion threshold $t=0$, so the target excursion set is
\[
    \Omega_t\left(f_*^{(j)}\right)
    =
    \left\{x\in\Omega_X:f_*^{(j)}(x)\ge t\right\}.
\]
Norm $\|\cdot\|_\infty$ is approximated on a fixed uniform evaluation grid
$x^{\rm grid}_1,\ldots,x^{\rm grid}_m$ with $m=800$ and all integrals over $\Omega_X$ are approximated using the trapezoidal rule on the same grid. We write
\[
    s_*(x):=\mathbf 1\left\{f_*\left(x\right)\ge t\right\}.
\]
where $\mathbf 1\{\cdot\}$ is the indicator function.

For a candidate hyperparameter triple $\theta=(\ell,r,\sigma)$, we fit the
GP posterior for each pair $\left(f_*^{(j)},D^{(j)}\right)$ and compute the posterior
excursion probability
\[
p_{D,\theta}(x)
:=
\mathbb P_{f\sim\Pi_{D,\theta}}\!\left[f(x)\ge t\right]
=
\Phi\!\left(
\frac{\mu_D(x)-t}
{\sigma\sqrt{k_D(x,x)}}
\right),
\]
where $\Phi$ is the CDF of the univariate normal distribution. The hyperparameter-search objective is the prior-average integrated BCE
\[
    \widehat{\mathcal L}_{\rm BCE}(\theta)
    =
    \frac1{N_{\rm pair}}
    \sum_{j=1}^{N_{\rm pair}}\mathcal L_{\rm BCE}\left(\Pi_{D^{(j)},\theta},f_*^{(j)}\right)
    , \quad \mathcal L_{\rm BCE}\left(\Pi_{D,\theta},f_*\right):=\int_{\Omega_X} \ell_{\rm BCE}\!\left( s_*(x), p_{D}(x) \right)\,dx.
\]
The unconstrained benchmark hyperparameters are selected by
\[
    \theta_*
    \in
    \arg\min_\theta
    \widehat{\mathcal L}_{\rm BCE}(\theta),
\]
whereas the privacy-feasible hyperparameters are selected by the constrained
search
\[
    \theta^{\rm priv}_L
    \in
    \arg\min_{\theta:\,\varepsilon_L(\theta)<\varepsilon_0}
    \widehat{\mathcal L}_{\rm BCE}(\theta).
\]
Thus the constrained search differs from the unconstrained search only through
the privacy feasibility constraint. Both searches optimise the same BCE
objective. The DP bound $\varepsilon_L(\theta)$ is evaluated using the
exponential-kernel sensitivity formula from Lemma~\ref{lemma:Delta_exp_1D} together with Theorem \ref{thm:main_rdp} and the RDP composition rule \eqref{eq:rdp_composition} numerically optimised over $\alpha$ with $L$ being the number of released posterior sample paths. The hyperparamater search is performed on a coarse grid over
$(\ell,r,\sigma)$ followed by two rounds of grid refinement around the top-$3$ best average-BCE
candidates with the private refinement restricted to feasible candidates
satisfying $\varepsilon_L(\theta)<\varepsilon_0$.

After selecting $\theta_*$ and $\theta^{\rm priv}_L$, we evaluate both the
posterior probability map and the actual randomized release. The deterministic
 estimate associated with a candidate $\theta$ is
\[
\widehat s_\theta(x):=\mathbf 1\left\{p_{D,\theta}(x)\ge C\right\},
\]
where the cutoff $C$ is selected on a separate validation set by maximising
expected IoU where
\[
{\rm IoU}\left(\widehat\Omega_\theta,\Omega_t\left(f_*\right)\right)=\frac{\left|\widehat\Omega_\theta\cap\Omega_t\left(f_*\right)\right|}{
\left|\widehat\Omega_\theta\cup\Omega_t\left(f_*\right)\right|}=\frac{\int_{\Omega_X}dx\, s_*(x)\widehat s_\theta(x)}{\int_{\Omega_X}dx\, \mathbf 1\left\{\widehat s_\theta(x)=1
    \ \text{or}\ s_*(x)=1\right\}}.
\]
The finite-grid approximation of ${\rm IoU}\left(\widehat\Omega_\theta,f_*\right)$ is the reported baseline IoU. It is a deterministic post-processing of the posterior excursion probabilities and is not the randomized posterior-sample release.

The utility of the randomized released excursion set is evaluated by Monte Carlo simulation of the actual $L$-path mechanism where we draw $B$ independent sets of $L$ posterior sample paths. Namely, for each stored pair $\left(f_*^{(j)},D^{(j)}\right)$ and Monte Carlo repetition $b=1,\ldots,B$, we draw independent posterior paths $f^{(1)}_{j,b},\ldots,f^{(L)}_{j,b}
    \overset{\rm i.i.d.}{\sim}
    \Pi_{D^{(j)},\theta}$
on the evaluation grid. In the experiments we take $B=50$. For a fixed collection of posterior sample paths $f^{(1)},\dots, f^{(L)}$ the indicator function of the released (randomised) excursion set is
\[
    \widehat s^{(L)}_{\theta}(x)
    :=
    \mathbf 1\!\left\{
    \frac1L\sum_{\ell=1}^L
    \mathbf 1\left\{f^{(\ell)}(x)\ge t\right\}
    \ge c
    \right\},
\]
where the vote-fraction cutoff $c$ is also selected on a separate validation
set by maximising expected IoU. The released excursion set for this collection of posterior sample paths is denoted by $\widehat\Omega_\theta^{(L)}$.

For fixed pair $(f_*,D)$, the Monte Carlo mean over the $B$ sample-path set realisations
\[
    \widehat {\rm IoU}^{(L)}_{MC}
    =
    \frac1B\sum_{b=1}^B {\rm IoU}\left(\widehat\Omega_{b,\theta}^{(L)},\Omega_t\left(f_*\right)\right)
\]
estimates the expected IoU of the randomized release $\widehat\Omega_\theta^{(L)}$ given
$(f_*,D)$. The reported released-set IoU is obtained by summarising the median and IQR of the values $\widehat {\rm IoU}^{(L)}_{MC}$ over the stored pairs $\left(f_*^{(j)},D^{(j)}\right)$. Similarly, the reported $L$-path IoU SD is the distribution of the MC standard deviations over the stored pairs $\left(f_*^{(j)},D^{(j)}\right)$. Such a standard deviation of the released-set IoU isolates the
randomness due to posterior sampling for a fixed $\left(f_*^{(j)},D^{(j)}\right)$-pair.

The BCE increase reported in the final table is computed pairwise by comparing
the privacy-feasible and unconstrained BCE-selected candidates on the same
$\left(f_*^{(j)},D^{(j)}\right)$-pairs:
\[
    \frac{
    \mathcal L_{\rm BCE}\left(\Pi_{D^{(j)},\theta^{\rm priv}_L},f_*^{(j)}\right)
    }{
    \mathcal L_{\rm BCE}\left(\Pi_{D^{(j)},\theta_*},f_*^{(j)}\right)
    }
    -1.
\]
The tables report median and interquartile range over the stored pairs for the
DP bound, effective dimension
$d_{\rm eff}=\operatorname{tr}\{K(K+r^2I)^{-1}\}$, BCE increase, baseline IoU from the unconstrained hyperparameter search,
released-set IoU, and the conditional standard deviation of the released-set
IoU.

Table~\ref{tab:noise_transition_full} shows that as $\rm NSR$ increases the unconstrained utility optimum becomes naturally more
regularized: both $\varepsilon$ and $d_{\rm eff}$ decrease sharply. Consequently, the additional utility cost of the constrained search decreases as $\rm NSR$ grows. The results complement Table~\ref{tab:noise_transition} from the main text.

\begin{table}[t]
\centering
\setlength{\tabcolsep}{4pt}
\begin{tabular}{lcccc}
\toprule
 $M_\xi$ & $0.1$ & $0.3$ & $0.5$ & $0.6$ \\
\midrule
NSR & 0.11 & 0.43 & 1.00 & 1.50 \\
$\varepsilon$, unconstrained & $2460$ & $270$ & $36$ & $22$ \\
$d_{\rm eff}$, unconstrained & $51.2\,[50.5,52.0]$ & $19.4\,[19.2,19.6]$ & $8.39\,[8.32,8.44]$ & $5.75\,[5.71,5.78]$ \\
$d_{\rm eff}$, $\varepsilon<9$ & $3.09\,[3.07,3.10]$ & $2.26\,[2.25,2.27]$ & $2.18\,[2.17,2.19]$ & $2.16\,[2.15,2.17]$ \\
relative BCE increase & $116\%\,[76\%,197\%]$ & $37.7\%\,[15.0\%,71.9\%]$ & $12.0\%\,[0.5\%,25.8\%]$ & $5.9\%\,[0.0\%,15.4\%]$ \\
IoU, unconstrained & $0.949\,[0.925,0.975]$ & $0.916\,[0.872,0.954]$ & $0.871\,[0.812,0.920]$ & $0.844\,[0.760,0.905]$ \\
$1$-path IoU mean, $\varepsilon<9$ & $0.841\,[0.776,0.888]$ & $0.828\,[0.757,0.876]$ & $0.797\,[0.720,0.855]$ & $0.765\,[0.684,0.831]$ \\
relative IoU  gap & $10.6\%\,[7.3\%,16.5\%]$ & $8.8\%\,[5.5\%,14.2\%]$ & $8.2\%\,[4.6\%,14.7\%]$ & $8.6\%\,[4.8\%,15.9\%]$ \\
\bottomrule
\end{tabular}
\caption{The effect of varying the $\mathrm{NSR}$ set size on the released 1D excursion set utility at fixed training set size $|D|=100$. Median [interquartile range] over $10^3$ draws of $(D,f_*)$-pairs across a range $\mathrm{NSR}$ values. The private candidate is the best average-BCE-selected setting satisfying $\epsilon<9$ and $\delta=10^{-3}$. The $1$-path IoU mean reported for a sample of $50$ posterior draws per each $(D,f_*)$-pair.}
\label{tab:noise_transition_full}
\end{table}

\subsection{Synthetic excursion set experiment in 2D}
\label{app:utility_2D}
We consider a synthetic spatial excursion-set problem on $\Omega_X=[0,1]^2$,
intended to mimic a smooth pollution field with a small number of broad
high-concentration regions. We generate a random pollution field by drawing a small number of anisotropic Gaussian-type
plumes with random centres, widths and amplitudes and forming their
superposition $h(x)$. More specifically, we draw $K$ anisotropic Gaussian plumes and set
\[
    h(x)
    =
    \sum_{j=1}^{K}
    a_j
    \exp\!\left\{
        -\frac12
        (x-x_{c_j})^\top
        \Sigma_j^{-1}
        (x-x_{c_j})
    \right\},
    \qquad
    \Sigma_j
    =
    R_{\phi_j}
    {\rm diag}(w_{j,1}^2,w_{j,2}^2)
    R_{\phi_j}^{\top}.
\]
Here $x_{c_j}\in[0,1]^2$, $a_j>0$, $w_{j,1},w_{j,2}>0$, $\phi_j\in[0,2\pi]$ are sampled randomly and $R_\phi$ denotes the rotation by angle $\phi$. In the presented expetiments we take $K=2$. To obtain high- and low-pollution regions, we apply the sigmoid transformation
\[
    \widetilde g_*(x)
    =
    \frac{1}{1+\exp[-\gamma(h(x)-b_h)]},
\]
where $b_h$ is chosen so that $65\%$ grid values of $h$ are below $b_h$.
The resulting field is then normalised to give $g_*:\Omega_X\to[0,1]$ on $\Omega_X$. Since the
GP prior is centred at zero, we finally use the transformed field
\[
    f_*(x)=2 g_*(x)-1\in[-1,1]
\]
as the latent response function.

Given $n$ uniformly sampled sensor locations $x_i\in\Omega_X$, observations are
generated as
\[
    y_i=(1-M_\xi)f_*(x_i)+\xi_i,
    \qquad
    \xi_i\sim{\rm Unif}[-M_\xi,M_\xi].
\]
The target set is the excursion set of the attenuated signal,
\[
    \Omega_t
    =
    \{x\in\Omega_X:(1-M_\xi)f_*(x)\ge t\},
    \qquad
    t=(1-M_\xi)(2q-1),
\]
where $q\in(0,1)$ is the excursion threshold on the original $[0,1]$ scale of
$g_*$. In the presented experiments we take $q=1/2$ i.e. $t=0$. For each dataset we fit a GP posterior with exponential kernel
$k(x,x')=\exp(-\|x-x'\|/\ell)$, effective ridge parameter $r$, and posterior
scale $\sigma$. As in the 1D experiment the hyperparameters $(\ell,r,\sigma)$ are selected by
minimising the mean binary cross-entropy between the true excursion
labels and the posterior excursion probabilities
\[
    p_D(x)
    =
    \Phi\!\left(
    \frac{\mu_D(x)-t}{\sigma\sqrt{k_D(x,x)}}
    \right).
\]
We perform this selection both without a privacy constraint and under the privacy constraint $\varepsilon<\varepsilon_0$ using a coarse
grid followed by a local refinement around the best feasible hyperparameter candidates. 

For privacy bounds we use Theorem~\ref{thm:main_rdp} together with Proposition~\ref{prop:general_positive_kernel_exact_Vn} which gives $V_n(r)=\bar V_n(r)$ with $\kappa=\exp(-\sqrt{2}/\ell)$ and Lemma~\ref{lemma:boundedResp_Delta_bound} that provides the generic $\mathcal O(\sqrt{n})$-bound for $\Delta_n(r)$. We emphasise that this latter bound is not tailored to the present spatial model and is likely conservative -- it controls the posterior mean sensitivity uniformly over all bounded-response datasets without exploiting the smooth plume structure or the specific GP kernel form. Consequently, the privacy-constrained hyperparameter search may over-regularise the posterior making the observed privacy-utility tradeoff more pronounced than what would be obtained from a sharper problem-specific sensitivity analysis. This is to be compared with the 1D setting where tighter bounds on $\Delta_n(r)$ lead to substantially less pessimistic privacy calibration.

After hyperparameter selection a single probability threshold $C$
is chosen to maximise the mean IoU of the decision sets
$\{x:p_D(x)\ge C\}$ over the sampled set-aside validation data. We also evaluate the actual
one-path private release: for each realisation $(D,f_*)$ posterior sample paths are drawn
on a grid, smoothed by a fixed Gaussian filter and thresholded to form
released excursion sets. This smoothing is data-independent post-processing and
therefore does not affect the privacy guarantee. We report the non-private posterior-probability excursion estimate $\{x:p_D(x)\ge C\}$ and the IoU
of the smoothed one-path private releases against the true excursion set.

Table~\ref{tab:noise_transition_2d} and Figures~\ref{fig:2d_excursion_NSR1}--\ref{fig:2d_excursion_lowNSR} show that the privacy-constrained posterior can still yield useful released excursion sets, even when its posterior-probability calibration is substantially degraded. The effect of the privacy constraint is most visible in the BCE values: the private hyperparameters satisfying $\varepsilon<9$ have much smaller effective dimension than the unconstrained posterior and this leads to a large relative increase in BCE, especially in the low-noise regime. However, the IoU degradation is considerably milder. At low noise the non-private posterior-probability set is almost exact, so the private one-path releases necessarily show a visible loss relative to this very strong baseline. Nevertheless, the released boundaries remain concentrated around the true excursion boundary and are still highly informative. As the noise level increases, the unconstrained excursion set itself becomes less accurate and the additional cost of privacy becomes less pronounced in relative IoU terms. This transition is reflected both quantitatively in the table and qualitatively in the figures: for $\mathrm{NSR}=0.11$ the non-private boundary nearly coincides with the truth, whereas for $\mathrm{NSR}=1$ the non-private estimate is already visibly imperfect, making the smoothed private one-path releases a reasonable approximation despite the DP-induced randomisation.

\begin{table}[ht]
\centering
\setlength{\tabcolsep}{4pt}
\begin{tabular}{lccc}
\toprule
 $M_\xi$ & $0.1$ & $0.3$ & $0.5$ \\
\midrule
NSR & 0.11 & 0.43 & 1.00 \\
$\varepsilon$, unconstrained & $3\cdot 10^6$ & $4\cdot 10^4$ & $10^3$ \\
$d_{\rm eff}$, unconstrained & $211.8\, [210.8,212.8]$ & $96.63\, [96.24,97.00]$ & $39.55\, [39.42,39.67]$  \\
$d_{\rm eff}$, $\varepsilon<9$ & $7.38\, [7.37,7.39]$ & $8.24\, [8.23,8.25]$ & $7.87\, [7.86,7.88$  \\
relative BCE increase & $440.0\%\, [329.2\%,542.6\%]$ & $165.5\%\, [130.1\%,205.4\%]$ & $73.91\%\, [54.78\%,93.28\%]$  \\
IoU, unconstrained & $0.979\, [0.974,0.983]$ & $0.950\, [0.939,0.959]$ & $0.908\, [0.888,0.927]$  \\
$1$-path IoU mean, $\varepsilon<9$ & $0.846\, [0.822,0.868]$ & $0.820\, [0.793,0.847]$ & $0.778\, [0.747,0.815]$  \\
$1$-path IoU SD, $\varepsilon<9$ & $0.029\, [0.024,0.033]$ & $0.035\, [0.030,0.041]$ & $0.043\, [0.037,0.050]$ \\
relative IoU  gap & $13.6\%\, [11.4\%,15.8\%]$ & $13.6\%\, [11.1\%,16.1\%]$ & $13.9\%\, [10.4\%,16.9\%]$  \\
\bottomrule
\end{tabular}
\caption{The effect of varying the $\mathrm{NSR}$ set size on the released 2D excursion set utility at fixed training set size $|D|=400$. Median [interquartile range] over $10^3$ draws of $(D,f_*)$-pairs across a range $\mathrm{NSR}$ values. The private candidate is the best average-BCE-selected setting satisfying $\epsilon<9$. The $1$-path IoU mean, SD reported for a sample of $50$ posterior draws per each $(D,f_*)$-pair.}
\label{tab:noise_transition_2d}
\end{table}

\begin{figure*}[t]
  \centering
 \includegraphics[width=.9\textwidth]{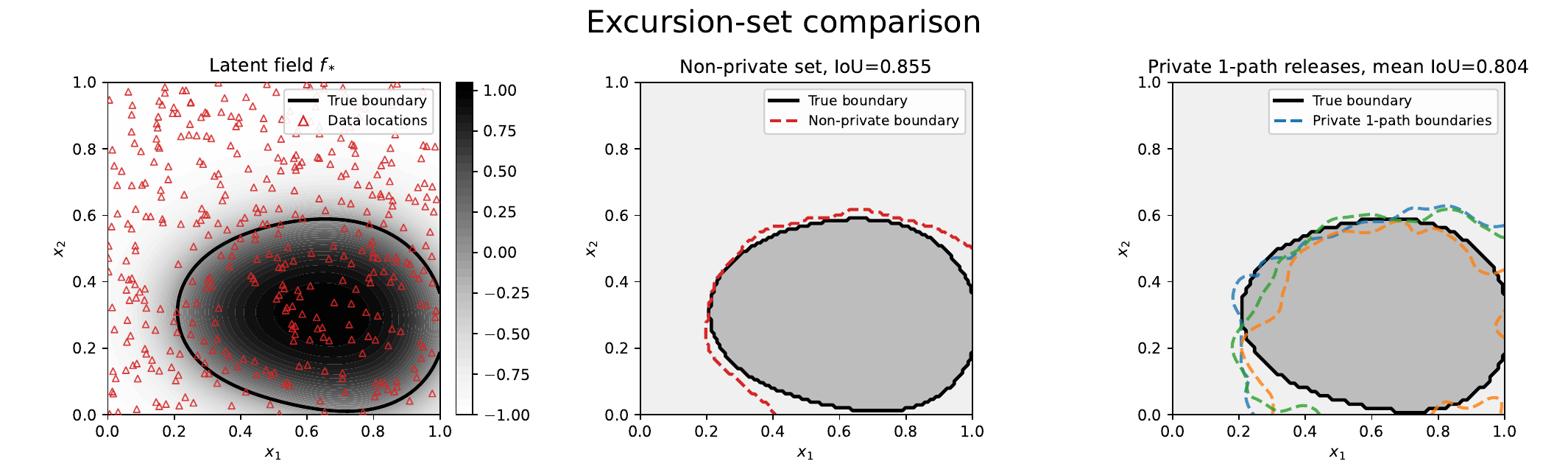}
 \includegraphics[width=.9\textwidth]{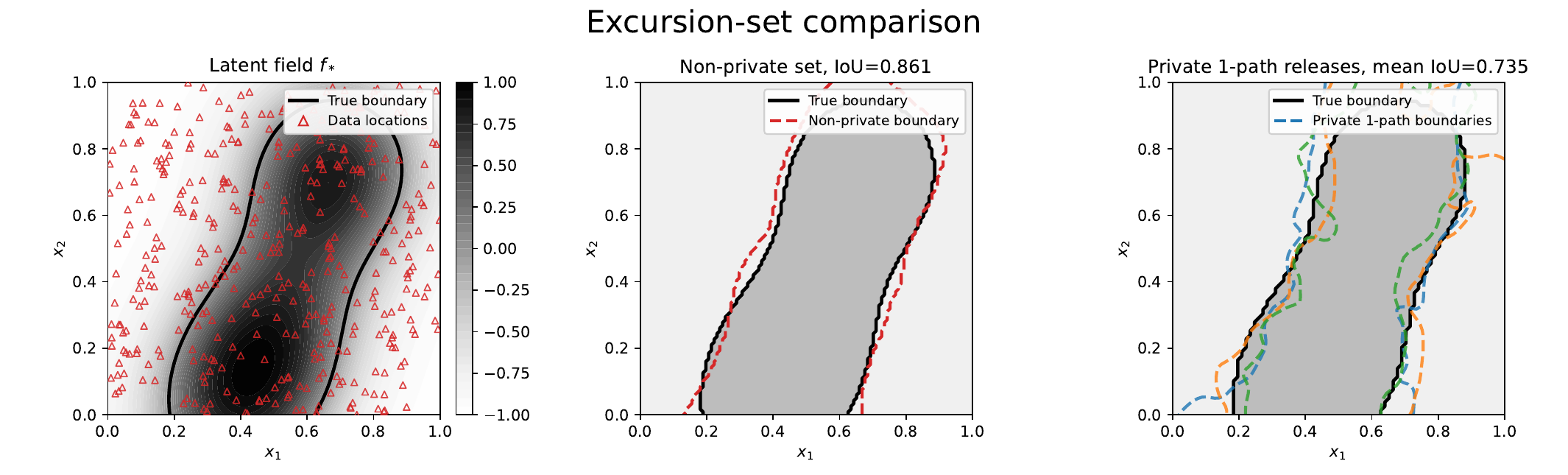}
  \caption{Representative examples of the 2D excursion-set experiment at $\rm NSR=1$. Each row uses one independently sampled latent field $f_*$ and dataset $D$. {\bf Left}: latent field $f_*$ and training data locations with the black curve indicating the true excursion boundary. {\bf Middle}: non-private posterior-probability excursion estimate $\{x:p_D(x)\ge C\}$ overlaid on the true excursion set shown in grey. {\bf Right}: private one-path excursion releases obtained by drawing posterior sample paths, applying a smoothing post-processing step, and thresholding at the natural excursion level. The private $(\varepsilon_L<9,\, \delta=|D|^{-1.1},\, L=1)$ one-path boundaries showcase the DP randomisation and associated IoU tradeoff while still tracking the true excursion boundary well. Dataset size $|D|=400$, $\delta=0.001$.
}
  \label{fig:2d_excursion_NSR1}
\end{figure*}

\begin{figure*}[t]
  \centering
 \includegraphics[width=.9\textwidth]{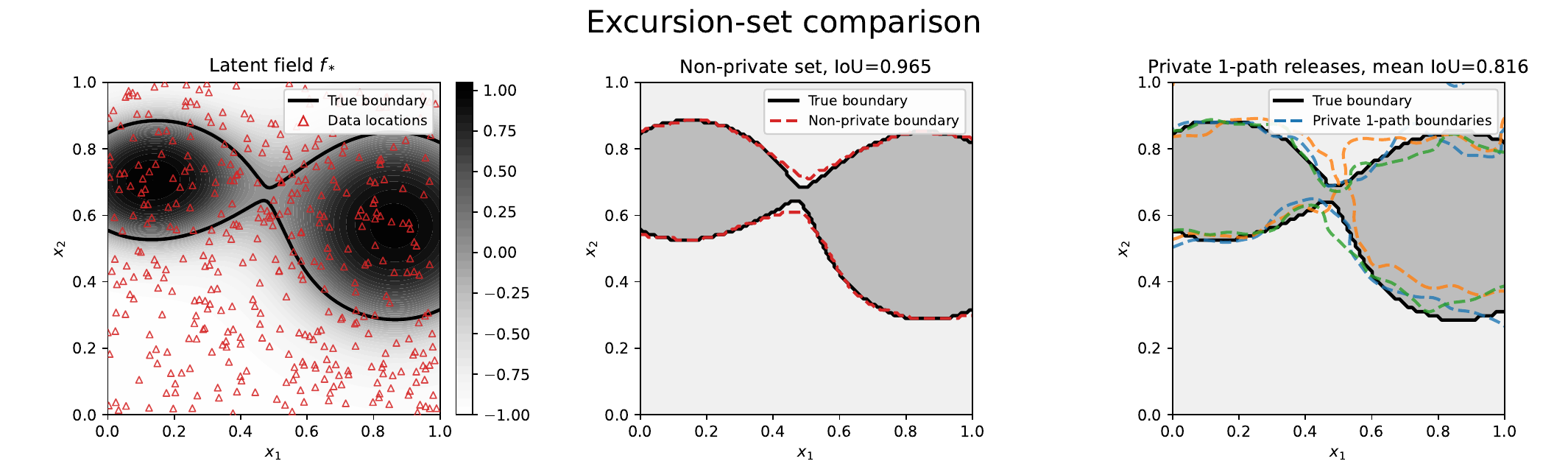}
  \caption{A representative example of the 2D excursion-set experiment at $\rm NSR=0.11$. {\bf Left}: latent field $f_*$ and training data locations with the black curve indicating the true excursion boundary. {\bf Middle}: non-private posterior-probability excursion estimate $\{x:p_D(x)\ge C\}$ overlaid on the true excursion set shown in grey. {\bf Right}: private one-path excursion releases obtained by drawing posterior sample paths, applying a smoothing post-processing step, and thresholding at the natural excursion level. The private $(\varepsilon_L<9,\, \delta=|D|^{-1.1},\, L=1)$ one-path boundaries showcase the DP randomisation and associated IoU tradeoff while still tracking the true excursion boundary well. Dataset size $|D|=400$, $\delta=0.001$.
}
  \label{fig:2d_excursion_lowNSR}
\end{figure*}

\subsection{Greater London property sales study}\label{app:london}
We also evaluate the method on a real spatial dataset derived from the HM Land
Registry Price Paid Data. We restrict attention to Greater London leasehold-flat
transactions in 2018 and join transaction postcodes to a coarse grid. Transactions are aggregated
onto a hexagonal lattice of Greater London and each non-empty hexagon is
assigned the median log sale price of the transactions falling in that cell. For
a fixed threshold $t$, we define the centred response
\[
    y_h^{\rm raw}
    =
    \operatorname{median}_{i\in h}\log P_i - t,
\]
and the corresponding excursion label
\[
    s_h
    =
    \mathbf 1\{y_h^{\rm raw}>0\}.
\]
The GP is fitted to the symmetrically clipped and rescaled response
\[
    y_h
    =
    \frac{
    \operatorname{clip}\left(y_h^{\rm raw},-B,B\right)
    }{B}
    \in[-1,1],
\]
so that the response bound entering the privacy calculation is $M_Y=1$. This
transformation preserves the zero excursion threshold.

We use the same exponential-kernel posterior model as in the synthetic
experiments, with hexagon-centre coordinates normalised to $[0,1]^2$. For the
non-private baseline hyperparameters $(\ell,r,\sigma)$ are selected by
validation binary cross-entropy (using the observed clipped labels $y_h$ derived from the aggregated responses from separate transaction data from 2017), and the probability cutoff $C$ defining the
excursion estimate
\[
    \widehat\Omega_C
    =
    \{x:p_D(x)\ge C\}
\]
is selected by validation IoU. Using marginal negative log-likelihood instead of BCE for hyperparameter selection gave qualitatively similar excursion boundaries. For the private version, the same selection
procedure is constrained to hyperparameter settings satisfying the
R\'enyi-DP-based privacy bound at the prescribed privacy level. The resulting
private visualisations are produced by drawing sample paths from the
DP-constrained posterior and thresholding them at zero.

The resulting privacy guarantee should be interpreted at the level of the
coarsened hexagon records used by the GP mechanism. Exact transaction locations
are discarded before fitting: postcode-level coordinates are assigned to the coarse hexagon cells and the mechanism operates only on the resulting
hexagon-level median responses. In Figure~\ref{fig:london} the non-private
probability boundary $p_D(x)=C$ is compared with three private one-path
excursion boundaries. The private draws recover the dominant high-price region
in central London, but tend to smooth out or miss smaller high-price islands
outside central London, reflecting the additional randomisation and
regularisation induced by the privacy constraint.

\subsection{Diminishing utility tradeoff with the increase of training set size}
\label{app:privacyForFree}

Table~\ref{tab:n_transition} shows that the cost of the privacy constraint 
decreases with sample size in the synthetic-data 1D excursion set experiment. When $n=|D|$
is small, each record has high influence on predictions and enforcing single-digit privacy
requires strong regularisation. At $|D|=50$ imposing $\varepsilon<9$ reduces
$d_{\rm eff}$ from $5.72$ to $1.53$ and yields a relative one-path IoU gap of
$15.1\%$. As $|D|$ grows, the privacy-feasible posterior remains substantially
regularised but loses less downstream utility: the relative BCE increase drops
from $22.9\%$ at $|D|=50$ to $3.3\%$ at $|D|=400$, while the relative IoU gap
drops from $15.1\%$ to $4.2\%$. The randomized excursino set release also becomes more stable:
the one-path IoU standard deviation decreases from $0.082$ to $0.038$ over the
same range. In particular, at $|D|=400$ the privacy-feasible posterior reduces
$d_{\rm eff}$ from $11.1$ to $8.94$, yet the one-path IoU remains high at
$0.88$.
\begin{table}[t]
\centering
\setlength{\tabcolsep}{4pt}
\begin{tabular}{lcccc}
\toprule
 $|D|$ & $50$ & $100$ & $200$ & $400$ \\
\midrule
$\varepsilon$, unconstrained  & $44.7$ & $36.0$ & $32.0$ & $13.6$ \\
$d_{\rm eff}$, unconstrained  & $5.72\,[5.67,5.79]$ & $8.39\,[8.32,8.44]$ & $12.4\,[12.3,12.4]$ & $11.1\,[11.0,11.1]$ \\
$d_{\rm eff}$, $\varepsilon<9$  & $1.53\,[1.52,1.54]$ & $2.18\,[2.17,2.19]$ & $4.12\,[4.11,4.13]$ & $8.94\,[8.93,8.97]$ \\
relative BCE increase  & $22.9\%\,[5.7\%,42.0\%]$ & $12.0\%\,[0.5\%,25.8\%]$ & $5.0\%\,[0.2\%,11.8\%]$ & $3.3\%\,[0.0\%,10.5\%]$ \\
IoU, unconstrained  & $0.845\,[0.753,0.915]$ & $0.871\,[0.812,0.920]$ & $0.897\,[0.843,0.931]$ & $0.926\,[0.877,0.956]$ \\
$1$-path IoU mean, $\varepsilon<9$  & $0.716\,[0.612,0.790]$ & $0.797\,[0.720,0.855]$ & $0.837\,[0.774,0.885]$ & $0.880\,[0.818,0.921]$ \\
$1$-path IoU SD, $\varepsilon<9$ & $0.082\,[0.055,0.109]$ & $0.056\,[0.040,0.080]$ & $0.051\,[0.039,0.069]$ & $0.038\,[0.027,0.049]$ \\
relative IoU  gap  & $15.1\%\,[7.6\%,23.0\%]$ & $8.2\%\,[4.6\%,14.7\%]$ & $5.8\%\,[3.2\%,9.2\%]$ & $4.2\%\,[2.8\%,8.0\%]$ \\
\bottomrule
\end{tabular}
\caption{The effect of varying the training set size on the released 1D excursion set utility at fixed $NSR=1$. Median [interquartile range] over $10^3$ draws of $(D,f_*)$-pairs
across a range of training dataset sizes. The private candidate is the best
average-BCE-selected setting satisfying $\varepsilon<9$ and $\delta=10^{-3}$. The relative IoU gap
is computed with respect to the unconstrained IoU. The $1$-path IoU mean, SD reported for a sample of $50$ posterior draws per each $(D,f_*)$-pair.}
\label{tab:n_transition}
\end{table}

\paragraph{Privacy for free from posterior risk consistency}

The key to understanding why the utility cost of imposing higher regularisation decreases with $n$ is to realise that the GP posterior risk stays consistent even when the effective regularisation grows with $n$, i.e., $r_n\propto n^\gamma$ with appropriate $\gamma>0$. Let $D_n$ be a training set of size $n$ where the covariates are i.i.d. from $P_X$ and let $\Pi_n(\cdot\mid D_n)$ be the posterior distribution of the sampled functions given $D_n$. Denote by $f_*$ the regression function. The posterior risk is defined as 
\[
R_n(D_n):=\EE_{g\sim \Pi_n(\cdot\mid D_n)}\left[\|g-f_*\|^2_{L^2(P_X)}\mid D_n\right]=\int d\Pi_n(g\mid D_n)\int dP_X(\bx)\left(g(\bx)-f_*(\bx)\right)^2.
\]
We define GP \textit{posterior-risk consistency} as the convergence in probability $R_n(D_n)\xrightarrow{n\to \infty} 0$ with respect to a family of data-prior probability measures $P_{XY}$ from which $D_n\sim P_{XY}^{\otimes n}$. GP posterior risk consistency has been proved in the matched-Gaussian case where the responses are generated by adding i.i.d. gaussian noise to $f_*$ which is sampled from a GP prior \citep{JMLR:v12:vandervaart11a}. However, in the literature we have not found a suitable GP posterior risk consistency result stated directly in this paper's exact setup i.e. for bounded-response models and regularisation $r\equiv r_n$ growing with $n$. Fortunately, the desired result is readily available as a direct combination of Theorem 12 from \cite{10.3150/07-BEJ5102} and Theorem 2 from \cite{LeGratiet2015} which we explain below. We work under the following assumptions.
\begin{enumerate}[({A.}1)]
\item The covariate domain $\Omega_X\subset \RR^d$ is compact.
\item The GP kernel $k:\ \RR^d\times\RR^d\to \RR$ is universal (see Definition 11 in \cite{10.3150/07-BEJ5102}), continuous as a function on $\RR^d\times\RR^d$, has bounded diagonal and by this paper's convention it is normalised so that $\sup_{\bx\in\RR^d}k(x,x)=1$.
\item The GP kernel $k$ is non-degenerate, i.e., the Mercer eigenvalues of the integral operator $\hat T_k(f)(\bx):=\int dP_X(\bx)k(\bx,\bx')f(\bx')$ on $L^2(P_X)$ are nonzero.
\item The covariates are i.i.d. from $P_X$ and $y_i=f_*(\bx_i)+\xi_i$ where $f_*\in L^2(P_X)$, $\EE[\xi_i|\bx_i]=0$ and $|\xi_i|\leq M_\xi$.
\end{enumerate}

\begin{proposition}\label{prop:risk_consistency}
    Assume (A.1)-(A.4) and that the effective regularisation grows as $r_n\propto n^\gamma$ with $3/8<\gamma<1/2$. Then, the GP posterior risk is consistent, i.e., $R_n(D_n)\xrightarrow{n\to \infty} 0$ in probability with respect to the training data distribution $P_{XY}^{\otimes n}$.
\end{proposition}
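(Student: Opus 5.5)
The plan is to split the posterior risk into a bias term and a variance term and to control each with one of the two cited results. Since $g\sim\mathcal{GP}(\mu_{D_n},\sigma^2 k_{D_n})$ and $\sigma$ is held fixed, Fubini and the identity $\EE[(g(\bx)-f_*(\bx))^2]=(\mu_{D_n}(\bx)-f_*(\bx))^2+\sigma^2 k_{D_n}(\bx,\bx)$ give
\[
R_n(D_n)=\|\mu_{D_n}-f_*\|_{L^2(P_X)}^2+\sigma^2\int k_{D_n}(\bx,\bx)\,dP_X(\bx).
\]
It then suffices to show that each term tends to zero in probability. The variance term is nonnegative, so for it convergence in expectation followed by Markov's inequality is enough.

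\emph{Bias term.} I would first rewrite the posterior mean as a regularised empirical risk minimiser. By the representer theorem, $\mu_{D_n}=\arg\min_{f\in\mathcal H_k}\frac1n\sum_i(y_i-f(\bx_i))^2+\lambda_n\|f\|_{\mathcal H_k}^2$ with $\lambda_n:=r_n^2/n\propto n^{2\gamma-1}$. This is exactly the least-squares regularised kernel method treated by Theorem 12 of \cite{10.3150/07-BEJ5102}. The hypotheses of that theorem are met as follows. Assumptions (A.1)--(A.2) give a compact domain and a bounded, continuous, universal kernel. Assumption (A.4) gives bounded noise, hence finite moments, so the least-squares loss is admissible. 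The theorem then yields $L^2(P_X)$-consistency of $\mu_{D_n}$ towards the regression function $\EE[Y\mid X]=f_*$, provided $\lambda_n\to0$ and a rate condition of the form $\lambda_n^{p}n\to\infty$ holds. Here $\lambda_n\to0$ is precisely the requirement $\gamma<1/2$. I expect the rate condition for the least-squares loss to read $\lambda_n^4 n\to\infty$, which translates into $n^{8\gamma-3}\to\infty$, i.e.\ $\gamma>3/8$. This would explain the stated window.

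\emph{Variance term.} Here I would invoke Theorem 2 of \cite{LeGratiet2015}. It bounds the expected integrated posterior variance of a GP with observation noise variance $r^2$ (in units of the kernel scale) and i.i.d.\ design from $P_X$ via the Mercer eigenvalues $\lambda_p$ of $\hat T_k$, essentially as $\EE\int k_{D_n}(\bx,\bx)\,dP_X\lesssim\sum_p \lambda_p\, r_n^2/(r_n^2+n\lambda_p)$. The trace is finite by (A.2) because the diagonal is bounded, and every $\lambda_p>0$ by (A.3). Since $r_n^2/n\to0$, each summand tends to $0$ and is dominated by $\lambda_p$, so dominated convergence gives the limit $0$. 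The kernel scale $\sigma^2$ only multiplies this term.

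The main obstacle is the bookkeeping rather than any new estimate. One has to check that the normalisation of the regularisation parameter and the exact rate condition in Christmann--Steinwart's Theorem 12 translate into $3/8<\gamma<1/2$ for $r_n\propto n^\gamma$. One also has to confirm that Le Gratiet's Theorem 2, which is stated for fixed noise variance, applies uniformly when $r_n$ grows. For the latter I would try to go through its proof: the bound there is monotone in the noise level and depends on it only through $r^2/n$, which tends to zero under our schedule. Once both terms are controlled, combining the two convergences in probability yields $R_n(D_n)\to0$.
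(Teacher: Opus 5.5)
Your decomposition of $R_n(D_n)$ and your treatment of the bias term match the paper. The identification $\lambda_n=r_n^2/n$ and the conditions $\lambda_n\to0$, $n\lambda_n^4\to\infty$ of Theorem~12 of \cite{10.3150/07-BEJ5102} give exactly $3/8<\gamma<1/2$.

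The gap is in the variance term. Theorem~2 of \cite{LeGratiet2015}, as the paper uses it, is an asymptotic statement in the scaling where the noise variance is $r^2=n\eta$ with $\eta$ \emph{fixed}; it is not a fixed noise variance. Writing $I_n(\eta):=\int k_{D_n,\eta}(\bx,\bx)\,dP_X(\bx)$, it gives $I_n(\eta)\to S(\eta):=\sum_j\eta\lambda_j/(\eta+\lambda_j)$ in probability as $n\to\infty$. It does not give an inequality $\EE I_n\le C\sum_p\lambda_p r_n^2/(r_n^2+n\lambda_p)$ valid at each $n$ with a moving noise level, and no such inequality holds in general. In the Mercer basis, $\EE I_n=\EE\,\mathrm{tr}(\Lambda^{-1}+r^{-2}\Phi^\top\Phi)^{-1}$. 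Jensen's inequality (convexity of $A\mapsto\mathrm{tr}A^{-1}$ together with $\EE[\Phi^\top\Phi]=nI$) then shows that your sum is a \emph{lower} bound. Moreover, for fixed $n$ and $r\to0$ the left side tends to the noiseless posterior variance, which is at least $\sum_{p>n}\lambda_p>0$ for an infinite-rank kernel, while your sum tends to $0$. So dominated convergence has nothing valid to act on, and extracting uniformity from Le Gratiet's proof is neither necessary nor clearly possible. Comparing against a \emph{fixed} noise variance would also point the wrong way, since $r_n\to\infty$.

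The missing idea is a comparison at fixed design. For $\eta'\le\eta$ one has $(K(X_n,X_n)+n\eta' I)^{-1}\succeq(K(X_n,X_n)+n\eta I)^{-1}$. Hence $k_{D_n,\eta'}(\bx,\bx)\le k_{D_n,\eta}(\bx,\bx)$ pointwise and $I_n(\eta')\le I_n(\eta)$. Given $\epsilon>0$, pick a fixed $\eta$ with $S(\eta)<\epsilon/2$. This is possible because each summand of $S$ vanishes as $\eta\to0$ and is dominated by $\lambda_j$, whose sum is finite. Since $\eta_n=r_n^2/n\to0$, you have $\eta_n<\eta$ for large $n$, and therefore
\[
P_X^{\otimes n}\bigl[I_n(\eta_n)>\epsilon\bigr]\le P_X^{\otimes n}\bigl[I_n(\eta)>\epsilon\bigr]\le P_X^{\otimes n}\bigl[|I_n(\eta)-S(\eta)|>\epsilon/2\bigr]\to0
\]
by the fixed-$\eta$ theorem. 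Because $0\le I_n\le1$, you can equivalently phrase this in expectation and finish with Markov as you intended. This sandwich is exactly how the paper handles the variance term; with it in place your proof goes through.
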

\begin{proof}
    Let $D_n=(X_n,\by_n)$ with $X_n=(\bx_1,\dots,\bx_n)$ and $\by_n=(y_1,\dots,y_n)$. Let us first evaluate the posterior integral
    \[
    \int d\Pi_n(g\mid D_n)\left(g(\bx)-f_*(\bx)\right)^2=(f_*(\bx)-\mu_{D_n}(\bx))^2+\sigma^2\,k_{D_n}(\bx,\bx),
    \]
    where we have used the facts that the mean and variance of $g(\bx)$ are $\mu_{D_n}(\bx)$ and $\sigma^2\,k_{D_n}(\bx,\bx)$ respectively. Consequently,
    \[
    R_n(D_n)=\left\|f_*-\mu_{D_n}\right\|^2_{L^2(P_X)}+\sigma^2\int dP_X(x)k_{D_n}(\bx,\bx).
    \]
    Theorem 12 from \cite{10.3150/07-BEJ5102} deals with the term $\left\|f_*-\mu_{D_n}\right\|^2_{L^2(P_X)}$. As stated originally, the theorem concerns kernel ridge regression (KRR). By the standard equivalence between GP posterior mean and and KRR solution (see e.g. Proposition 3.6 in \cite{kanagawa2018gaussianprocesseskernelmethods}) we have that the function $\mu_{D_n}$ minimises the following KRR loss defined on the RKHS $\mathcal{H}_k$
    \[
    L(f) = \frac{1}{n}\sum_{i=1}^n\left(f(\bx_i)-y_i\right)^2+\eta_n\|f\|_{\mathcal{H}_k}^2,
    \]
    where the correspondence between KRR's $\lambda_n$ (which we denote here by $\eta_n$) and this paper's $r_n$ is 
    \[
    \eta_n = \frac{r_n^2}{n}.
    \]
    Under assumptions (A.1), (A.2) and (A.4), Theorem 12 in \cite{10.3150/07-BEJ5102} states that $\left\|f_*-\mu_{D_n}\right\|^2_{L^2(P_X)}\xrightarrow{n\to\infty} 0$ in probability if $\eta_n\to 0$ and $n\eta_n^4\to \infty$ which if $r_n\propto n^\gamma$ requires that $3/8<\gamma<1/2$. Indeed, the theorem's $p$ is equal to $2$ since the relevant loss is quadratic. What is more, in the theorem's notation
    \begin{align*}
    R_{L,P}=\EE_{(\bx,y)\sim P_{XY}}\left[\left(\mu_{D_n}(\bx)-y\right)^2\right]=\int dP_X(\bx)\left(\mu_{D_n}(\bx)^2-2\mu_{D_n}(\bx)\EE_\xi\left[y\mid X=\bx\right]+\EE_\xi\left[y^2\mid X=\bx\right]\right)
    \\
    =\EE_\xi\left[\xi^2\right]+\int dP_X(\bx)(\mu_{D_n}(\bx)-f_*(\bx))^2=R^*_{L,P}+\left\|f_*-\mu_{D_n}\right\|^2_{L^2(P_X)},
    \end{align*}
    where $R^*_{L,P}=\EE_\xi\left[\xi^2\right]$ is the optimal risk. Theorem 12 in \cite{10.3150/07-BEJ5102} states that $R_{L,P}\to R^*_{L,P}$ in probability which is equivalent to convergence of $\left\|f_*-\mu_{D_n}\right\|^2_{L^2(P_X)}$ to zero in probability. Note also that the theorem's moment condition denoted by $|P|_p<\infty$ (the $p$th moment of the response is required to be finite) is automatically satisfied by the boundedness of the noise and the fact that $f_*\in L^2(P_X)$.

    It remains to show how Theorem 2 of \cite{LeGratiet2015} implies $\int dP_X(x)k_{D_n}(\bx,\bx)\to 0$ in probability with respect to the distribution $P_X^{\otimes n}$. To this end, let us denote
    \[
    k_{D_n,\eta}(\bx,\bx')=k(\bx,\bx')-k(\bx,X_n)\left(K(X_n,X_n)+n\eta I\right)^{-1}k(X_n,\bx').
    \]
    Theorem 2 of \cite{LeGratiet2015} states that if $\eta$ does not depend on $n$ ($r_n=\sqrt{\eta\,n}$) then under (A.3)
    \begin{equation}\label{eq:int_variance_limit}
    I_n(\eta):=\int dP_X(x)k_{D_n,\eta}(\bx,\bx)\xrightarrow{n\to\infty}S(\eta):=\sum_{j\geq 0}\frac{\eta\lambda_j}{\eta+\lambda_j}
    \end{equation}
    in probability with respect to $P_X$ where $\lambda_j$ is $j$th Mercer eigenvalue of the operator $\hat T_k$ from (A.3). The goal is to show that by allowing $\eta_n$ to change with $n$ so that $\eta_n\to 0$ (meaning that $r_n\propto n^\gamma$ with $\gamma<1/2$), then \eqref{eq:int_variance_limit} implies that $I_n(\eta_n)\xrightarrow{P_X} 0$. To see this, note first that $\frac{\eta\lambda_j}{\eta+\lambda_j}\leq \lambda_j$, thus $S(\eta_n)\le\mathrm{tr}\hat T_k<\infty$. This allows us to use Tannery's theorem to interchange limit with summation as follows:
    \begin{equation}\label{eq:S_convergence}
    \lim_{\tau\to0}S(\eta)=\sum_{j\geq 0}\lim_{\tau\to0}\frac{\eta\lambda_j}{\eta+\lambda_j}=0.
    \end{equation}
    What is more, if $\eta'\le\eta$ we have $I_n(\eta')\le I_n(\eta)$. Indeed, $\left(K(X_n,X_n)+n\eta' I\right)^{-1}\succeq \left(K(X_n,X_n)+n\eta I\right)^{-1}$ which implies $k_{D_n,\eta'}(\bx,\bx)\le k_{D_n,\eta}(\bx,\bx)$.
    
    Now fix $\epsilon>0$. Choose $\eta>0$ such that $S(\eta)<\epsilon/2$ by~\eqref{eq:S_convergence}. There exists $n_0$ such that for every $n>n_0$ $\eta_n<\eta$. By the monotonicity of $I_n(\eta)$ w.r.t. $\eta$ we have for every $n>n_0$
    \[
    P_X^{\otimes n}\left[I_n(\eta_n)>\epsilon\right]\leq P_X^{\otimes n}\left[I_n(\eta)>\epsilon\right]\leq P_X^{\otimes n}\left[\left|I_n(\eta)-S(\eta)\right|>\frac{\epsilon}{2}\right]\xrightarrow{n\to\infty}0
    \]
    by \eqref{eq:int_variance_limit} where in the second inequality we have used the fact that $I_n(\eta)>\epsilon$ implies $I_n(\eta)-S(\eta)>\epsilon/2$.
    \end{proof}
    Intuitively, Proposition \ref{prop:risk_consistency} says that the posterior sample paths approximate $f_*$ well as $n\to\infty$ and $r_n\propto n^\gamma$ with $3/8<\gamma<1/2$. Crucially, this growth rate of $r_n$ implies that privacy $\epsilon\to 0$ as $n\to \infty$ even in the generic bounded-repsonse model, thus asymptotically enabling privacy for free.
    \begin{lemma}[Privacy for free asymptotically]
    Let the effective GP regularisation grow as $r_n\propto n^\gamma$ with $3/8<\gamma<1/2$ and assume (A.4) with $\sup_{\bx\in\Omega_X}f_*(\bx)\leq B<\infty$. Then, releasing a single posterior sample path is $(\epsilon(n), \delta(n))$-DP where $\epsilon(n)\xrightarrow{n\to\infty}0$ for any $\delta(n)$ decreasing polynomially with $n$.
    \end{lemma}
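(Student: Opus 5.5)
Since $f_*$ is bounded by $B$ and the noise satisfies $|\xi_i|\le M_\xi$, the responses obey $|y_i|\le M_Y:=B+M_\xi$. The plan is to combine Theorem~\ref{thm:main_rdp} with the generic bounded-response sensitivity bound of Lemma~\ref{lemma:boundedResp_Delta_bound}, show that the RDP bound vanishes as $n\to\infty$ for a suitable order $\alpha_n\to\infty$, and then convert to approximate DP with \eqref{eq:rdp_composition} at $L=1$. The scalings we need are:
\begin{itemize}
\item $V_n(r_n)\le 1$, so $\tau_n:=V_n(r_n)/r_n^2\le r_n^{-2}\asymp n^{-2\gamma}$;
\item by Lemma~\ref{lemma:boundedResp_Delta_bound}, $\Delta_n(r_n)\le \frac{M_Y}{r_n}\bigl(1+\frac{\sqrt{n-1}}{2r_n}\bigr)\asymp n^{1/2-2\gamma}$, which tends to zero precisely because $\gamma>3/8>1/4$.
\end{itemize}
Hence $(\Delta_n/\sigma)^2=\mathcal O(n^{1-4\gamma})$, where $\sigma$ is held fixed.

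Next we choose the order $\alpha_n$. Let $\alpha_n\to\infty$ slowly, for instance $\alpha_n=n^{\beta}$ with $0<\beta$ small; the admissible range is pinned down below. Admissibility requires $\alpha_n<1+r_n^2/V_n$, which holds since $r_n^2\asymp n^{2\gamma}$. We also need $(\alpha_n-1)\tau_n\to0$, which holds when $\beta<2\gamma$. We then bound the two terms of \eqref{eq:rdp_bound}.
\begin{itemize}
\item \emph{Covariance term.} Since $\alpha(\alpha-1)\tau_n^2\to0$, we have $-\log(1-u)\le 2u$ for small $u$, so the term is at most $\alpha_n\tau_n^2=\mathcal O(n^{\beta-4\gamma})$.
\item \emph{Mean term.} It is $\le \alpha_n(1+o(1))\,\mathcal O(n^{1-4\gamma})=\mathcal O(n^{\beta+1-4\gamma})$.
\end{itemize}
Both terms tend to zero provided $\beta<4\gamma-1$; this interval is nonempty because $\gamma>3/8$ gives $4\gamma-1>1/2$. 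So $\varepsilon_{\alpha_n}\to0$.

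For the conversion, set $\delta(n)=n^{-c}$ with $c>0$. The DP parameter is
\[
\varepsilon(n)=\varepsilon_{\alpha_n}+\log\frac{\alpha_n-1}{\alpha_n}-\frac{\log\delta(n)+\log\alpha_n}{\alpha_n-1}.
\]
The middle term is at most $0$. The last term equals $\frac{c\log n-\beta\log n}{n^\beta-1}\to0$ because $\alpha_n$ grows polynomially. Therefore $\varepsilon(n)\to0$. The step requiring the most care is this balancing: $\alpha_n$ must grow fast enough that the $\log(1/\delta)/(\alpha_n-1)$ term vanishes for polynomial $\delta$, yet slowly enough that the mean term $\alpha_n\Delta_n^2/\sigma^2$ still vanishes. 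A polynomial $\alpha_n$ with exponent in $(0,\min\{2\gamma,4\gamma-1\})$ achieves both.

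Finally, the guarantee holds uniformly over finite test sets by Theorem~\ref{thm:main_rdp}, so it extends to the full sample path by \citep[Proposition 6]{hall13}.
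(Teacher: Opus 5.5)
Your proposal is correct and follows essentially the same route as the paper: insert a bounded-response sensitivity bound from Lemma~\ref{lemma:boundedResp_Delta_bound} into Theorem~\ref{thm:main_rdp}, let the R\'enyi order grow polynomially in $n$, and convert to $(\varepsilon,\delta)$-DP. The differences are only in the bookkeeping: the paper uses the sharper bound $\hat\Delta_n(r)\le M_Y(2/r^2+\sqrt n/r^3)$ from \eqref{eq:delta_gen2}, which lets it take $\alpha_n=\sqrt n$ and obtain the explicit rate $\varepsilon(n)\le c\,n^{3/2-6\gamma}+\log(1/\delta(n))/(\sqrt n-1)$, whereas your cruder $\mathcal O(n^{1/2-2\gamma})$ bound forces a slower order $\alpha_n=n^\beta$ with $\beta<4\gamma-1$ and hence a slower (but still vanishing) rate; note that either way only $\gamma>1/4$ is actually used for the privacy part.
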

    \begin{proof}
    Let us first simplify the RDP bound from Theorem~\ref{thm:main_rdp} on the half-range $1\leq \alpha\le 1+\frac{1}{2\tau_n(r)}$. Using $-\log(1-u)\leq u/(1-u)$ we get
    \[
    -\frac{\log\left(
1-\frac{\alpha(\alpha-1)\tau_n(r)^2}{1+\tau_n(r)}
\right)}{2(\alpha-1)}\leq \frac{\alpha \tau_n(r)^2}{2(1-(\alpha-1)\tau_n(r))(1+\alpha\tau_n(r))}\leq \frac{\alpha \tau_n(r)^2}{1+\tau_n(r)},
    \]
    where in the last inequality we have used $1\leq \alpha\le 1+\frac{1}{2\tau_n(r)}$. Applying similar reasoning to the mean-term of the RDP bound we obtain
    \begin{equation}\label{eq:rdp_simple}
    \varepsilon_\alpha\leq \alpha\, a_n(r_n),\quad a_n(r) = (1+\tau_n(r))\left(\tau_n(r)^2+\left(\frac{\Delta_n(r)}{\sigma}\right)^2\right).
    \end{equation}
    
    Next, bound the sensitivity $\hat\Delta_n(r)$ from Lemma~\ref{lemma:boundedResp_Delta_bound} as follows.
    \[
    \Delta_n(r)\leq \hat\Delta_n(r)=\frac{M_Y\sqrt{n-1}}{r}\frac{V_n(r)}{r^2+V_n(r)}+2M_Y\max_{0\le u\le V_n(r)}\frac{\sqrt{u}}{r^2+u}\leq M_Y\left(\frac{2}{r^2}+\frac{\sqrt{n}}{r^3}\right),
    \]
    where $M_Y=M_\xi+B$. Using $r_n=C n^\gamma$ with $\gamma<1/2$ we get that for sufficiently large $n$, $\Delta_n(r)\le 2 M_Y\sqrt{n}/r^3$. Plugging this to \eqref{eq:rdp_simple} and using further simplifications we get that for sufficiently large $n$
    \[
    \varepsilon_\alpha\leq \alpha \frac{2}{C^6}
\left(1+\frac{4M_Y^2}{\sigma^2}\right)
n^{1-6\gamma}.
    \]
    Note that under the assumption $\gamma>3/8$ putting $\alpha_n=\sqrt{n}$ is a valid choice of RDP order for sufficiently large $n$. Then, the RDP-to-DP conversion from \cite{rdp} yields
    \begin{align}
\varepsilon_n^{\mathrm{DP}}
&\leq
\varepsilon_{\alpha_n}^{\mathrm{RDP}}
+
\frac{\log(1/\delta(n))}{\alpha_n-1}\
\leq
\frac{2}{C^6}
\left(1+\frac{4M_Y^2}{\sigma^2}\right)
n^{3/2-6\gamma}
+
\frac{\log(1/\delta(n))}{\sqrt n-1}.
\end{align}
    Since $\gamma>3/8$, the first term converges to zero. The second term also converges to zero whenever $\delta(n)$ decreases polynomially with $n$.
\end{proof}

Finally, we show that posterior risk consistency implies that one-path IoU tends to $1$ in probability, therefore asymptotically providing privacy for free in the excursion-set experiments.
\begin{lemma}
    Assume the GP posterior risk is consistent, i.e., $R_n(D_n)\xrightarrow{n\to \infty} 0$ in probability with respect to the training data distribution $P_{XY}^{\otimes n}$. Assume also that $P_X$ has density $\rho_X$ which is bounded from below by $\rho_{\min}>0$, the level set $\{\bx:\ f_*(\bx)=t\}$ is of measure zero and that the true excursion set $S_* = \{\bx:\ f_*(\bx)\geq t\}$ has nonzero volume. Then, posterior-expected one-path IoU of the excursion sets tends to $1$ in probability, i.e.
    \[
    \EE_{g\sim\Pi_n(\cdot\mid D_n)}\left[\mathrm{IoU}(f_*,g)\right]=\EE_{g\sim\Pi_n(\cdot\mid D_n)}\left[\frac{\left|\{\bx:\ f_*(\bx)\geq t\mathrm{\ and\ } g(\bx)\geq t\}\right|}{\left|\{\bx:\ f_*(\bx)\geq t\mathrm{\ or\ } g(\bx)\geq t\}\right|}\right]\xrightarrow[P_{XY}]{n\to\infty }1.
    \]
\end{lemma}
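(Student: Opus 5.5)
We need to prove that the expected IoU tends to 1 in probability. We use the posterior risk $R_n(D_n)=\EE_g\|g-f_*\|^2_{L^2(P_X)}\to0$ and convert $L^2$ closeness into closeness of excursion sets via the symmetric difference.

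\textbf{Step 1: reduce IoU to a symmetric difference.} Write $S_g=\{\bx:g(\bx)\ge t\}$ and $S_*=\{\bx: f_*(\bx)\ge t\}$. Since $|S_*\cup S_g|-|S_*\cap S_g|=|S_*\triangle S_g|$ and $|S_*\cup S_g|\ge |S_*|>0$, we have
\[
1-\mathrm{IoU}(f_*,g)=\frac{|S_*\triangle S_g|}{|S_*\cup S_g|}\le \frac{|S_*\triangle S_g|}{|S_*|}.
\]
Hence it suffices to show that $\EE_g|S_*\triangle S_g|\to0$ in probability. Here $|\cdot|$ is Lebesgue volume on the compact domain $\Omega_X$, so $|\Omega_X|<\infty$ and IoU lies in $[0,1]$.

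\textbf{Step 2: split by a margin.} Fix $\eta>0$. A point $\bx\in S_*\triangle S_g$ satisfies either $|f_*(\bx)-t|\le\eta$, or $|g(\bx)-f_*(\bx)|>\eta$. For the second set, use $\rho_X\ge\rho_{\min}$ and Chebyshev to get
\[
|\{|g-f_*|>\eta\}|\le \rho_{\min}^{-1}P_X(|g-f_*|>\eta)\le \frac{\|g-f_*\|^2_{L^2(P_X)}}{\rho_{\min}\eta^2}.
\]
Taking the posterior expectation gives
\[
\EE_g|S_*\triangle S_g|\le |\{|f_*-t|\le\eta\}|+\frac{R_n(D_n)}{\rho_{\min}\eta^2}.
\]

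\textbf{Step 3: control the margin set.} By continuity of measure from above, $|\{|f_*-t|\le\eta\}|\downarrow|\{f_*=t\}|=0$ as $\eta\downarrow0$. This requires $f_*$ measurable and $|\Omega_X|<\infty$.

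\textbf{Step 4: conclude.} Given $\epsilon>0$, choose $\eta$ so that the margin term is below $\epsilon|S_*|/2$. Then
\[
P\bigl(1-\EE_g\mathrm{IoU}>\epsilon\bigr)\le P\bigl(R_n(D_n)>\rho_{\min}\eta^2\epsilon|S_*|/2\bigr)\to0
\]
by the assumed risk consistency.

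The main obstacle is Step 3. The margin set must shrink, which needs the level-set-measure-zero assumption, and Step 2 needs $\rho_{\min}>0$ to convert $P_X$-mass into Lebesgue volume. Measurability of $g\mapsto|S_g|$ under the cylinder $\sigma$-algebra also needs care. I would handle it with a measurable (e.g. continuous) version of the posterior paths, or equivalently by Fubini applied to the jointly measurable indicator $\bone\{g(\bx)\ge t\}$.
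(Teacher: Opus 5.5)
Your proposal is correct and follows the paper's proof essentially step for step. It uses the same reduction $1-\mathrm{IoU}\le|S_*\triangle S_g|/|S_*|$, the same margin cover by $\{|f_*-t|\le u\}\cup\{|g-f_*|>u\}$ with Markov/Chebyshev and $\rho_{\min}$ converting $P_X$-mass into volume, and the same conclusion via risk consistency plus the null level set. Your explicit $\epsilon$-bookkeeping and the remarks on measurability and on $|\Omega_X|<\infty$ only spell out what the paper leaves implicit; finiteness in fact follows without compactness, since $\rho_X\ge\rho_{\min}$ forces $|\Omega_X|\le 1/\rho_{\min}$.
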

\begin{proof}
    Denote $S_g = \{\bx:\ g(\bx)\geq t\}$. Using the disjoint decomposition 
    \[
    S_g\cup S_*=(S_g\cap S_*)\sqcup \left[(S_g\setminus S_*)\cup (S_*\setminus S_g) \right]
    \]
    we get
    \begin{equation}\label{eq:1minusIoU}
    1-\mathrm{IoU}(S_g,S_*)=\frac{|S_g\cup S_*|-|S_g\cap S_*|}{|S_g\cup S_*|}=\frac{|(S_g\setminus S_*)\cup (S_* \setminus S_g)|}{|S_g\cup S_*|}\leq \frac{|(S_g \setminus S_*)\cup (S_* \setminus S_g)|}{|S_*|}.
    \end{equation}
    Next, we note that for any $u>0$ we have
    \[
    (S_g \setminus S_*)\cup (S_* \setminus S_g)\subset S_u:=\{\bx:\ |f_*(\bx)-t|\leq u\}\cup \{\bx:\ |f_*(\bx)-g(\bx)|> u\}.
    \]
    To see this, fix $u>0$ and assume $\bx\in S_g \setminus S_*$, i.e., $g(\bx)\geq t$ and $f_*(\bx) < t$. Then, we have either $|f_*(\bx)-t|\leq u$ or $|f_*(\bx)-t|> u$. If $|f_*(\bx)-t|\leq u$ then $\bx\in S_u$. If $|f_*(\bx)-t|> u$ then because $f_*(\bx) < t$ we necessarily have $t-f_*(\bx)>u$ and since $g(\bx)\geq t$
    \[
    g(\bx)-f_*(\bx)\geq t-f_*(\bx) > u,
    \]
    i.e., $|g(\bx)-f_*(\bx)|>u$ and so $\bx \in S_u$. Analogous resoning shows that if $\bx\in S_*/S_g$ then also $\bx \in S_u$.

    Since the density $\rho_X(\bx)$ is bounded from below by $\rho_{\min}$ we have for any measurable set $|S|\leq P_X[S]/\rho_{\min}$. By the definition of $S_u$, we have 
    \[
    |S_u|\leq |\{\bx:\ |f_*(\bx)-t|\leq u\}|+|\{\bx:\ |f_*(\bx)-g(\bx)|> u\}|.
    \]
    Furthermore,
    \[
    |\{\bx:\ |f_*(\bx)-g(\bx)|> u\}|\leq \frac{1}{\rho_{\min}}P_X\left[|g(\bx)-f_*(\bx)|>u\right] \leq \frac{1}{u^2 \rho_{\min}}\|g-f_*\|_{L^2(P_X)}^2,
    \]
    where in the second step we have used Markov's inequality $P[Z>u^2]\leq E[Z]/u^2$ for $Z=|g(X)-f_*(X)|^2$. Thus, we get
    \[
    \EE_{g\sim\Pi_n(\cdot\mid D_n)}\left[1-\mathrm{IoU}(S_g,S_*)\right]\leq \frac{1}{|S_*|}\EE_{g\sim\Pi_n(\cdot\mid D_n)}\left[|S_u|\right]\leq\frac{1}{|S_*|}\left(|\{\bx:\ |f_*(\bx)-t|\leq u\}|+\frac{1}{u^2 \rho_{\min}}R_n(D_n)\right).
    \]
    By the posterior risk consistency, the second term tends to zero in probability. The first term can be made arbitrarily small by taking $u$ small enough since we're assuming that the level set $\{\bx:\ f_*(\bx)=t\}$ is of measure zero.
\end{proof}

\end{document}